\theoremstyle{plain}
\newtheorem{theorem}{Theorem}[section]
\newtheorem{lemma}[theorem]{Lemma}
\theoremstyle{definition}
\newtheorem{assumption}[theorem]{Assumption}
\newtheorem{remark}[theorem]{Remark}
\newtheorem{claim}{Claim}
\LetLtxMacro{\originaleqref}{\eqref}
\renewcommand{\eqref}{Eq.\originaleqref}
\DeclareMathOperator*{\argmax}{arg\,max}
\DeclareMathOperator*{\argmin}{arg\,min}
\DeclareMathOperator*{\KL}{KL}
\DeclareMathOperator*{\kl}{kl}
\DeclarePairedDelimiter\abs{\lvert}{\rvert}%
\DeclarePairedDelimiter{\ceil}{\lceil}{\rceil}
\newcommand{\E}{\mathbb{E}}
\newcommand{\1}[1]{\mathbbm{1}{\left\{#1\right\}}}
\renewcommand{\P}{\mathbb{P}}
\newcommand{\upbra}[1]{^{(#1)}}
\renewcommand{\ge}{\geqslant}
\renewcommand{\le}{\leqslant}
\newcommand{\BAIC}{\texttt{BAI}\xspace}
\newcommand{\MFMAB}{\texttt{MF-MAB}\xspace}
\newcommand{\explore}[1]{\textsc{Explore-#1}\xspace}
\newcommand{\compilehidecomments}{false} 
	\newcommand{\xuchuang}[1]{{\color{red}  [\text{Xuchuang:} #1]}}
	\newcommand{\wei}[1]{{\color{blue}  [\text{Wei:} #1]}}
 \newcommand{\qingyun}[1]{{\color{violet}  [\text{Qingyun:} #1]}}
	\newcommand{\xuchuang}[1]{}
	\newcommand{\wei}[1]{}
 \newcommand{\qingyun}[1]{}
\newcommand{\compilefullversion}{true} 
	\newcommand{\OnlyInFull}[1]{}
	\newcommand{\OnlyInShort}[1]{#1}
	\newcommand{\OnlyInFull}[1]{#1}%
	\newcommand{\OnlyInShort}[1]{}%
\title{Multi-Fidelity Multi-Armed Bandits Revisited}
\author{%
  Xuchuang Wang \\
  The Chinese University of Hong Kong\\
  \texttt{xcwang@cse.cuhk.edu.hk} \\
  \And
  Qingyun Wu \\
  Pennsylvania State University \\
  \texttt{qingyun.wu@psu.edu} \\
  \AND
  Wei Chen \\
  Microsoft Research \\
  \texttt{weic@microsoft.com} \\
  \And
  John C.S. Lui \\
  The Chinese University of Hong Kong \\
  \texttt{cslui@cse.cuhk.edu.hk} \\
}
\begin{document}

\maketitle

\begin{abstract}
  We study the multi-fidelity multi-armed bandit (\MFMAB), an extension of the canonical multi-armed bandit (MAB) problem.
  \MFMAB allows each arm to be pulled with different costs (fidelities) and observation accuracy.
  We study both the best arm identification with fixed confidence (\BAIC) and the regret minimization objectives.
  For \BAIC, we present (a) a cost complexity lower bound, (b) an algorithmic framework with two alternative fidelity selection procedures,
  and (c) both procedures' cost complexity upper bounds.
  From both cost complexity bounds of \MFMAB,
  one can recover the standard sample complexity bounds of the classic (single-fidelity) MAB.
  For regret minimization of \MFMAB, we propose a new regret definition, prove its problem-independent regret lower bound \(\Omega(K^{1/3}\Lambda^{2/3})\) and problem-dependent lower bound \(\Omega(K\log \Lambda)\), where \(K\) is the number of arms and \(\Lambda\) is the decision budget in terms of cost, and devise an elimination-based algorithm whose worst-cost regret upper bound matches its corresponding lower bound up to some logarithmic terms and, whose problem-dependent bound matches its corresponding lower bound in terms of \(\Lambda\).
\end{abstract}

\section{Introduction}

The multi-armed bandits (MAB) problem was first introduced in the seminal work by~\citet{lai1985asymptotically}
and was extensively studied (ref.~\citet{bubeck2012regret,lattimore2020bandit}).
In stochastic MAB, the decision maker repeatedly pulls arms among a set of \(K\in\mathbb{N}^+\) arms and observes rewards drawn from unknown distributions of the pulled arms.
The initial objective
of MAB is \emph{regret minimization}~\citep{lai1985asymptotically,auer2010ucb}, where the regret is the cumulative differences between the rewards from pulling the optimal arm and that from the concerned algorithm's arm pulling strategy.
This is a fundamental sequential decision making framework for studying the exploration-exploitation trade-off, where one needs to balance between optimistically exploring arms with high uncertainty in reward (exploration) and myopically exploiting arms with high empirical reward average (exploitation).
Another task of MAB is the {best arm identification} (\BAIC,  a.k.a., pure exploration), later introduced by~\citet{even2002pac,even2006action,mannor2004sample}.
Best arm identification aims to find the best arm \emph{without} considering the cumulative regret (reward) during the learning process.
In this paper, we focus on the fixed confidence case, with the goal of identifying the best arm with a fixed confidence with as few number of decision rounds as possible.
%

In this paper, we investigate a generalized MAB model with a wide range of real-world applications---the \emph{multi-fidelity multi-armed bandit} (\MFMAB)---introduced by~\citet{kandasamy2016multi}.
We study both the regret minimization and best arm identification objectives under the \MFMAB model.
The \MFMAB model introduces flexibility for exploring arms: instead of squarely pulling an arm to observe its reward sample as in MAB,
\MFMAB offers \(M\in\mathbb{N}^+\) different accesses to explore an arm, called \(M\) fidelities.
Pulling an arm \(k\in\mathcal{K}\coloneqq \{1,\dots,K\}\) at fidelity \(m\in\mathcal{M}\coloneqq \{1,\dots,M\}\), the decision maker pays a cost of \(\lambda\upbra{m}\) and observes a reward \(X_{k}\upbra{m}\) whose mean \(\mu_k\upbra{m}\) is not too far away from the arm's true reward mean \(\mu_k\).
More formally, \(\abs{\mu_k\upbra{m} - \mu_k}\le \zeta\upbra{m}\) where \(\zeta\upbra{m} \ge 0\) is the error upper bound at fidelity \(m\).
The formal definition of \MFMAB is presented in Section~\ref{sec:model}.

\subsection{Contributions}
In Section~\ref{sec:bai},
we first look into the best arm identification with fixed confidence (\BAIC) objective in \MFMAB, which has wide applications in hyperparameter optimization~\citep[\S1.4]{hutter2019automated}
and neural architecture search (NAS)~\cite{elsken2019neural}.
In the context of NAS, each arm of \MFMAB corresponds to one configuration of neural network architecture,
and the different fidelities of pulling this arm correspond to training the neural network of this configuration with different sample sizes of training data (see details in Section~\ref{subsec:application_bai}).
As in \MFMAB pulling an arm at different fidelities has different costs, we consider the total cost needed by an algorithm and refer to it as \emph{cost complexity} (a generalization of sample complexity).
We first derive a cost complexity lower bound \(
\Omega ( \sum_{k\in\mathcal{K}}\min_{m:\Delta_k\upbra{m} > 0}({\lambda\upbra{m}}/{(\Delta_k\upbra{m})^{2}}) \log {\delta}^{-1} ),
\)
where the \(\Delta_k\upbra{m}\) is the reward gap of arm \(k\) at fidelity \(m\) (defined in~\eqref{eq:reward-gap}) and \(\delta\) is the confidence parameter.
We then devise a Lower-Upper Confidence Bound (LUCB) algorithmic framework with two different fidelity selection procedures and prove both procedures' cost complexity upper bounds.
The first procedure focuses on finding the optimal fidelity suggested by the minimization in lower bound via an upper confidence bound (UCB) algorithm, which may pay high additional costs during searching the optimal fidelities.
The second procedure, instead of identifying the optimal fidelity, seeks for good fidelities that are at least half as good as the optimal ones so as to reduce the cost for identifying the optimal fidelity while still enjoying fair theoretical performance.
  {Note that despite several closely related works (discussed in Section~\ref{sec:related-works}), the \BAIC task under \MFMAB model has not been studied in any existing work.}

In Section~\ref{sec:regret-minimization}, we next study the regret minimization objective in \MFMAB.
We introduce a novel definition of regret, wherein the rewards obtained are dependent on the pulled arms, while the selected fidelities solely affect the accuracy of the observed rewards.
The new regret definition covers applications like product management and advertisement distribution~\cite{han2020contextual,farris2015marketing} where the distributed advertisement (ad) determines the actual (but unknown) reward while the cost (fidelity) spent by the company on evaluating the impact of this ad decides the accuracy of the observed rewards (see Remark~\ref{rmk:application_regret_minimization}).
The difference of our regret definition and the one studied by~\citet{kandasamy2016multi} is discussed in Remark~\ref{rmk:new-regret-motivation}.
We first propose both a problem-independent (a.k.a., worst-case) regret lower bound \(\Omega(K^{1/3}\Lambda^{2/3})\) and a problem-dependent regret lower bound \(\Omega(K\log \Lambda)\) where the \(\Lambda\, (> 0)\) is the decision budget.
We then devise an elimination-based algorithm, which explores (and eliminates) arms in the highest fidelity \(M\) and exploits the remaining arms in the lowest fidelity \(1\).
The algorithm enjoys a \(\tilde{O}(K^{1/3}\Lambda^{2/3})\) problem-independent regret upper bound, which matches the corresponding lower bound up to some logarithmic factor,
and also a problem-dependent bound matching its corresponding lower bound tightly in a class of \MFMAB instances.

\subsection{Related Works}
\label{sec:related-works}

Multi-fidelity multi-armed bandits (\MFMAB) was first proposed by~\citet{kandasamy2016multi}. They studied a cumulative regret minimization setting whose regret definition is different from ours (see Remark~\ref{rmk:new-regret-motivation} for more details).
Later, \citet{kandasamy2016gaussian} extended \MFMAB to bandits optimization, i.e., in continuous action space, with the objective of minimizing simple regret,
and \citet{kandasamy2017multi} further extended \citet{kandasamy2016gaussian} to the continuous fidelity space with the same objective of minimizing simple regret.
We note that the simple regret minimization in bandits optimization corresponds to the best arm identification with fixed budget in multi-armed bandits, and, therefore, is different from the cumulative regret minimization and best arm identification with fixed confidence objectives studied in this paper.

The multi-fidelity optimization was first introduced in simulating expensive environments via cheap surrogate models~\cite{huang2006sequential,forrester2007multi},
and, later on, was extended to many real-world applications, e.g., shape design optimization for ship~\cite{bonfiglio2018multi} or wing~\cite{zheng2013multi},
and wind farm optimization~\cite{rethore2014topfarm}.
Recently, the multi-fidelity idea was employed in hyperparameter optimization (HPO) for automated machine learning (autoML)~\cite{jamieson2016non,li2017hyperband,falkner2018bohb,li2020system}. The fidelity may correspond to training time, data set subsampling, and feature subsampling, etc.
These multi-fidelity settings help agents to discard some hyperparameter configurations with low cost (a.k.a., early discarding~\citep[\S3.1.3]{mohr2022learning}).
\citet{jamieson2016non} modeled hyperparameter optimization as non-stochastic best arm identification and applied the successive halving algorithm (SHA) to address it.
\citet{li2017hyperband} introduced Hyperband, a hyperparameter configuring and resources allocation method which employed SHA as its subroutine to solve real HPO tasks.
\citet{falkner2018bohb} proposed BOHB, which use Bayesian optimization method to select and update hyperparameters and employ SHA to allocate resources.
\citet{li2020system} extended SHA to asynchronous case for parallel hyperparameter optimization.
This line of works was based on the non-stochastic best arm identification problem~\cite{jamieson2016non} and, therefore, is very different from our stochastic modelling.

\section{Model}\label{sec:model}


We consider a \(K\,(\in\mathbb{N}^+)\)-armed bandit. Each arm can be pulled with \(M\,(\in\mathbb{N}^+)\) different fidelities. When an arm \(k\in \mathcal{K}\coloneqq\{1,\dots,K\}\) is pulled at fidelity \(m\in\mathcal{M}\coloneqq \{1,\dots,M\}\), one pays a cost \(\lambda\upbra{m}\,(>0)\) and observes a feedback value drawn from a \([0,1]\)-supported probability distribution with mean \(\mu_k\upbra{m}\). We assume the non-trivial case that \(\abs{\mu_k\upbra{m}-\mu_k\upbra{M}} \le \zeta\upbra{m}\), where \(\zeta\upbra{m} \ge 0\) is the observation error upper bound at fidelity \(m\).
Without loss of generality, we assume \(\lambda\upbra{1}\le \dots \le \lambda\upbra{M}\); otherwise, we can relabel the fidelity so that the \(\lambda\upbra{m}\) is non-decreasing with respect to \(m\).
We call the reward mean of an arm at the highest fidelity \(M\) as this arm's \emph{true reward mean}
and, without loss of generality, assume that these true reward means are in a descending order \(\mu_1\upbra{M} > \mu_2\upbra{M}\ge \mu_3\upbra{M}\ge \dots\ge \mu_K\upbra{M}\), and arm \(1\) is the unique optimal arm.

For the online learning problem, we assume that the reward distribution and the mean $\mu_k\upbra{m}$ of each arm $k$ at fidelity $m$ are unknown, while
the costs $\lambda\upbra{m}$'s and the error upper bounds $\zeta\upbra{m} $'s are known to the learning agent.
To summarize, a \MFMAB instance \(\mathcal{I}\) is parameterized by a set of tuples \((\mathcal{K}, \mathcal{M},(\lambda\upbra{m})_{m\in\mathcal{M}},(\zeta\upbra{m})_{m\in\mathcal{M}}, (\mu_k\upbra{m})_{(k,m)\in\mathcal{K}\times \mathcal{M}})\), with elements described above.


In this paper, we consider two tasks on the above multi-fidelity bandit model: best arm identification and regret minimization. We will define these two tasks
in the respective sections below.

\section{Best Arm Identification with Fixed Confidence}\label{sec:bai}

In this section, we study the best arm identification with fixed confidence (\BAIC) task in the multi-fidelity multi-armed bandits (\MFMAB) model.
The objective of \BAIC is to minimize the total budget spent for identifying the best arm with a confidence of at least \(1-\delta\).
As the cost of pulling an arm in \MFMAB depends on the chosen fidelity, we use the total cost, instead of total pulling times (sample complexity), as our criterion, and refer to it as \emph{cost complexity}.
We first present a cost complexity lower bound in Section~\ref{subsec:sample-cost-lower-bound}, then propose a LUCB algorithmic framework with two alternative procedures in Section~\ref{subsec:lucb-ucb}, and analyze their cost complexity upper bounds in Section~\ref{subsec:sample-cost-upper-bound}.
Lastly, we introduce concrete applications of the \BAIC problem in Section~\ref{subsec:application_bai}.

\subsection{Cost Complexity Lower Bound}\label{subsec:sample-cost-lower-bound}

We present the cost complexity lower bound in Theorem~\ref{thm:sample-cost-lower-bound}.
Its proof is deferred to Appendix~\ref{supapp:sample-cost-lower-bound}.

\begin{theorem}[Cost complexity lower bound]\label{thm:sample-cost-lower-bound}
  For any algorithm addressing the best arm identification with fixed confidence \(1-\delta\) for any parameter \(\delta>0\),
  any number of arms \(K\), any number of fidelities \(M\) with any observation error upper bound sequence
  \((\zeta\upbra{1},\zeta\upbra{2},\dots,\zeta\upbra{M})\,(\zeta\upbra{M} = 0)\)
  and any cost sequence \((\lambda\upbra{1},\dots,\lambda\upbra{M})\),
  and any \(K\) fidelity subsets \(\{\mathcal{M}_1, \mathcal{M}_2,\dots, \mathcal{M}_K\}\) where the \(\mathcal{M}_k\) is a subset of full fidelity set \(\mathcal{M}\) containing the highest fidelity \(M\), i.e.,
  \(M\in \mathcal{M}_k\in 2^\mathcal{M},\) for all \(k\in\mathcal{K}\),
  there exists a \emph{\MFMAB} instance such that
  \[
    \begin{split}
      \E[\Lambda] \ge \left( \min_{m\in \mathcal{M}_1} \frac{\lambda\upbra{m}}{\KL(\nu_{1}\upbra{m}, \nu_{2}\upbra{M} + \zeta\upbra{m})}  + \sum_{k\neq 1}\min_{m\in \mathcal{M}_k}  \frac{\lambda\upbra{m}}{\KL(\nu_k\upbra{m}, \nu_1\upbra{M}-\zeta\upbra{m})} \right) \log \frac{1}{2.4\delta},
    \end{split}
  \]
  where \(\KL\) is the KL-divergence between two probability distributions,
  \(\nu_k\upbra{m}\) is the probability distribution associated with arm \(k\) when pulled at fidelity \(m\), and \(\nu \pm \zeta\) means to positively/negatively shift the distribution \(\nu\) by an offset \(\zeta\).
\end{theorem}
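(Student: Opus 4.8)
The plan is to run the standard change-of-measure (transportation) argument for fixed-confidence lower bounds, adapted so that the per-pull information is weighted by fidelity cost. Fix a $\delta$-correct algorithm and let $N_k\upbra{m}$ denote the (random) number of times it pulls arm $k$ at fidelity $m$ before stopping, so that $\Lambda=\sum_{k\in\mathcal{K}}\sum_{m\in\mathcal{M}}\lambda\upbra{m}N_k\upbra{m}$ and hence $\E[\Lambda]=\sum_{k,m}\lambda\upbra{m}\,\E[N_k\upbra{m}]$. The engine of the proof is the data-processing / transportation lemma: if $\mathcal{I}$ and an alternative $\mathcal{I}'$ have different optimal arms and differ only in the distributions of some arm, then any $\delta$-correct algorithm satisfies $\sum_{k,m}\E_{\mathcal{I}}[N_k\upbra{m}]\,\KL(\nu_k\upbra{m},\nu_k'\upbra{m})\ge \kl(\delta,1-\delta)\ge \log\frac{1}{2.4\delta}$, where the last inequality is the usual numerical bound on the binary relative entropy.

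Next I would build, for each arm, a single-arm perturbation of the base instance that forces a change of the optimal arm while keeping the perturbation as hard to detect as possible. For $k\neq 1$, I raise arm $k$'s top-fidelity mean from $\mu_k\upbra{M}$ to $\mu_1\upbra{M}+\varepsilon$ so that $k$ becomes optimal, and choose the fidelity-$m$ distributions of arm $k$ (for $m\in\mathcal{M}_k$) to be the least-detectable ones compatible with this new true mean and the error constraint $\abs{\mu_k\upbra{m}-\mu_k\upbra{M}}\le\zeta\upbra{m}$; letting $\varepsilon\downarrow0$, the extremal feasible choice is the distribution $\nu_1\upbra{M}-\zeta\upbra{m}$, which is exactly the second argument of the KL in the statement. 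Symmetrically, for arm $1$ I lower its top-fidelity mean just below $\mu_2\upbra{M}$, and the extremal feasible fidelity-$m$ distribution is $\nu_2\upbra{M}+\zeta\upbra{m}$. Because each alternative perturbs only a single arm and only at fidelities in $\mathcal{M}_k$, the transportation inequality collapses to one constraint per arm, namely $\sum_{m\in\mathcal{M}_k}\E_{\mathcal{I}}[N_k\upbra{m}]\,\KL(\nu_k\upbra{m},\cdot)\ge\log\frac{1}{2.4\delta}$.

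I would then convert each per-arm information constraint into a cost lower bound by a one-line linear-programming argument: minimizing the linear cost $\sum_{m\in\mathcal{M}_k}\lambda\upbra{m}\,\E[N_k\upbra{m}]$ subject to the single constraint $\sum_{m\in\mathcal{M}_k}\E[N_k\upbra{m}]\,\KL(\nu_k\upbra{m},\cdot)\ge\log\frac{1}{2.4\delta}$ over the nonnegative orthant is attained at a vertex placing all mass on the fidelity minimizing $\lambda\upbra{m}/\KL(\nu_k\upbra{m},\cdot)$, which yields $\sum_{m\in\mathcal{M}_k}\lambda\upbra{m}\E[N_k\upbra{m}]\ge \big(\min_{m\in\mathcal{M}_k}\lambda\upbra{m}/\KL(\nu_k\upbra{m},\cdot)\big)\log\frac{1}{2.4\delta}$. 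Since the full cost decomposes as a sum of these per-arm partial costs and every constraint refers to the same base instance $\mathcal{I}$, I can add the $K$ bounds (arm $1$ contributing the $\nu_2\upbra{M}+\zeta\upbra{m}$ term, each $k\neq1$ the $\nu_1\upbra{M}-\zeta\upbra{m}$ term) to obtain the claimed expression.

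The main obstacle I anticipate is the perturbation construction rather than the combinatorics: I must verify that the single-arm alternatives are genuine \MFMAB instances, i.e., that after moving the top-fidelity mean the fidelity-$m$ means can still be chosen to respect $\abs{\mu_k\upbra{m}-\mu_k\upbra{M}}\le\zeta\upbra{m}$ simultaneously at every $m$, including at fidelities $m\notin\mathcal{M}_k$ that are left untouched. This forces a careful choice of the base instance (so that the gap of arm $k$ at each fidelity is small enough for the raised mean to stay within the $\zeta\upbra{m}$-band) and is precisely where the freedom to pick the subsets $\mathcal{M}_k$ enters: restricting the informative perturbation to $\mathcal{M}_k$ is what makes the minimization range over $\mathcal{M}_k$ rather than all of $\mathcal{M}$. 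Establishing that such a base instance exists for every prescribed $(\zeta\upbra{m})$, $(\lambda\upbra{m})$ and every family $\{\mathcal{M}_k\}$ is the delicate step; the $\varepsilon\downarrow0$ limit and the reduction to $\nu_1\upbra{M}-\zeta\upbra{m}$ and $\nu_2\upbra{M}+\zeta\upbra{m}$ then follow by continuity of the KL divergence.
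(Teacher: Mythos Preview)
Your proposal is correct and follows essentially the same route as the paper's proof: the transportation inequality of Kaufmann et al.\ applied to each arm--fidelity pair, a single-arm perturbation (shifting arm $\ell$'s fidelity-$m$ mean to $\mu_1\upbra{M}-\zeta\upbra{m}+\varepsilon$ on $\mathcal{M}_\ell$, and symmetrically for arm $1$), the LP/vertex argument to extract the $\min_{m\in\mathcal{M}_k}\lambda\upbra{m}/\KL(\cdot,\cdot)$, summation over arms, and the $\varepsilon\downarrow0$ limit via continuity of $\KL$. The obstacle you flag---checking that the perturbed instance remains a valid \MFMAB at fidelities outside $\mathcal{M}_k$---is real but is not addressed in the paper either; otherwise your outline matches the paper's argument step for step.
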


According to the KL-divergences' two inputs in the above lower bound, we define the reward gap \(\Delta_k\upbra{m}\) of arm \(k\) at fidelity \(m\) as follows,
\begin{equation}
  \label{eq:reward-gap}
  \Delta_k\upbra{m}\coloneqq \begin{cases}
    \mu_1\upbra{M} - (\mu_k\upbra{m} + \zeta\upbra{m} ) & \forall k \neq 1 \\
    (\mu_1\upbra{m}  - \zeta\upbra{m}) - \mu_2\upbra{M} & \forall k=1
  \end{cases}.
\end{equation}
For any suboptimal arm \(k\neq 1\), the gap \(\Delta_k\upbra{m}\) quantifies the distance between the optimal arm's reward mean \(\mu_1\upbra{M}\) and the suboptimal arm's reward mean upper bound at fidelity \(m\), i.e., \(\mu_k\upbra{m} + \zeta\upbra{m}\); and for the optimal arm \(1\), the gap \(\Delta_1\upbra{m}\) represents the distance between the optimal arm's reward mean lower bound at fidelity \(m\), i.e., \(\mu_1\upbra{m}-\zeta\upbra{m}\), and the second-best arm's reward mean \(\mu_2\upbra{M}\).

\begin{remark}
  If all reward distributions are assumed to be Bernoulli or Gaussian and let \(\mathcal{M}_k=\{m:\Delta_k\upbra{m}>0\}\), the regret bound can be simplified as
  \begin{equation}\label{eq:sample-cost-lower-bound}
    \E[\Lambda]  \ge C\sum_{k\in\mathcal{K}}\min_{m:\Delta_k\upbra{m} > 0}\frac{\lambda\upbra{m}}{(\Delta_k\upbra{m})^2} \log \frac{1}{\delta},
  \end{equation}
  where \(C>0\) is a constant depending on the reward distributions assumed.
  Especially, we denote \(m_k^* \coloneqq \argmin_{m:\Delta_k\upbra{m}>0} \frac{\lambda\upbra{m}}{(\Delta_k\upbra{m})^2}\), and with some algebraic transformations, it can be expressed as \begin{equation}
    \label{eq:efficient-fidelity}
    m_k^* = \argmax_{m\in\mathcal{M}} \frac{\Delta_k\upbra{m}}{\sqrt{\lambda\upbra{m}}}.
  \end{equation}
  With \(m_k^*\), the lower bound in~\eqref{eq:sample-cost-lower-bound} can be rewritten as \( \E[\Lambda]  \ge C\sum_{k\in\mathcal{K}} ({\lambda\upbra{m_k^*}}/{(\Delta_k\upbra{m_k^*})^2}) \log ({1}/{\delta})\), and we define the coefficient as
  \(H\coloneqq \sum_{k\in\mathcal{K}} {\lambda\upbra{m_k^*}}/{(\Delta_k\upbra{m_k^*})^2}\).
  The \(m_k^*\) can be interpreted as the \emph{optimal (most efficient) fidelity} for exploring arm \(k\).
  We note that the current lower bound in~\eqref{eq:sample-cost-lower-bound} does not contain the cost of finding this \(m_k^*\). This cost can be observed in our algorithm's cost complexity upper bound stated in Section~\ref{subsec:sample-cost-upper-bound}.
\end{remark}

\subsection{Algorithm Design}\label{subsec:lucb-ucb}
In this subsection, we propose a Lower-Upper Confidence Bound (LUCB) algorithmic framework with two alternative procedures which employ different mechanisms to select suitable fidelities for arm exploration.
Generalized from the original (single-fidelity) LUCB algorithm~\citep{kalyanakrishnan2012pac},
the LUCB algorithmic framework in~\S\ref{subsubsec:lucb-framework} determines two critical arms (the empirical optimal arm \(\ell_t\) and second-best arm \(u_t\)) for exploration in each time slot \(t\).
However, in \MFMAB, with the critical arms suggested by the LUCB framework, one still needs to decide the fidelities for exploring the critical arms.
This fidelity selection faces an accuracy-cost trade-off, i.e., higher fidelity (accuracy) but suffering higher cost, or lower fidelity (accuracy) but enjoying lower cost.
This trade-off is different from the common exploration-exploitation trade-off in classic bandits.
Because the accuracy and costs are two orthogonal metrics, while in  exploration-exploitation trade-off, there is only a single regret metric.
We address the accuracy-cost trade-off in \S\ref{subsubsec:three-subroutine} with two alternative procedures.


\subsubsection{LUCB Algorithmic Framework}\label{subsubsec:lucb-framework}

The main idea of LUCB~\citep{kalyanakrishnan2012pac} is to repeatedly select and explore two critical arms, that is, the empirical optimal arm \(\ell_t\) and the empirical second-best arm \(u_t\). When both critical arms' confidence intervals are separated---\(\ell_t\)'s LCB (lower confidence bound) is greater than \(u_t\)'s UCB (upper confidence bound), the algorithm terminates and outputs the estimated optimal arm \(\ell_t\).

The LUCB framework depends on a set of meaningful confidence intervals of the arms' rewards.
Usually, the confidence interval of an empirical mean estimate \(\hat{\mu}_k\upbra{m}\)
can be expressed as \((\hat{\mu}_k\upbra{m}-\beta(N_{k,t}\upbra{m},t,\delta), \hat{\mu}_k\upbra{m} + \beta(N_{k,t}\upbra{m}, t, \delta)),\)
where \(\beta(N_{k,t}\upbra{m}, t, \delta)\) is the confidence radius and \(N_{k,t}\upbra{m}\) is the number of times of pulling arm \(k\) at fidelity \(m\) up to time \(t\) (include \(t\)).
As \MFMAB assumes that \(\abs{\mu_k\upbra{m} - \mu_k\upbra{M}} \le \zeta\upbra{m}\), based on observations of fidelity \(m\), the upper and lower confidence bounds for arm \(k\)'s true reward mean at the highest fidelity \(\mu_k\upbra{M}\) can be expressed as follows,
\begin{equation}\label{eq:lucb-at-m}
  \begin{split}
    \texttt{UCB}_{k,t}^{(m)} \coloneqq \hat{\mu}_{k,t}^{(m)} + \zeta^{(m)} + \beta(N_{k,t}\upbra{m}, t, \delta),\quad
    \texttt{LCB}_{k,t}^{(m)} \coloneqq \hat{\mu}_{k,t}^{(m)} - \zeta^{(m)} - \beta(N_{k,t}\upbra{m}, t, \delta),
  \end{split}
\end{equation}
where we set the confidence radius \(\beta(n,t,\delta) = \sqrt{{\log(Lt^4/\delta)}/{n}}\) and \(L\,(>0)\) is a factor.
Since the multi-fidelity feedback allows one to estimate an arm's true reward mean with observations of every fidelity, we pick the tightest one as arm \(k\)'s final confidence bounds as follows,
\begin{equation}\label{eq:lucb-all-m}
  \begin{split}
    \texttt{UCB}_{k,t} = \min_{m\in\mathcal{M}} \texttt{UCB}_{k,t}^{(m)},\quad
    \texttt{LCB}_{k,t} = \max_{m\in\mathcal{M}} \texttt{LCB}_{k,t}^{(m)}.
  \end{split}
\end{equation}

We use the above \(\texttt{UCB}_{k,t}\) and \(\texttt{LCB}_{k,t}\) formulas to select the two critical arms in each round and decide when to terminate the LUCB (Line~\ref{line:while-condition}).
We present the LUCB framework in Algorithm~\ref{alg:lucb-framework}.
The next step is to decide fidelities for exploring both critical arms in each round ( Line~\ref{line:explore}).

\begin{algorithm}[tb]
  \caption{LUCB Framework for Multi-Fidelity \BAIC}\label{alg:lucb-framework}
  \begin{algorithmic}[1]
    \State \textbf{Input:} violation probability \(\delta\)
    \State \textbf{Initialization: } \(\hat{\mu}_{k,t}\upbra{m}=0\), \(N_{k,t}\upbra{m} = 0, \texttt{UCB}_{k,t}=1,\texttt{LCB}_{k,t}=0\) for all arm \(k\) and fidelity \(m\), and \(\ell_t=1, u_t=2\), \(t=1,\) $\tilde{\mu}_1\upbra{M}, \tilde{\mu}_2\upbra{M}.$

    \While{\(\texttt{LCB}_{\ell_t,t} \le \texttt{UCB}_{u_t,t}\)}\label{line:while-condition}

    \State \(\displaystyle \ell_t  \gets \argmax_{k \in \mathcal{K}} \texttt{UCB}_{k,t},\, u_t  \gets \argmax_{k \in \mathcal{K}\setminus \{\ell_t\}} \texttt{UCB}_{k,t}
    \) \Comment{Select top two \texttt{UCB} indexes}

    \State \textsc{Explore}(\(u_t\)) and \textsc{Explore}(\(\ell_t\))
    \Comment{Any procedure of Algorithm~\ref{alg:fidelity-selection-procedure}}
    \label{line:explore}

    \EndWhile
    \State \textbf{Output:} arm \(\ell_t\)
  \end{algorithmic}
\end{algorithm}

\subsubsection{Exploration Procedures}\label{subsubsec:three-subroutine}


To address the accuracy-cost trade-off in fidelity selections,
we devise a UCB-type policy which ``finds'' the optimal fidelity \(m_k^*\) in~\eqref{eq:efficient-fidelity} for each arm \(k\) (\explore{A})
and an explore-then-commit policy stopping at a good fidelity that is at least half as good as the optimal one (\explore{B}).


\textbf{Notations.} The lower bound in~\eqref{eq:sample-cost-lower-bound} implies that there exists an optimal fidelity \(m_k^*\) for exploring arm \(k\).
As \eqref{eq:efficient-fidelity} shows, the optimal fidelity \(m_k^*\) maximizes the \({\Delta_k\upbra{m}}/{\sqrt{\lambda\upbra{m}}}\),
where the \(\Delta_k\upbra{m}\) defined in~\eqref{eq:reward-gap} consists of two unknown reward means: \(\mu_k\upbra{m}\) and \(\mu_1\upbra{M}\) (or \(\mu_2\upbra{M}\)).
That is, calculating \(\Delta_k\upbra{m}\) for all \(k\) needs the top two arms' reward means \(\mu_1\upbra{M}\) and \(\mu_2\upbra{M}\) which are unknown a priori.
To address the issue, we assume the knowledge of an upper bound of the optimal arm's reward mean \(\tilde{\mu}_1\upbra{M}\)
and a lower bound of the second-best arm's reward mean \(\tilde{\mu}_2\upbra{M}\) (see Remark~\ref{rmk:input-mu-1-2} for how to obtain \(\tilde{\mu}_1\upbra{M}\) and \(\tilde{\mu}_2\upbra{M}\) in real-world applications).
With the \(\mu_1\upbra{M}\) and \(\mu_2\upbra{M}\) replaced by \(\tilde{\mu}_1\upbra{M}\) and \(\tilde{\mu}_2\upbra{M}\), we define the ancillary reward gaps \(\tilde{\Delta}_k\upbra{m}\) and the {ancillary optimal fidelity} \(\tilde{m}_k^*\)  as follows,
\begin{equation*}\label{eq:tilde-Delta}
  \tilde{\Delta}_{k}\upbra{m}\coloneqq
  \begin{cases}
    \tilde{\mu}_1\upbra{M} - ({\mu}_{k}\upbra{m} + \zeta\upbra{m}) & \forall k \neq 1 \\
    ({\mu}_{k}\upbra{m} - \zeta\upbra{m}) - \tilde{\mu}_2\upbra{M} & \forall k= 1
  \end{cases},\quad\text{ and }\quad
  \tilde{m}_k^* \coloneqq \argmax_{m\in\mathcal{M}} \frac{\tilde{\Delta}_k\upbra{m}}{\sqrt{\lambda\upbra{m}}}.
\end{equation*}
Especially, we have \(m_k^* \ge \tilde{m}_k^*\) because the cost \(\lambda\upbra{m}\) is non-decreasing and the replacement enlarges the numerator of the \(\argmax\) item in~\eqref{eq:efficient-fidelity}, and,
when the bounds \(\tilde{\mu}_1\upbra{M}\) and \(\tilde{\mu}_2\upbra{M}\) are close the the true reward means, we have \(m_k^* = \tilde{m}_k^*\);
hence, as \(\Delta_k\upbra{m_k^*} > 0\), we assume \(\Delta_k\upbra{\tilde{m}_k^*} > 0\) for all arms \(k\) as well.
To present the next two procedures,
we define the estimate of \(\tilde{\Delta}_k\upbra{m}\) as follows,
\begin{equation*}\label{eq:hat-Delta}
  \hat{\Delta}_{k,t}\upbra{m}\coloneqq
  \begin{cases}
    \tilde{\mu}_1\upbra{M} - (\hat{\mu}_{k,t}\upbra{m} + \zeta\upbra{m}) & \forall k \neq \ell_t \\
    (\hat{\mu}_{k,t}\upbra{m} - \zeta\upbra{m}) - \tilde{\mu}_2\upbra{M} & \forall k= \ell_t
  \end{cases},
\end{equation*}
where the \(\ell_t\) is the estimated optimal arm by LUCB.
For simplify, we omit the input \(\ell_t\) for \(\hat{\Delta}_{k,t}\upbra{m}(\ell_t)\) in the LHS of this definition and thereafter.


\paragraph{\explore{A}}
We devise the \({\Delta_k\upbra{m}}/{\sqrt{\lambda\upbra{m}}}\)'s upper confidence bounds (\texttt{f-UCB}) as follows,
for any fidelity \(m\in\mathcal{M}\) and arms \(k\in\mathcal{K}\),
\(
\texttt{f-UCB}_{k,t}\upbra{m} \coloneqq
{\hat{\Delta}_{k,t}\upbra{m}}/{\sqrt{\lambda\upbra{m}}} + \sqrt{{2\log N_{k,t}}/({\lambda\upbra{m}N_{k,t}\upbra{m}})},
\)
where the \(N_{k,t} \coloneqq \sum_{m\in\mathcal{M}}N_{k,t}\upbra{m}\) is the total number of times of pulling arm \(k\) up to time \(t\).
Whenever the arm \(k\) is selected by LUCB, we pick the fidelity \(m\) that maximizes its \(\texttt{f-UCB}_{k,t}\upbra{m}\) to explore it (see Line~\ref{line:argmax-f-ucb} in Algorithm~\ref{alg:fidelity-selection-procedure}).
For any arm \(k\), this policy guarantees that most of the arm's pulling are on its estimated optimal fidelity \(\tilde{m}_k^*\), or formally, \(N_{k,t}\upbra{m} = O(\log ( N_{k,t}\upbra{\tilde{m}_k^*}))\) for any fidelity \(m\neq \tilde{m}_k^*\) as Lemma~\ref{lma:f-ucb-property} in Appendix shows.
Therefore, \explore{A} spends most cost on the fidelity \(\tilde{m}_k^*\).

\paragraph{\explore{B}} The cost of finding the estimated optimal fidelity \(\tilde{m}_k^*\) in \explore{A} can be large.
To avoid this cost, we devise another approach that stops exploration when finding a \emph{good} fidelity \(\hat{m}_k^*\).
That is, instead of finding the \(\tilde{m}_k^*\) that maximizes \(\tilde{\Delta}_k\upbra{m}/\sqrt{\lambda\upbra{m}}\), we stop at a fidelity \(\hat{m}_k^*\) whose \(\tilde{\Delta}_k\upbra{m}/\sqrt{\lambda\upbra{m}}\) is at least half as large as that of \(\tilde{m}_k^*\), i.e., \(
{\tilde{\Delta}_k\upbra{\hat{m}_k^*}}/{\sqrt{\lambda\upbra{\hat{m}_k^*}}} \ge ({1}/{2}) ({\tilde{\Delta}_k\upbra{\tilde{m}_k^*}}/{\sqrt{\lambda\upbra{\tilde{m}_k^*}}}).
\)
We prove that the above inequality holds for \(\hat{m}_{k}^* = \argmax_{m\in\mathcal{M}} {\hat{\Delta}_{k,t}\upbra{m}}/{\sqrt{\lambda\upbra{m}}}\) when the condition in Line~\ref{line:fidelity-commit-condition} of Algorithm~\ref{alg:fidelity-selection-procedure}
holds (see Lemma~\ref{lma:good-fidelity-guarantee}).
Hence, for each arm \(k\), \explore{B} explores it at all fidelities uniformly, and, when the condition in Line~\ref{line:fidelity-commit-condition} holds, \explore{B} finds a good fidelity \(\hat{m}_k^*\) and keeps choosing \(\hat{m}_k^*\) for exploring arm \(k\) since then.

\begin{algorithm}[tb]
  \caption{\textsc{Explore} Procedures}
  \label{alg:fidelity-selection-procedure}
  \begin{algorithmic}[1]


    \Procedure{\explore{A}}{$k$}

    \State \(m_{k, t}\gets
    \argmax_{m\in\mathcal{M}}
    \texttt{f-UCB}_{k,t}\upbra{m}
    \) \label{line:argmax-f-ucb}
    \State Pull \(\displaystyle \left(k, m_{k,t}\right)\), observe reward, and update corresponding statistics
    \EndProcedure

    \Procedure{\explore{B}}{$k$}

    \If{\(\texttt{isFixed}_{k}\)}

    \State Pull \(\displaystyle \left(k, \hat{m}_{k}^*\right)\), observe reward, and update corresponding statistics

    \Else

    \For{each fidelity \(m\)}
    \State Pull \((k, m)\), observe reward, and update corresponding statistics
    \EndFor

    \If {\(\max_{m\in\mathcal{M}} \frac{\hat{\Delta}_{k,t}\upbra{m}}{\sqrt{\lambda\upbra{m}}} > 3\sqrt{\frac{\log(L/\delta)}{\lambda\upbra{1}N_{k,t}\upbra{m}}}\)} \label{line:fidelity-commit-condition}
    \State \(\hat{m}_{k}^* \gets \argmax_{m\in\mathcal{M}} \frac{\hat{\Delta}_{k,t}\upbra{m}}{\sqrt{\lambda\upbra{m}}}\) and \(\texttt{isFixed}_{k}\gets \texttt{True}\)
    \EndIf

    \EndIf

    \EndProcedure

  \end{algorithmic}
\end{algorithm}

\subsection{Cost Complexity Upper Bound Analysis}\label{subsec:sample-cost-upper-bound}

In the following, we present the cost complexity upper bounds of Algorithm~\ref{alg:lucb-framework} with above two procedures in
Theorem~\ref{thm:sample-cost-upper-bound} respectively.
We first present a technical assumption as follows.

\begin{assumption}\label{asp:arm-2-is-good}
  The reward mean of the second-best arm \(2\) is no less than the reward mean upper bound of any suboptimal arm \(k\) in fidelity \(m_k^*\). It can be expressed as follows,
  \begin{equation}\label{eq:arm-2-is-good}
    \mu_2\upbra{M} \ge \mu_k\upbra{m_k^*} + \zeta\upbra{m_k^*},\quad\forall k\neq 1.
  \end{equation}
\end{assumption}

Note that the definition of \(m_k^*\) in \eqref{eq:efficient-fidelity} implies that \(\mu_1\upbra{M} > \mu_k\upbra{m_k^*} + \zeta\upbra{m_k^*},\forall k \neq 1\).
Comparing this to \eqref{eq:arm-2-is-good},
Assumption~\ref{asp:arm-2-is-good} means that
the true reward mean of the second-best arm \(2\) is close to that of the optimal arm \(1\).
Next, we present the cost complexity upper bounds in Theorem~\ref{thm:sample-cost-upper-bound}.

\begin{theorem}[Cost complexity upper bounds for Algorithm~\ref{alg:lucb-framework} with procedure in Algorithm~\ref{alg:fidelity-selection-procedure}]
  \label{thm:sample-cost-upper-bound}
  Given Assumption~\ref{asp:arm-2-is-good} and \(L \ge 4KM\), Algorithm~\ref{alg:lucb-framework} outputs the optimal arm with a probability at least \(1-\delta\).
  The cost complexities of Algorithm~\ref{alg:lucb-framework} with different fidelity selection procedures in Algorithm~\ref{alg:fidelity-selection-procedure} are upper bounded as follows,
  \begin{alignat}{2}
    \text{\emph{(\explore{A})}}\qquad &  & \E[\Lambda] & = O\left( \tilde{H}\log \left( \frac{L(\tilde{H}+\tilde{G})}{\lambda\upbra{1}\delta} \right) + \tilde{G} \log \log \left(\frac{L(\tilde{H}+\tilde{G})}{\lambda\upbra{1}\delta}\right) \right),\label{eq:sample-cost-upper-bound-A} \\
    \text{\emph{(\explore{B})}}\qquad &  & \E[\Lambda] & = O\left(\tilde{H} \sum_{m\in\mathcal{M}} \frac{\lambda
        \upbra{m}}{\lambda\upbra{1}} \log\left(  \frac{\sum_{m\in\mathcal{M}} (\lambda\upbra{m}/\lambda\upbra{1}) \tilde{H} L}{\lambda\upbra{1}\delta} \right) \right),\label{eq:sample-cost-upper-bound-B}
  \end{alignat}
  where \(\tilde{H} \coloneqq \sum_{k\in\mathcal{K}}\frac{\lambda\upbra{\tilde{m}_k^*}}{(\tilde{\Delta}_k\upbra{\tilde{m}_k^*})^2},\)
  and \(
  \tilde{G} \coloneqq \sum_{k\in \mathcal{K}} \sum_{m\neq \tilde{m}_k^*}  \left( \frac{\tilde{\Delta}_k\upbra{\tilde{m}_k^*}}{\sqrt{\lambda\upbra{\tilde{m}_k^*}}} - \frac{\tilde{\Delta}_k\upbra{m}}{\sqrt{\lambda\upbra{m}}} \right)^{-2}.
  \)

\end{theorem}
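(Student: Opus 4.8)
The plan is to first establish correctness on a uniform concentration event and then bound each procedure's cost by splitting the cost of every arm into the cost spent at its ancillary optimal fidelity $\tilde m_k^*$ and the overhead of locating (or settling on) a good fidelity. I would begin by defining the good event $\mathcal{E}$ on which $\mu_k\upbra{M}\in[\texttt{LCB}_{k,t},\texttt{UCB}_{k,t}]$ holds for every arm $k$, fidelity $m$, and round $t$ simultaneously. Since $\beta(n,t,\delta)=\sqrt{\log(Lt^4/\delta)/n}$ and $\abs{\mu_k\upbra{m}-\mu_k\upbra{M}}\le\zeta\upbra{m}$ (so the $\zeta\upbra{m}$ shifts in \eqref{eq:lucb-at-m} correct the bias), a sub-Gaussian/Hoeffding bound gives per-$(k,m,t)$ failure probability at most $\delta/(Lt^4)$; summing over rounds (using $\sum_t t^{-4}<\infty$) and over the $KM$ pairs, the hypothesis $L\ge 4KM$ yields $\P(\mathcal{E})\ge 1-\delta$. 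On $\mathcal{E}$, whenever the stopping test $\texttt{LCB}_{\ell_t,t}>\texttt{UCB}_{u_t,t}$ in Line~\ref{line:while-condition} fires, the tightest-bound definition \eqref{eq:lucb-all-m} forces $\mu_{\ell_t}\upbra{M}>\mu_k\upbra{M}$ for all $k\ne\ell_t$, so the output is arm $1$, proving the $1-\delta$ correctness claim.

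For the cost bounds I would next derive a procedure-independent skeleton. On $\mathcal{E}$ the stopping rule guarantees that an arm $k$ ceases to be a critical arm once its tightest interval separates from arm $1$'s. Because the reward gap in \eqref{eq:reward-gap} already absorbs $\zeta\upbra{m}$, separation using fidelity $m$ requires $\beta(N_{k,t}\upbra{m},t,\delta)=O(\Delta_k\upbra{m})$, i.e.\ $N_{k,t}\upbra{m}=O(\log(Lt^4/\delta)/(\Delta_k\upbra{m})^2)$ pulls at cost $\lambda\upbra{m}N_{k,t}\upbra{m}$; minimizing over $m$ reproduces the per-arm cost $\lambda\upbra{m_k^*}/(\Delta_k\upbra{m_k^*})^2$ of \eqref{eq:efficient-fidelity}, and summing over $k$ (with $m_k^*,\Delta_k$ replaced by $\tilde m_k^*,\tilde\Delta_k$) gives the leading $\tilde H\log(\cdot)$ term. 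Here Assumption~\ref{asp:arm-2-is-good} is the ingredient ensuring that the gap separating each critical arm from arm $1$ is that arm's own gap $\tilde\Delta_k\upbra{\tilde m_k^*}$ rather than a smaller cross-term, so the per-arm budgets aggregate into $\tilde H$. Since each pull costs at least $\lambda\upbra{1}$, the number of rounds obeys $t\le\Lambda/\lambda\upbra{1}$, and a self-bounding argument then turns $\log(Lt^4/\delta)$ into the $\log(L(\tilde H+\tilde G)/(\lambda\upbra{1}\delta))$ appearing in \eqref{eq:sample-cost-upper-bound-A}--\eqref{eq:sample-cost-upper-bound-B}.

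Next I would handle the fidelity-selection overhead separately for the two procedures. For \explore{A}, I view the choice of fidelity for arm $k$ as a UCB bandit whose actions are the fidelities, whose means are $\tilde\Delta_k\upbra{m}/\sqrt{\lambda\upbra{m}}$, and whose bonus is exactly the $\sqrt{2\log N_{k,t}/(\lambda\upbra{m}N_{k,t}\upbra{m})}$ term of \texttt{f-UCB}; the standard UCB count bound (Lemma~\ref{lma:f-ucb-property}) caps the pulls at any $m\ne\tilde m_k^*$ by $O(\log N_{k,t}/(\lambda\upbra{m}(\mathrm{gap}_m)^2))$ with $\mathrm{gap}_m=\tilde\Delta_k\upbra{\tilde m_k^*}/\sqrt{\lambda\upbra{\tilde m_k^*}}-\tilde\Delta_k\upbra{m}/\sqrt{\lambda\upbra{m}}$, so the overhead cost $\lambda\upbra{m}N_{k,t}\upbra{m}=O(\log N_{k,t}\,(\mathrm{gap}_m)^{-2})$. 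Because the skeleton already shows $N_{k,t}=O(\log(\cdot))$, we get $\log N_{k,t}=O(\log\log(\cdot))$, and summing over $k$ and $m\ne\tilde m_k^*$ produces exactly the $\tilde G\log\log(\cdot)$ term of \eqref{eq:sample-cost-upper-bound-A}. For \explore{B}, the algorithm samples all $M$ fidelities uniformly until the commit test in Line~\ref{line:fidelity-commit-condition} passes; the per-fidelity count $n_k$ at commit satisfies $n_k=O\!\left(\log(L/\delta)\,\frac{\lambda\upbra{\tilde m_k^*}}{\lambda\upbra{1}(\tilde\Delta_k\upbra{\tilde m_k^*})^2}\right)$, and Lemma~\ref{lma:good-fidelity-guarantee} certifies the committed $\hat m_k^*$ is at least half as efficient as $\tilde m_k^*$. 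The uniform-phase cost $n_k\sum_m\lambda\upbra{m}$, summed over $k$, gives $\tilde H\sum_m(\lambda\upbra{m}/\lambda\upbra{1})\log(\cdot)$, while the post-commit cost at $\hat m_k^*$ is $O(\tilde H\log(\cdot))$ and is dominated; together they yield \eqref{eq:sample-cost-upper-bound-B}.

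The main obstacle is the circular dependence in the skeleton: the stopping time, the round index $t$ inside $\beta$, and the counts $N_{k,t}$ are mutually determined, and the tightest-bound rule \eqref{eq:lucb-all-m} couples \emph{which} fidelity is pulled (the exploration decision) with \emph{which} fidelity yields the smallest confidence radius. I would untangle this with a fixed-point/self-bounding argument—first bound $t\le\Lambda/\lambda\upbra{1}$ and $N_{k,t}$ by their leading values up to logs, then substitute back into $\log(Lt^4/\delta)$ and $\log N_{k,t}$ to close the estimates. Obtaining the sharper $\log N_{k,t}=O(\log\log(\cdot))$ rather than $O(\log(\cdot))$ is the delicate point, since it is exactly what yields the $\tilde G\log\log(\cdot)$ (instead of a cruder $\tilde G\log(\cdot)$) overhead in \explore{A}.
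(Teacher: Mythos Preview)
Your proposal is correct and follows essentially the same route as the paper: a concentration event for correctness, a per-arm bound on pulls at $\tilde m_k^*$ (the paper's Step~3), a procedure-specific overhead bound for non-optimal fidelities (the paper's Lemma~\ref{lma:f-ucb-property} for \explore{A} and Lemma~\ref{lma:good-fidelity-guarantee} for \explore{B}), and a self-bounding closure.

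The only notable presentational difference is that the paper, following the classical LUCB analysis of \citet{kalyanakrishnan2012pac}, introduces the midpoint threshold $c=(\mu_1\upbra{M}+\mu_2\upbra{M})/2$, the cross-threshold event $\texttt{CROS}_t$, and the ``middle'' set $\mathcal{C}_t$, and then proves via a four-case case analysis (its Step~1) that $\lnot\texttt{TERM}_t\cap\lnot\texttt{CROS}_t$ forces one of the two critical arms into $\mathcal{C}_t$; its Step~3 then bounds $N_{k,t}\upbra{\tilde m_k^*}$ conditionally on $k\in\mathcal{C}_t$. Your ``on $\mathcal{E}$, arm $k$ ceases to be critical once $\beta(N_{k,t}\upbra{m},t,\delta)=O(\Delta_k\upbra{m})$'' is the same statement without naming $c$ and $\mathcal{C}_t$. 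The threshold-$c$ decomposition is also where the paper invokes Assumption~\ref{asp:arm-2-is-good}, since it needs $c-\mu_k\upbra{\tilde m_k^*}-\zeta\upbra{\tilde m_k^*}\ge\Delta_k\upbra{\tilde m_k^*}/2$; your ``own-gap versus cross-term'' explanation identifies the same role. One small technical point: your good event should union-bound over the random count $N_{k,t}\upbra{m}$ as well (the paper does this explicitly in its Step~2 by summing over $n=1,\dots,t$), which is why the radius carries $t^4$ rather than $t^3$.
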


\begin{wrapfigure}{r}{0.6\textwidth}
  \centering
  \vspace{-10pt}
  \subfloat[\explore{A} is better\label{fig:explore-A-vs-B-1}]{\includegraphics[width=0.3\textwidth]{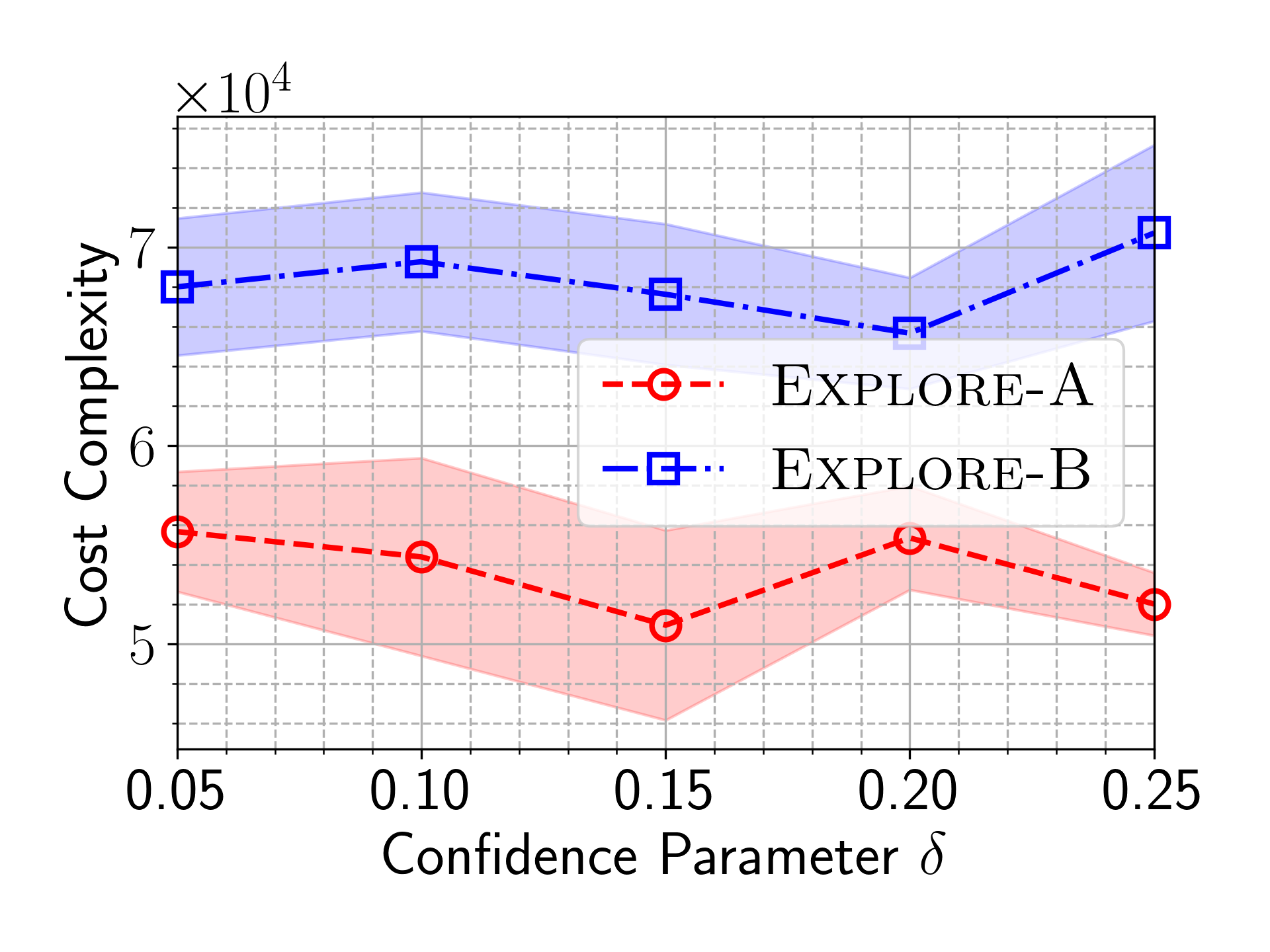}}
  \subfloat[\explore{B} is better\label{fig:explore-A-vs-B-2}]{\includegraphics[width=0.3\textwidth]{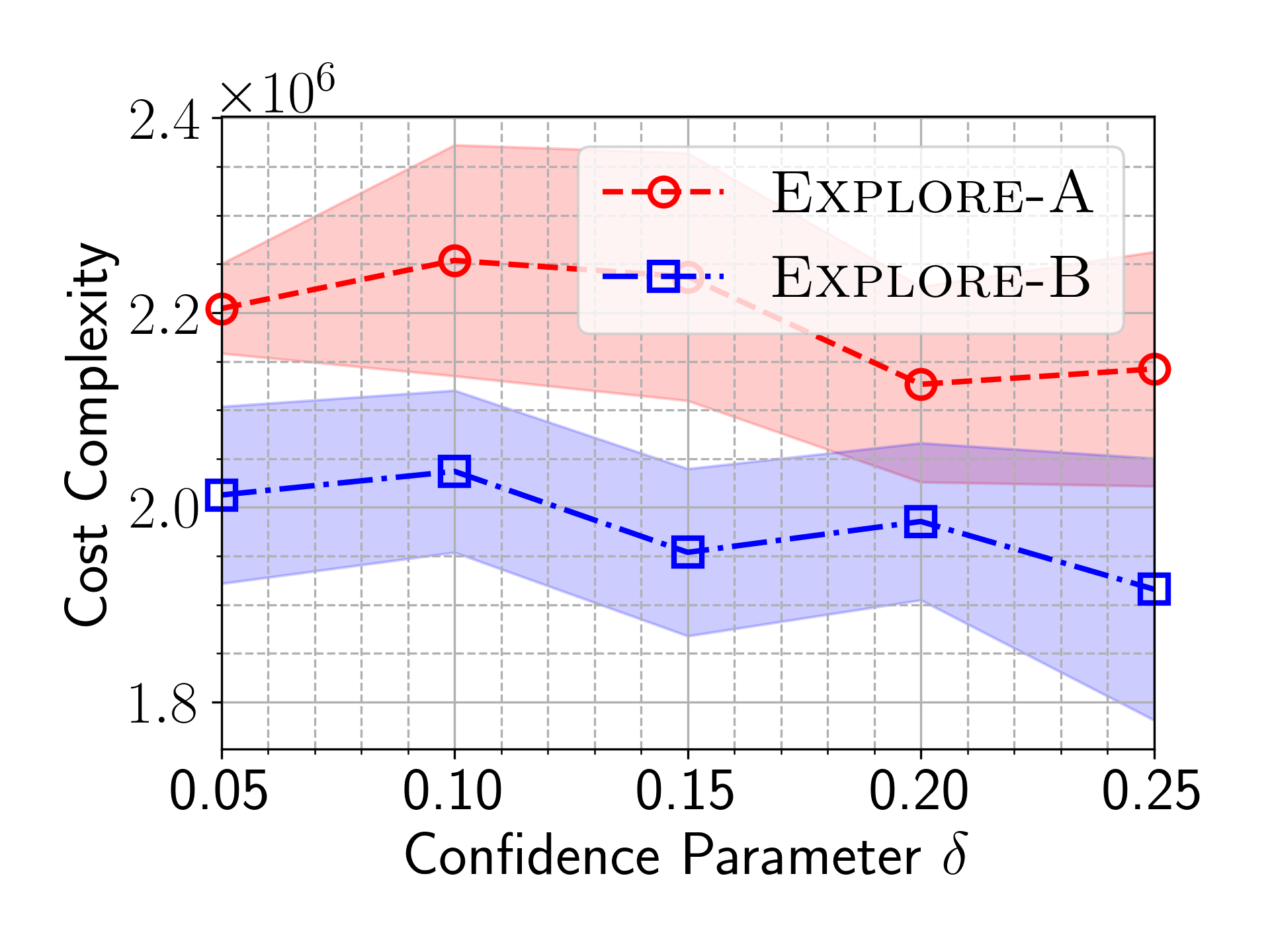}}
  \vspace{-3pt}
  \caption{\explore{A} vs. \explore{B}}\label{fig:explore-A-vs-B}
  \vspace{-15pt}
\end{wrapfigure}

\textbf{\explore{A} \textit{vs.} \explore{B}}
When \(\tilde{G} = O(M\tilde{H})\), the cost complexity upper bound of \explore{A} is less than that of \explore{B} (see Figure~\ref{fig:explore-A-vs-B-1}).
However, when \(\tilde{G}\) is far more larger than \(M\tilde{H}\), \explore{B} is better (see Figure~\ref{fig:explore-A-vs-B-2}). For example, when there is a fidelity \(m'\,(\neq \tilde{m}_k^*)\)
whose \(\Delta_k\upbra{m'}/\sqrt{\lambda\upbra{m'}}\) is very close to that of fidelity \(\tilde{m}_k^*\)
(the case in Figure~\ref{fig:explore-A-vs-B-2}),
this \(\tilde{G}\) would be very large
because \explore{A} needs to pay a high cost to distinguish fidelity \(m'\) from \(\tilde{m}_k^*\);
while in this scenario, \explore{B} stops by either \(m_k'\) or \(\tilde{m}_k^*\) since their \(\tilde{\Delta}_k\upbra{m}/\sqrt{\lambda\upbra{m}}\) are similar and, therefore, enjoys a smaller cost complexity upper bound.
We report the numerical comparisons between both procedures in Figure~\ref{fig:explore-A-vs-B}.
The detailed setup of the simulations is given in Appendix~\ref{sec:simulation-detail}.

\begin{remark}[Tightness of cost complexity bounds]
  The first term of cost complexity upper bound for \explore{A}
  in \eqref{eq:sample-cost-upper-bound-A} matches the cost complexity lower bound in \eqref{eq:sample-cost-lower-bound} up to a constant
  when \(\tilde{m}_k^* = m_k^*\) and \(\tilde{H} = H\)
  (i.e., when \(\tilde{\mu}_1\upbra{M}\) and \(\tilde{\mu}_2\upbra{M}\) are close to their ground truth values).
  The cost complexity upper bound of \explore{B} in~\eqref{eq:sample-cost-upper-bound-B} matches the lower bound with an additional \(\sum_{m\in\mathcal{M}} \lambda\upbra{m}/\lambda\upbra{1}\) coefficient when \(\tilde{m}_k^* = m_k^*\) and \(\tilde{H} = H\).
\end{remark}

\begin{remark}[Comparison to classic MAB's sample complexity]
  If we reduce our cost complexity upper bound result in \MFMAB to classic (single-fidelity) MAB, i.e., letting \(M=1, \lambda\upbra{m}=1\), then both cost complexity upper bounds reduce to \(O(\sum_{k}({1}/{\Delta_k^2})\log(1/\delta))\) where \(\Delta_k \coloneqq \mu_1 - \mu_k\), which is exactly the classic sample complexity upper bound for (single-fidelity) \BAIC~\cite{mannor2004sample,kaufmann2016complexity}.
\end{remark}

\subsection{Application}\label{subsec:application_bai}
One typical application of the best arm identification problem is hyperparameter optimization~\cite{hutter2019automated,elsken2019neural} (including neural architecture search) for machine learning, in which the goal is to identify the best hyperparameter configuration---the training set-up for a machine learning model attaining the best predictive performance---with as low resource as possible. A mapping between concepts in this application and \MFMAB is discussed as follows.
\textbf{Arm:} Hyperparameter configurations of machine learning models, e.g., neural network architectures.
\textbf{Reward:}
Predictive performance of the resulting machine learning model trained based on the selected configuration (arm).
\textbf{Fidelity dimension:} For a particular hyperparameter configuration (arm), one typically has the choice to determine a certain level of resources to allocate for training the model. The concept of  ``training resource'' can be considered the fidelity dimension. More concretely, commonly used training resources include \emph{the number of epochs} and \emph{the training data sample}, both of which satisfy our cost assumption. For example, the larger the number of epochs or training data samples, the more expensive to train the model.

\begin{remark}[On the assumptions of multi-fidelity feedback in the hyperparameter optimization application]
  Observation error upper bound \(\zeta\upbra{m}\) is the maximum distance from resources allocated to the terminal validation loss. According to benchmarked results in two recent benchmarks for multi-fidelity hyperparameter optimization, including \texttt{HPOBench}~\cite{eggensperger2021hpobench} and \texttt{YAHPO Gym}~\cite{pfisterer2022yahpo}, under the typically used fidelity dimension, including the number of epochs, the training data sample, etc., the maximum distance from the terminal validation loss often decreases with the increase of resources, i.e., \(\zeta\upbra{m}\) decreases with the increase of \(m\). Thanks to these benchmarks, it is also convenient to know \(\zeta\upbra{m}\) under different fidelities \(m\) for commonly used types of fidelity dimension.
\end{remark}

\begin{remark}[On the bounds \(\tilde \mu_1\upbra{M}\) and \(\tilde \mu_2\upbra{M}\) utilized in Algorithm~\ref{alg:fidelity-selection-procedure} in the hyperparameter optimization application]\label{rmk:input-mu-1-2}
  Although the exact reward means are typically not accessible, it is easy to get a good approximation of them satisfying our requirements based on domain knowledge. For example, in an image classification task, an easy approximation of the best arm's reward means is to use the reward from a perfect classification, i.e., \(\tilde \mu_1\upbra{M} = 1.0\), which clearly satisfies \(\tilde \mu_{1}\upbra{M} \geq \mu_{1}\upbra{M}\). {For the reward of the second-best arm, we can use the performance of a commonly used model that has a fairly good performance based on benchmarked results as a good approximation.}
  One can easily find the benchmarked performance of commonly used models on a wide range of well-defined machine learning tasks on the \texttt{Papers with code} website.\footnote{\url{https://paperswithcode.com/sota}}
  For novel tasks without well-benchmarked results, one pragmatic way to get \(\tilde \mu_2\upbra{M}\) is to use the result from a particular default machine learning model without any tuning.
\end{remark}

\section{Regret Minimization}\label{sec:regret-minimization}

In this section, we study the regret minimization objective: given a budget \(\Lambda\in \mathbb{R}^+\), minimize the regret---the cumulative difference between the optimal policy's rewards and an algorithm's.
We define the reward obtained in each time slot as the pulled arm's true reward mean (realized at the highest fidelity, but \emph{unrevealed} to the learner), no matter at which fidelity the arm is pulled, while the learner's observation depends on the pulled fidelity as Section~\ref{sec:model} shows.
Under this reward definition, the optimal policy is to constantly pull the optimal arm \(1\) with the lowest fidelity \(m=1\).
Consequently, the expected regret can be expressed as follows,
\begin{equation}\label{eq:lambda1-regret}
    \mathbb{E}[R(\Lambda)]
    \coloneqq \frac{\Lambda}{\lambda^{(1)}} \mu_1\upbra{M}
    - \mathbb{E}\left[ \sum_{t=1}^N \mu_{I_t}^{(M)} \right],
\end{equation}
where \(N\coloneqq \max\{n: \sum_{t=1}^n \lambda\upbra{m_t} \le \Lambda\}\) is the total number of time slots, and \(I_t\) is the arm pulled by a concerned algorithm at time slot \(t\). Next, we illustrate the regret definition's real-world applications in Remark~\ref{rmk:application_regret_minimization}.

\begin{remark}[Applications of the new regret definition]\label{rmk:application_regret_minimization}
    One typical application of the regret minimization problem under \MFMAB is a variant of the advertisement distribution problem~\cite{han2020contextual, farris2015marketing}. In this problem, the objective is to maximize the total return from all the distributed ads within a fixed marketing budget (e.g., in terms of money). We have the following mapping between the application-specific concepts and concepts in \texttt{MF-MAB}.
    \textbf{Arm:} The ads to distribute are the arms.
    \textbf{Reward:} The return from each of the ads, once distributed, is the corresponding ground-truth reward \(\mu_k\upbra{M}\).
    \textbf{Low fidelity:} A minimum cost is needed every time any ad is distributed. For example, the minimum cost may include the necessary resource needed to ensure that the ad satisfies legal and regulatory requirements and to distribute the ad on the designated platform. This minimum cost can be considered the lowest fidelity cost, which can never be waived. \textbf{High fidelity:} since the expected return from different ads can be vastly different, one needs a good estimation on the expected return so as to select the profitable ads to distribute. The cost needed to get a reliable estimate of the expected return can be considered the highest fidelity cost. For example, doing a large-scale user study/survey, and/or consulting experts can give a good estimate of the expected return from the ads, which, however, is resource-consuming.

    This type of application is also common in production management with uncertainty where without knowing the expected return of the concerned products, the decision maker faces the two options of (1) spending the minimum resource needed to directly produce certain products; and (2) spending more resource to first get a good estimates of the expected returns from the different options and then put the ones with the highest expected returns into production.
\end{remark}

\begin{remark}[Comparison to regret definition of~\citet{kandasamy2016multi}]
    \label{rmk:new-regret-motivation}
    \citet{kandasamy2016multi} defined the per time slot reward as the pulled arms' true reward mean multiplied by the cost, i.e., \(\lambda\upbra{m_t}\mu_{I_t}\upbra{M}\), and defined their regret as, \(
    \E[R'(\Lambda)] \coloneqq \Lambda\mu_1\upbra{M} - \E[ \sum_{t=1}^N \lambda\upbra{m_t}\mu_{I_t}\upbra{M} ].
    \)
    We note that multiplying the reward mean with the fidelity-level cost, \(\lambda\upbra{m_t}\mu_{I_t}\upbra{M}\), does not fit into the applications in Remark~\ref{rmk:application_regret_minimization}, and thus we provide an
    alternative definition in \eqref{eq:lambda1-regret} to fit our needs.
    Comparing the formula of both regret definitions, we have \(\mathbb{E}[R'(\Lambda)] \le \lambda\upbra{1}\mathbb{E}[R(\Lambda)].\)
    Note that both regret definitions are very different, so as their bound analysis and algorithm design.
\end{remark}

\subsection{Theoretical Regret Bounds}

We present the problem-independent and -dependent regret upper and lower bounds on the new regret definition in Table~\ref{tab:regret-bounds}.
Due to the limit of space, we defer their detailed statements to Appendix~\ref{sec:regret-minimization-long}.
The upper bounds are achieved by a 2-phase algorithm.
In the first phase, we devise an elimination algorithm for \MFMAB based on~\citet{auer2010ucb} which always explore arms in the highest fidelity \(m= M\),
and, in the second phase, the algorithm exploits the remaining arms left from the first phase in turn in the lowest fidelity \(m = 1\).
We present the details of Algorithm~\ref{alg:elimination-for-rm} at Appendix~\ref{sec:regret-minimization-long}.

\begin{table}[t]
    \centering
    \caption{Bounds on the regret defined in~\eqref{eq:lambda1-regret}: the problem-dependent lower and upper bounds are for \(\liminf_{\Lambda\to\infty}\mathbb{E}[R(\Lambda)]/\log \Lambda\) and \(\limsup_{\Lambda\to\infty}\mathbb{E}[R(\Lambda)]/\log \Lambda\) respectively.}
    \label{tab:regret-bounds}
    \resizebox{\columnwidth}{!}{
        \begin{tabular}{lll}
            \toprule
             & Problem-independent (a.k.a., worst case)
             & Problem-dependent (\(\lim_{\Lambda\to\infty}\mathbb{E}[R(\Lambda)]/\log \Lambda\))
            \\ \midrule
            Lower bound
             & \(\displaystyle \Omega \left( K^{1/3}\Lambda^{{2/3}} \right)\)
            (Theorem~\ref{thm:regret-lower-bound})
             & \(\displaystyle \sum_{k\in \mathcal{K}} \min_{m:\Delta_k^{(m)}>0} \left( \frac{\lambda\upbra{m}}{\lambda\upbra{1}} \mu_1\upbra{M} - \mu_k\upbra{M} \right)\frac{C}{(\Delta_k^{(m)})^2}\) (Theorem~\ref{thm:problem-dependent-regret-lower-bound})
            \\
            Upper bound
             & \(\displaystyle O\left( K^{{1/3}} \Lambda^{{2/3}} (\log \Lambda)^{{1/3}} \right)\) (Theorem~\ref{thm:regret-upper-bound})
             & \(\displaystyle \sum_{k\in \mathcal{K}}
            \left( \frac{\lambda\upbra{M}}{\lambda\upbra{1}} \mu_1\upbra{M} - \mu_k\upbra{M} \right)
            \frac{16}{(\Delta_k\upbra{M})^2}\) (Theorem~\ref{thm:problem-dependent-regret-upper-bound})
            \\
            \bottomrule
        \end{tabular}
    }
\end{table}

\begin{remark}[Real-world implication of the 2-stage algorithm design]
    There are real-world applications, e.g., the advertisement distribution problem in Remark~\ref{rmk:application_regret_minimization}, where the explorations are conducted at the high fidelity (e.g., a large-scale user study) and the exploitations are conducted at the low fidelity (e.g., advertisement distribution via some platforms). This corroborates our algorithm design which also explores at high fidelity and exploits at low fidelity.
    On the other hand, the fact that our algorithm enjoys the tight regret performance comparing to regret lower bound (both problem-independent and -dependent, see Remarks~\ref{rmk:tightness-problem-independent-regret-bounds} and~\ref{rmk:tightness-problem-dependent-regret-bounds}) also implies that the approach of first conducting large-scale user study and then massive distributing good ads used in real-world advertisement distribution is reasonable.
\end{remark}

\begin{remark}[Tightness of problem-independent regret bounds]\label{rmk:tightness-problem-independent-regret-bounds}
    The problem-independent regret upper bound matches the problem-independent lower bound in terms of the number of arms \(K\) and up to some logarithmic factor in terms of the budget \(\Lambda\).
\end{remark}

\begin{remark}[Tightness of problem-dependent regret bounds]\label{rmk:tightness-problem-dependent-regret-bounds}
    The problem-dependent regret bounds are tight for a class of \MFMAB instances.
    For instances fulfilling the condition that \(\{m:\Delta_k\upbra{m}>0\} = \{M\}\) for all arm \(k\in\mathcal{K}\), then
    the problem-dependent regret lower bound matches the upper bound.
    This kind of instances covers a vast number of real world scenarios. Because in practice, the highest fidelity \(M\) is often defined as the only fidelity where the optimal arm can be distinguished from the suboptimal arms. For example, in neural architecture search, the process of increasing the training sample size (fidelity) stops when one architecture performs much better than others.
\end{remark}

\begin{remark}[Relation to partial monitoring]
    Recall that, under our regret definition in~\eqref{eq:lambda1-regret}, the optimal policy is to pull the optimal arm \(1\) at the lowest fidelity, and any exploration at the highest fidelity results in nonzero regret.
    This clear separation of exploration and exploitation is similar to the hard case of partial monitoring~\cite{bartok2012adaptive,lattimore2019cleaning} where the decision maker needs to choose the globally observable actions with nonzero regret cost to do exploration and then, when exploiting, choose locally observable actions.
    The regret of the hard case of partial monitoring is \(\Theta(T^{{2}/{3}})\) where \(T\) is the decision round, which is also similar to our \(\tilde{O}(\Lambda^{{2}/{3}})\) regret bound.
\end{remark}

\section{Future Directions}
We note that the \BAIC's exploration procedures in Algorithm~\ref{alg:fidelity-selection-procedure} require some prior knowledge of the top two arms' reward mean estimates \(\tilde{\mu}_1\upbra{M}\) and \(\tilde{\mu}_2\upbra{M}\) as input.
Although such prior knowledge is easy to access in many real world applications (see Remark~\ref{rmk:input-mu-1-2}), this is not a common assumption in bandits literature.
Without relying on this prior knowledge, we devise a third procedure \explore{C} in Appendix~\ref{sec:explore-C} which
starts from the lower fidelity and gradually increase fidelity when necessary.
However, its cost complexity upper bound is incomparable to the lower bound in~\eqref{eq:sample-cost-lower-bound} and can be very large.
Therefore, one interesting future direction is to devise \BAIC algorithms without this prior knowledge but still enjoying good theoretical performance.

Another interesting future direction is to quantify the cost of identifying optimal fidelities and improve the current cost complexity lower bound in~\eqref{eq:sample-cost-lower-bound}.
Note that
the second term of cost complexity upper bound for \explore{A} in~\eqref{eq:sample-cost-upper-bound-A} has no correspondence in the lower bound, so does the additional factor \(\sum_{m\in\mathcal{M}} \lambda\upbra{m}/\lambda\upbra{1}\) of the bound for \explore{B} in~\eqref{eq:sample-cost-upper-bound-B}.
These two additional terms may correspond to the cost of finding the optimal fidelity \(m_k^*\) which is not accounted in the lower bound in Theorem~\ref{thm:sample-cost-lower-bound}.

\bibliography{mf-mab}

\begin{thebibliography}{31}
\providecommand{\natexlab}[1]{#1}
\providecommand{\url}[1]{\texttt{#1}}
\expandafter\ifx\csname urlstyle\endcsname\relax
  \providecommand{\doi}[1]{doi: #1}\else
  \providecommand{\doi}{doi: \begingroup \urlstyle{rm}\Url}\fi

\bibitem[Auer and Ortner(2010)]{auer2010ucb}
Peter Auer and Ronald Ortner.
\newblock Ucb revisited: Improved regret bounds for the stochastic multi-armed
  bandit problem.
\newblock \emph{Periodica Mathematica Hungarica}, 61\penalty0 (1-2):\penalty0
  55--65, 2010.

\bibitem[Bart{\'o}k et~al.(2012)Bart{\'o}k, Zolghadr, and
  Szepesv{\'a}ri]{bartok2012adaptive}
G{\'a}bor Bart{\'o}k, Navid Zolghadr, and Csaba Szepesv{\'a}ri.
\newblock An adaptive algorithm for finite stochastic partial monitoring.
\newblock In \emph{Proceedings of the 29th International Coference on
  International Conference on Machine Learning}, pages 1779--1786, 2012.

\bibitem[Bonfiglio et~al.(2018)Bonfiglio, Perdikaris, Brizzolara, and
  Karniadakis]{bonfiglio2018multi}
L~Bonfiglio, P~Perdikaris, S~Brizzolara, and GE~Karniadakis.
\newblock Multi-fidelity optimization of super-cavitating hydrofoils.
\newblock \emph{Computer Methods in Applied Mechanics and Engineering},
  332:\penalty0 63--85, 2018.

\bibitem[Bubeck and Cesa-Bianchi(2012)]{bubeck2012regret}
S{\'e}bastien Bubeck and Nicolo Cesa-Bianchi.
\newblock Regret analysis of stochastic and nonstochastic multi-armed bandit
  problems.
\newblock \emph{Foundations and Trends{\textregistered} in Machine Learning},
  5\penalty0 (1):\penalty0 1--122, 2012.

\bibitem[Eggensperger et~al.(2021)Eggensperger, M{\"u}ller, Mallik, Feurer,
  Sass, Klein, Awad, Lindauer, and Hutter]{eggensperger2021hpobench}
Katharina Eggensperger, Philipp M{\"u}ller, Neeratyoy Mallik, Matthias Feurer,
  Rene Sass, Aaron Klein, Noor Awad, Marius Lindauer, and Frank Hutter.
\newblock {HPOB}ench: A collection of reproducible multi-fidelity benchmark
  problems for {HPO}.
\newblock In \emph{Thirty-fifth Conference on Neural Information Processing
  Systems Datasets and Benchmarks Track (Round 2)}, 2021.
\newblock URL \url{https://openreview.net/forum?id=1k4rJYEwda-}.

\bibitem[Elsken et~al.(2019)Elsken, Metzen, and Hutter]{elsken2019neural}
Thomas Elsken, Jan~Hendrik Metzen, and Frank Hutter.
\newblock Neural architecture search: A survey.
\newblock \emph{The Journal of Machine Learning Research}, 20\penalty0
  (1):\penalty0 1997--2017, 2019.

\bibitem[Even-Dar et~al.(2002)Even-Dar, Mannor, and Mansour]{even2002pac}
Eyal Even-Dar, Shie Mannor, and Yishay Mansour.
\newblock Pac bounds for multi-armed bandit and markov decision processes.
\newblock In \emph{COLT}, volume~2, pages 255--270. Springer, 2002.

\bibitem[Even-Dar et~al.(2006)Even-Dar, Mannor, Mansour, and
  Mahadevan]{even2006action}
Eyal Even-Dar, Shie Mannor, Yishay Mansour, and Sridhar Mahadevan.
\newblock Action elimination and stopping conditions for the multi-armed bandit
  and reinforcement learning problems.
\newblock \emph{Journal of machine learning research}, 7\penalty0 (6), 2006.

\bibitem[Falkner et~al.(2018)Falkner, Klein, and Hutter]{falkner2018bohb}
Stefan Falkner, Aaron Klein, and Frank Hutter.
\newblock Bohb: Robust and efficient hyperparameter optimization at scale.
\newblock In \emph{International Conference on Machine Learning}, pages
  1437--1446. PMLR, 2018.

\bibitem[Farris et~al.(2015)Farris, Hanssens, Lenskold, and
  Reibstein]{farris2015marketing}
Paul~W Farris, Dominique~M Hanssens, James~D Lenskold, and David~J Reibstein.
\newblock Marketing return on investment: Seeking clarity for concept and
  measurement.
\newblock \emph{Applied Marketing Analytics}, 1\penalty0 (3):\penalty0
  267--282, 2015.

\bibitem[Forrester et~al.(2007)Forrester, S{\'o}bester, and
  Keane]{forrester2007multi}
Alexander~IJ Forrester, Andr{\'a}s S{\'o}bester, and Andy~J Keane.
\newblock Multi-fidelity optimization via surrogate modelling.
\newblock \emph{Proceedings of the royal society a: mathematical, physical and
  engineering sciences}, 463\penalty0 (2088):\penalty0 3251--3269, 2007.

\bibitem[Garivier et~al.(2019)Garivier, M{\'e}nard, and
  Stoltz]{garivier2019explore}
Aur{\'e}lien Garivier, Pierre M{\'e}nard, and Gilles Stoltz.
\newblock Explore first, exploit next: The true shape of regret in bandit
  problems.
\newblock \emph{Mathematics of Operations Research}, 44\penalty0 (2):\penalty0
  377--399, 2019.

\bibitem[Han and Gabor(2020)]{han2020contextual}
Benjamin Han and Jared Gabor.
\newblock Contextual bandits for advertising budget allocation.
\newblock \emph{Proceedings of the ADKDD}, 17, 2020.

\bibitem[Huang et~al.(2006)Huang, Allen, Notz, and Miller]{huang2006sequential}
Deng Huang, Theodore~T Allen, William~I Notz, and R~Allen Miller.
\newblock Sequential kriging optimization using multiple-fidelity evaluations.
\newblock \emph{Structural and Multidisciplinary Optimization}, 32\penalty0
  (5):\penalty0 369--382, 2006.

\bibitem[Hutter et~al.(2019)Hutter, Kotthoff, and
  Vanschoren]{hutter2019automated}
Frank Hutter, Lars Kotthoff, and Joaquin Vanschoren.
\newblock \emph{Automated machine learning: methods, systems, challenges}.
\newblock Springer Nature, 2019.

\bibitem[Jamieson and Talwalkar(2016)]{jamieson2016non}
Kevin Jamieson and Ameet Talwalkar.
\newblock Non-stochastic best arm identification and hyperparameter
  optimization.
\newblock In \emph{Artificial intelligence and statistics}, pages 240--248.
  PMLR, 2016.

\bibitem[Kalyanakrishnan et~al.(2012)Kalyanakrishnan, Tewari, Auer, and
  Stone]{kalyanakrishnan2012pac}
Shivaram Kalyanakrishnan, Ambuj Tewari, Peter Auer, and Peter Stone.
\newblock Pac subset selection in stochastic multi-armed bandits.
\newblock In \emph{ICML}, volume~12, pages 655--662, 2012.

\bibitem[Kandasamy et~al.(2016{\natexlab{a}})Kandasamy, Dasarathy, Oliva,
  Schneider, and P{\'o}czos]{kandasamy2016gaussian}
Kirthevasan Kandasamy, Gautam Dasarathy, Junier~B Oliva, Jeff Schneider, and
  Barnab{\'a}s P{\'o}czos.
\newblock Gaussian process bandit optimisation with multi-fidelity evaluations.
\newblock \emph{Advances in neural information processing systems}, 29,
  2016{\natexlab{a}}.

\bibitem[Kandasamy et~al.(2016{\natexlab{b}})Kandasamy, Dasarathy, Poczos, and
  Schneider]{kandasamy2016multi}
Kirthevasan Kandasamy, Gautam Dasarathy, Barnabas Poczos, and Jeff Schneider.
\newblock The multi-fidelity multi-armed bandit.
\newblock \emph{Advances in neural information processing systems}, 29,
  2016{\natexlab{b}}.

\bibitem[Kandasamy et~al.(2017)Kandasamy, Dasarathy, Schneider, and
  P{\'o}czos]{kandasamy2017multi}
Kirthevasan Kandasamy, Gautam Dasarathy, Jeff Schneider, and Barnab{\'a}s
  P{\'o}czos.
\newblock Multi-fidelity bayesian optimisation with continuous approximations.
\newblock In \emph{International Conference on Machine Learning}, pages
  1799--1808. PMLR, 2017.

\bibitem[Kaufmann et~al.(2016)Kaufmann, Capp{\'e}, and
  Garivier]{kaufmann2016complexity}
Emilie Kaufmann, Olivier Capp{\'e}, and Aur{\'e}lien Garivier.
\newblock On the complexity of best-arm identification in multi-armed bandit
  models.
\newblock \emph{The Journal of Machine Learning Research}, 17\penalty0
  (1):\penalty0 1--42, 2016.

\bibitem[Lai and Robbins(1985)]{lai1985asymptotically}
Tze~Leung Lai and Herbert Robbins.
\newblock Asymptotically efficient adaptive allocation rules.
\newblock \emph{Advances in applied mathematics}, 6\penalty0 (1):\penalty0
  4--22, 1985.

\bibitem[Lattimore and Szepesv{\'a}ri(2019)]{lattimore2019cleaning}
Tor Lattimore and Csaba Szepesv{\'a}ri.
\newblock Cleaning up the neighborhood: A full classification for adversarial
  partial monitoring.
\newblock In \emph{Algorithmic Learning Theory}, pages 529--556. PMLR, 2019.

\bibitem[Lattimore and Szepesv{\'a}ri(2020)]{lattimore2020bandit}
Tor Lattimore and Csaba Szepesv{\'a}ri.
\newblock \emph{Bandit algorithms}.
\newblock Cambridge University Press, 2020.

\bibitem[Li et~al.(2020)Li, Jamieson, Rostamizadeh, Gonina, Ben-Tzur, Hardt,
  Recht, and Talwalkar]{li2020system}
Liam Li, Kevin Jamieson, Afshin Rostamizadeh, Ekaterina Gonina, Jonathan
  Ben-Tzur, Moritz Hardt, Benjamin Recht, and Ameet Talwalkar.
\newblock A system for massively parallel hyperparameter tuning.
\newblock \emph{Proceedings of Machine Learning and Systems}, 2:\penalty0
  230--246, 2020.

\bibitem[Li et~al.(2017)Li, Jamieson, DeSalvo, Rostamizadeh, and
  Talwalkar]{li2017hyperband}
Lisha Li, Kevin Jamieson, Giulia DeSalvo, Afshin Rostamizadeh, and Ameet
  Talwalkar.
\newblock Hyperband: A novel bandit-based approach to hyperparameter
  optimization.
\newblock \emph{The Journal of Machine Learning Research}, 18\penalty0
  (1):\penalty0 6765--6816, 2017.

\bibitem[Mannor and Tsitsiklis(2004)]{mannor2004sample}
Shie Mannor and John~N Tsitsiklis.
\newblock The sample complexity of exploration in the multi-armed bandit
  problem.
\newblock \emph{Journal of Machine Learning Research}, 5\penalty0
  (Jun):\penalty0 623--648, 2004.

\bibitem[Mohr and van Rijn(2022)]{mohr2022learning}
Felix Mohr and Jan~N van Rijn.
\newblock Learning curves for decision making in supervised machine learning--a
  survey.
\newblock \emph{arXiv preprint arXiv:2201.12150}, 2022.

\bibitem[Pfisterer et~al.(2022)Pfisterer, Schneider, Moosbauer, Binder, and
  Bischl]{pfisterer2022yahpo}
Florian Pfisterer, Lennart Schneider, Julia Moosbauer, Martin Binder, and Bernd
  Bischl.
\newblock Yahpo gym-an efficient multi-objective multi-fidelity benchmark for
  hyperparameter optimization.
\newblock In \emph{First Conference on Automated Machine Learning (Main
  Track)}, 2022.

\bibitem[R{\'e}thor{\'e} et~al.(2014)R{\'e}thor{\'e}, Fuglsang, Larsen, Buhl,
  Larsen, and Madsen]{rethore2014topfarm}
Pierre-Elouan R{\'e}thor{\'e}, Peter Fuglsang, Gunner~C Larsen, Thomas Buhl,
  Torben~J Larsen, and Helge~A Madsen.
\newblock Topfarm: Multi-fidelity optimization of wind farms.
\newblock \emph{Wind Energy}, 17\penalty0 (12):\penalty0 1797--1816, 2014.

\bibitem[Zheng et~al.(2013)Zheng, Hedrick, and Mittal]{zheng2013multi}
Lingxiao Zheng, Tyson~L Hedrick, and Rajat Mittal.
\newblock A multi-fidelity modelling approach for evaluation and optimization
  of wing stroke aerodynamics in flapping flight.
\newblock \emph{Journal of Fluid Mechanics}, 721:\penalty0 118--154, 2013.

\end{thebibliography}
\bibliographystyle{plainnat}

\clearpage
\appendix
\section*{Supplementary Materials}
\section{Simulation Details of Figure~\ref{fig:explore-A-vs-B}}\label{sec:simulation-detail}

In Figure~\ref{fig:explore-A-vs-B}, we report the cost complexities of Algorithm~\ref{alg:lucb-framework} with \explore{A} and Algorithm~\ref{alg:lucb-framework} with \explore{B} (let \(\tilde{\mu}_1\upbra{M}=0.95\) and \(\tilde{\mu}_2\upbra{M}=0.75\)).
We set the confidence parameter \(\delta\) as \(0.05, 0.1, 0.15, 0.2, 0.25\) respectively in comparing the performance of both procedures.
For each simulation, we run \(100\) trials, plot their cost complexities' mean as markers and their deviation as shaded regions.
We present the parameters of \MFMAB instances of Figures~\ref{fig:explore-A-vs-B-1} and~\ref{fig:explore-A-vs-B-2} in Tables~\ref{tab:simulation-detail-1} and~\ref{tab:simulation-detail-2} respectively.
We note that it is more difficult to find the optimal fidelity \(m_k^*\) in the \MFMAB instances for Figure~\ref{fig:explore-A-vs-B-2} because (1) there are more fidelities choices in this instance than that of Figure~\ref{fig:explore-A-vs-B-1}; (2) the value of \(\tilde{\Delta}_k\upbra{m}/\sqrt{\lambda\upbra{m}}\) are closer in the second instance than that of Figure~\ref{fig:explore-A-vs-B-1}.

\begin{table}[htb]
    \centering
    \caption{Figure~\ref{fig:explore-A-vs-B-1}'s \MFMAB with \(K=5\) arms and \(M=3\) fidelities}\label{tab:simulation-detail-1}
    \begin{tabular}{cccccccc}
        \toprule
        Parameters & \(\mu_1\upbra{m}\) & \(\mu_2\upbra{m}\) & \(\mu_3\upbra{m}\) & \(\mu_4\upbra{m}\) & \(\mu_5\upbra{m}\) & \(\zeta\upbra{m}\) & \(\lambda\upbra{m}\)
        \\
        \midrule
        \(m=1\)
                   & \(0.70\)           & \(0.75\)
                   & \(0.50\)           & \(0.50\)           & \(0.30\)
                   & \(0.30\)           & \(1\)
        \\
        \(m=2\)    & \(0.80\)           & \(0.775\)
                   & \(0.60\)           & \(0.55\)           & \(0.45\)
                   & \(0.15\)           & \(1.1\)
        \\
        \(m=3\)    & \(0.90\)           & \(0.80\)
                   & \(0.70\)           & \(0.60\)           & \(0.50\)
                   & \(0\)              & \(1.2\)
        \\
        \bottomrule
    \end{tabular}
\end{table}

\begin{table}[htb]
    \centering
    \caption{Figure~\ref{fig:explore-A-vs-B-2}'s \MFMAB with \(K=5\) arms and \(M=5\) fidelities}\label{tab:simulation-detail-2}
    \begin{tabular}{cccccccc}
        \toprule
        Parameters & \(\mu_1\upbra{m}\) & \(\mu_2\upbra{m}\) & \(\mu_3\upbra{m}\) & \(\mu_4\upbra{m}\) & \(\mu_5\upbra{m}\) & \(\zeta\upbra{m}\) & \(\lambda\upbra{m}\)
        \\
        \midrule
        \(m=1\)
                   & \(0.83\)           & \(0.82\)
                   & \(0.76\)           & \(0.82\)           & \(0.70\)
                   & \(0.10\)           & \(1\)
        \\
        \(m=2\)    & \(0.84\)           & \(0.83\)
                   & \(0.80\)           & \(0.80\)           & \(0.72\)
                   & \(0.08\)           & \(1.1\)
        \\
        \(m=3\)    & \(0.85\)           & \(0.85\)
                   & \(0.80\)           & \(0.82\)           & \(0.74\)
                   & \(0.06\)           & \(1.2\)
        \\
        \(m=4\)    & \(0.85\)           & \(0.86\)
                   & \(0.80\)           & \(0.80\)           & \(0.76\)
                   & \(0.04\)           & \(1.3\)
        \\
        \(m=5\)    & \(0.90\)           & \(0.88\)
                   & \(0.86\)           & \(0.84\)           & \(0.80\)
                   & \(0\)              & \(1.4\)
        \\
        \bottomrule
    \end{tabular}
\end{table}

\section{A Third Fidelity Selection Procedure: \explore{C}}\label{sec:explore-C}
Besides the \explore{A} and -B procedures, here we consider a third na\"ive and conservative idea for fidelity selection that one should start from low risk (cost), gradually increase the risk (cost) as the learning task needs,
and stop when finding the optimal arm.
To decide when to increase the fidelity for exploring an arm \(k\),
we use the arm's confidence radius \(\beta(N_{k,t}\upbra{m},t,\delta)\) at fidelity \(m\) as a measure of the amount of information left in this fidelity, and when the fidelity \(m\)'s confidence radius is less than the error upper bound \(\zeta\upbra{m}\) at this fidelity, we increase the fidelity by \(1\) for higher accuracy, or formally, the fidelity is selected as follows, \[
    m_{k,t}\gets \min\left\{ m \left| \beta(N_{k,t}\upbra{m},t,\delta)\ge \zeta^{(m)}\right.\right\}.
\]

\begin{algorithm}[h]
    \caption{\explore{C} Procedures}
    \begin{algorithmic}

        \Procedure{Explore-C}{$k$}
        \State \(m_{k,t}\gets \min\left\{ m \left| \beta(N_{k,t}\upbra{m},t,\delta)\ge \zeta^{(m)}\right.\right\}\)
        \State Pull \(\displaystyle \left(k, m_{k,t}\right)\), observe reward, and update corresponding statistics
        \EndProcedure

    \end{algorithmic}
\end{algorithm}



\begin{theorem}[Cost complexity upper bounds for Algorithm~\ref{alg:lucb-framework} with \explore{C}]\label{thm:cost-complexity-upper-bound-C}
    Given Assumption~\ref{asp:arm-2-is-good} and \(L \ge 4KM\), Algorithm~\ref{alg:lucb-framework} outputs the optimal arm with a probability at least \(1-\delta\).
    The cost complexity of Algorithm~\ref{alg:lucb-framework} with \emph{\explore{C}} are upper bounded as follows,
    \begin{equation}\label{eq:sample-cost-upper-bound-C}
        \begin{split}
            \E[\Lambda] &= O\left(H^\ddagger\log \left( \frac{L(H^\ddagger+Q)}{\lambda\upbra{1}\delta} \right) + Q\log \left( \frac{L(H^\ddagger+Q)}{\lambda\upbra{1}\delta} \right)\right),
        \end{split}
    \end{equation}
    where, letting \(m_k^\ddagger\) denote the smallest fidelity for arm \(k\) such that \(\Delta_k\upbra{m} > 2\zeta\upbra{m}\), or formally,\begin{equation*}
        m_k^\ddagger \coloneqq \min\{
        m:\Delta_k\upbra{m} > 2\zeta\upbra{m}
        \},
    \end{equation*}
    and we denote
    \[H^\ddagger \!\coloneqq \!\!\sum_{k\in\mathcal{K}}\!\!\frac{\lambda\upbra{m_k^\ddagger}}{(\Delta_k\upbra{m_k^\ddagger})^2},\quad
        Q \coloneqq \sum_{k\in \mathcal{K}}  \sum_{m = 1}^{m_k^\ddagger-1} \frac{\lambda\upbra{m}}{(\zeta\upbra{m})^2}.
    \]
\end{theorem}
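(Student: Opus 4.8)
The plan is to follow the same two-part template used for Theorem~\ref{thm:sample-cost-upper-bound}: first establish correctness through a high-probability good event inherited from the LUCB framework, then bound the cost by controlling, under that event, the number of pulls each arm receives at every fidelity under the \explore{C} selection rule. For correctness, I would define the good event \(\mathcal{E}\) on which \(\abs{\hat{\mu}_{k,t}\upbra{m} - \mu_k\upbra{m}} \le \beta(N_{k,t}\upbra{m}, t, \delta)\) holds simultaneously for all arms, fidelities, and times. A union bound over \(k,m,t\), together with \(\beta(n,t,\delta) = \sqrt{\log(Lt^4/\delta)/n}\) and \(L \ge 4KM\), gives \(\P(\mathcal{E}) \ge 1-\delta\); this is exactly the event from the proof of Theorem~\ref{thm:sample-cost-upper-bound}. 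On \(\mathcal{E}\) the bounds \(\texttt{UCB}_{k,t}\) and \(\texttt{LCB}_{k,t}\) in~\eqref{eq:lucb-all-m} sandwich each true mean \(\mu_k\upbra{M}\), so the termination rule (Line~\ref{line:while-condition}) forces the output \(\ell_t = 1\). Since correctness is a property of the framework's confidence bounds and not of the exploration subroutine, it transfers verbatim and is independent of which \explore{} procedure is used.

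For the cost bound I would analyze the fidelity trajectory of each arm. The \explore{C} rule picks the smallest \(m\) with \(\beta(N_{k,t}\upbra{m}, t, \delta) \ge \zeta\upbra{m}\), so once \(N_{k,t}\upbra{m}\) exceeds \(\log(Lt^4/\delta)/(\zeta\upbra{m})^2\) the radius drops below \(\zeta\upbra{m}\) and arm \(k\) is never pulled at fidelity \(m\) again. This yields \(N_{k}\upbra{m} \le \log(Lt^4/\delta)/(\zeta\upbra{m})^2 + 1\) for every \(m\) strictly below the fidelity where arm \(k\) is finally resolved, contributing the term \(Q = \sum_{k}\sum_{m<m_k^\ddagger} \lambda\upbra{m}/(\zeta\upbra{m})^2\) after weighting by \(\lambda\upbra{m}\). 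The crux is to show that arm \(k\) stops advancing exactly at \(m_k^\ddagger = \min\{m : \Delta_k\upbra{m} > 2\zeta\upbra{m}\}\): on \(\mathcal{E}\), a standard LUCB separation argument shows arm \(k\) ceases to require exploration once \(\beta\) at its current fidelity falls below a constant multiple of \(\Delta_k\upbra{m}\); because \(\Delta_k\upbra{m_k^\ddagger} > 2\zeta\upbra{m_k^\ddagger}\), this threshold is reached while \(\beta\) is still \(\ge \zeta\upbra{m_k^\ddagger}\), hence before the switching rule would promote arm \(k\) to fidelity \(m_k^\ddagger + 1\). Thus arm \(k\) never climbs past \(m_k^\ddagger\), and its pulls at \(m_k^\ddagger\) number \(O(\log(Lt^4/\delta)/(\Delta_k\upbra{m_k^\ddagger})^2)\), producing \(H^\ddagger\). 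Assumption~\ref{asp:arm-2-is-good} lets me run the separation uniformly against the second-best arm, and the optimal arm \(1\) is handled symmetrically through \(\Delta_1\upbra{m}\).

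Summing the per-fidelity costs gives \(\Lambda = O((H^\ddagger + Q)\log(Lt^4/\delta))\), but \(t\) (equivalently the total number of pulls) still appears inside the logarithm. To close this I would use a self-bounding step: since each \(\lambda\upbra{m} \ge \lambda\upbra{1}\), the total number of pulls is \(O((H^\ddagger + Q)\log(Lt^4/\delta)/\lambda\upbra{1})\), so \(t\) satisfies \(t = O((H^\ddagger+Q)\log(Lt^4/\delta)/\lambda\upbra{1})\); solving this inequality replaces \(\log(Lt^4/\delta)\) by \(O(\log(L(H^\ddagger+Q)/(\lambda\upbra{1}\delta)))\) up to absorbing \(\log\log\) factors, yielding the bound in~\eqref{eq:sample-cost-upper-bound-C}.

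I expect the main obstacle to be the separation step that pins the stopping fidelity at \(m_k^\ddagger\). It requires a careful comparison of the constant in the LUCB separation condition against the \(\beta \ge \zeta\upbra{m}\) switching threshold, carried out jointly for the critical pair \((\ell_t, u_t)\) rather than for a single arm, and it is precisely the \(\Delta_k\upbra{m_k^\ddagger} > 2\zeta\upbra{m_k^\ddagger}\) margin, together with Assumption~\ref{asp:arm-2-is-good}, that makes these constants reconcile.
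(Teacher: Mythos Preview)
Your proposal is correct and follows essentially the same route as the paper: Steps~1--3 of the proof of Theorem~\ref{thm:sample-cost-upper-bound} (the \(\texttt{CROS}_t\)/\(\texttt{Mid}_t\) decomposition and the bound \(N_{k,t}\upbra{m_k^\ddagger} = O((\Delta_k\upbra{m_k^\ddagger})^{-2}\log(Lt^4/\delta))\)) play the role of your good-event and LUCB separation arguments, and the paper's Step~4 for \explore{C} then combines the switching-rule bound \(N_{k}\upbra{m}\le \log(Lt^4/\delta)/(\zeta\upbra{m})^2\) for \(m<m_k^\ddagger\) with a half-time contradiction to close the self-bounding, exactly as you outline. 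The only cosmetic difference is that the paper packages the self-bounding step as a contradiction from a hypothetical value of \(\Lambda\) rather than solving the implicit inequality directly.
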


\begin{remark}[\explore{C} \textit{vs.} \textsc{Explore}-A and -B]
    The \explore{C} procedure does not require additional knowledge as the other two. It is a one-size-fits-all option.
    If with some addition information of a specific scenario, e.g., the exact or approximated reward means of top two arms, one can use the \textsc{Explore}-A or B.
\end{remark}
\section{Proofs for Best Arm Identification with Fixed Budget}

\subsection{Proof of Theorem~\ref{thm:sample-cost-lower-bound}}
\label{supapp:sample-cost-lower-bound}

\begin{lemma}[{\citet{kaufmann2016complexity}, Lemma 1}]\label{lma:sample-complexity-general-bound}
  Let \(\nu\) and \(\nu'\) be two bandit models with \(K\) arms such that for all \(k\), the distributions \(\nu_k\) and \(\nu'_k\) are mutually absolutely continuous. For any almost-surely finite stopping time \(\sigma\) with respect to the filtration \(\{\mathcal{F}_t\}_{t\in\mathbb{N}}\) where \(\mathcal{F}_t=\sigma(I_1,X_1,\dots,I_t,X_t)\),
  \[
    \sum_{k=1}^K \E_\nu[N_k(\sigma)]\KL(\nu_k, \nu_{k}') \ge \sup_{\mathcal{E}\in\mathcal{F}_\sigma} \kl(\P_\nu(\mathcal{E}), \P_{\nu'}(\mathcal{E})),
  \]
  where \(\kl(x,y)\) is the binary relative entropy.
\end{lemma}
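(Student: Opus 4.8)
The plan is to prove this as the standard change-of-measure (or ``transportation'') lemma by combining the log-likelihood-ratio process with the data-processing inequality for relative entropy. First I would introduce the log-likelihood ratio of the observed trajectory under the two models: for each $t$, set
\[
  L_t \coloneqq \sum_{s=1}^t \log\frac{d\nu_{I_s}}{d\nu'_{I_s}}(X_s),
\]
which is well-defined because the assumed mutual absolute continuity of $\nu_k$ and $\nu'_k$ guarantees the Radon--Nikodym derivatives exist. If some $\KL(\nu_k,\nu'_k)=+\infty$ the claimed inequality is vacuous, so I may assume every per-arm divergence is finite.

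The key identity I would establish is
\[
  \E_\nu[L_\sigma] = \sum_{k=1}^K \E_\nu[N_k(\sigma)]\,\KL(\nu_k,\nu'_k).
\]
To obtain it, observe that conditionally on $\mathcal{F}_{s-1}$ and on the pulled arm $I_s$, the increment $\log\frac{d\nu_{I_s}}{d\nu'_{I_s}}(X_s)$ has $\nu$-conditional mean exactly $\KL(\nu_{I_s},\nu'_{I_s})$, since $I_s$ is chosen from the strict past under the filtration $\mathcal{F}_t=\sigma(I_1,X_1,\dots,I_t,X_t)$. Hence $M_t \coloneqq L_t - \sum_{s=1}^t \KL(\nu_{I_s},\nu'_{I_s})$ is an $\{\mathcal{F}_t\}$-martingale under $\P_\nu$, and regrouping the sum by arm gives $\sum_{s=1}^t \KL(\nu_{I_s},\nu'_{I_s}) = \sum_{k} N_k(t)\,\KL(\nu_k,\nu'_k)$. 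Applying optional stopping at the almost-surely finite stopping time $\sigma$ then yields $\E_\nu[M_\sigma]=0$, which is precisely the displayed identity.

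Next I would reinterpret the left-hand side information-theoretically: $L_\sigma$ is exactly the log-density $\log\frac{d\P_\nu}{d\P_{\nu'}}$ of the two trajectory laws restricted to $\mathcal{F}_\sigma$, so $\E_\nu[L_\sigma]=\KL(\P_\nu|_{\mathcal{F}_\sigma},\P_{\nu'}|_{\mathcal{F}_\sigma})$. Now fix any $\mathcal{E}\in\mathcal{F}_\sigma$ and consider the $\mathcal{F}_\sigma$-measurable binary statistic $\1{\mathcal{E}}$. Pushing both restricted laws forward through this statistic produces the two Bernoulli laws with success probabilities $\P_\nu(\mathcal{E})$ and $\P_{\nu'}(\mathcal{E})$; by the data-processing inequality for KL-divergence this can only decrease divergence, so $\KL(\P_\nu|_{\mathcal{F}_\sigma},\P_{\nu'}|_{\mathcal{F}_\sigma}) \ge \kl(\P_\nu(\mathcal{E}),\P_{\nu'}(\mathcal{E}))$. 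Chaining this with the identity and taking the supremum over $\mathcal{E}\in\mathcal{F}_\sigma$ delivers the statement.

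The main obstacle is the rigorous justification of the optional-stopping/Wald step at a random, possibly unbounded stopping time $\sigma$: one must verify the integrability conditions needed to conclude $\E_\nu[M_\sigma]=0$, which I would handle by working with the bounded stopping times $\sigma\wedge n$, where optional stopping applies directly, and then passing to the limit using $\sigma<\infty$ almost surely together with finiteness of $\E_\nu[N_k(\sigma)]$ and of the per-arm divergences to control the limit. Once this identity is secured, the data-processing step is routine, so the technical weight of the proof sits entirely in this martingale-at-a-stopping-time argument.
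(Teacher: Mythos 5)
The paper does not actually prove this lemma---it is imported verbatim, with citation, from \citet{kaufmann2016complexity}---so the only meaningful comparison is with the original proof in that reference, and your proposal reconstructs exactly that argument: the Wald-type identity \(\E_\nu[L_\sigma]=\sum_{k}\E_\nu[N_k(\sigma)]\KL(\nu_k,\nu'_k)\) for the stopped log-likelihood ratio, the identification of \(\E_\nu[L_\sigma]\) with the KL divergence between the two trajectory laws restricted to \(\mathcal{F}_\sigma\), and the data-processing contraction down to the binary \(\kl\) of an event's probabilities. Your plan for the delicate optional-stopping step (truncate at \(\sigma\wedge n\), then pass to the limit) is also the correct one and does go through: since \(z^+\le e^z\) and \(\E_\nu\bigl[\tfrac{d\nu'_k}{d\nu_k}(X)\bigr]\le 1\), each increment's negative part has expectation at most \(1\), so finiteness of the per-arm KLs together with \(\E_\nu[N_k(\sigma)]<\infty\) supplies the integrable envelope needed for dominated convergence.
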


In \MFMAB model, regarding each arm-fidelity \((k,m)\)-pair as an individual arm, we can extend Lemma~\ref{lma:sample-complexity-general-bound} to multi-fidelity case as follows,
\begin{equation}\label{eq:sample-complexity-general-bound}
  \sum_{k=1}^K\sum_{m=1}^M \E_\nu[N_k\upbra{m}(\sigma)]\KL(\nu_k\upbra{m}, {\nu'}_{k}\upbra{m}) \ge \sup_{\mathcal{E}\in\mathcal{F}_\sigma} \kl(\P_\nu(\mathcal{E}), \P_{\nu'}(\mathcal{E})).
\end{equation}

Next, we construct instances \(\nu\) and \(\nu'\).
We set the reward distributions \(\nu=(\nu_k\upbra{m})_{(k,m)\in\mathcal{K}\times\mathcal{M}}\) as Bernoulli and the reward means fulfill \(\mu_1\upbra{M}>\mu_2\upbra{M}\ge \mu_3\upbra{M}\ge \dots \ge \mu_K\upbra{M},\) where \(\mu_k\upbra{m} = \mathbb{E}_{X\sim \nu_k\upbra{m}}[X]\).
We let \({\nu'}_k\upbra{m}\) be the same to \(\nu_k\upbra{m}\) for all \(k\) and \(m\), except for that an arm \(\ell\neq 1\).
We set arm \(\ell\)'s reward means  on fidelities \(m\in\mathcal{M}_k\) to be \({\nu'}_\ell\upbra{m} = \nu_1\upbra{M}-\zeta\upbra{m} + \epsilon\).
So, in instance \(\nu'\), the optimal arm is \(\ell\) and its true reward mean \({\mu'}_\ell\upbra{M}\) is slightly greater than \(\mu_1\upbra{M}\).
This implies for the event \(\mathcal{E}=\{\text{output arm }1\}\) and any algorithm \(\pi\) that can find the optimal arm with a confidence \(1-\delta\), \(\P_{\nu,\pi}(\mathcal{E})\ge 1-\delta\) and \(\P_{\nu',\pi}(\mathcal{E}) \le \delta\).
Then, from \eqref{eq:sample-complexity-general-bound}, we have
\[
  \begin{split}
    \sum_{m\in\mathcal{M}_k} \E_\nu[N_\ell\upbra{m}(\sigma)]\KL(\nu_\ell\upbra{m}, {\nu'}_{\ell}\upbra{m})
    & \ge \sup_{\mathcal{E}\in\mathcal{F}_\sigma} \kl(\P_{\nu,\pi}(\mathcal{E}), \P_{\nu',\pi}(\mathcal{E})) \\
    & \ge \kl(1-\delta, \delta)\\
    & \ge \log \frac{1}{2.4\delta}.
  \end{split}
\]
We rewrite the above inequality as follows, \[
  \sum_{m\in\mathcal{M}_k} \lambda\upbra{m} \E_\nu[N_\ell\upbra{m}(\sigma)] \cdot \frac{\KL(\nu_\ell\upbra{m}, {\nu'}_{\ell}\upbra{m})}{\lambda\upbra{m}}
  \ge \log \frac{1}{2.4\delta}.
\]

Therefore, for the arm \(\ell\), our aim is to minimize its cost complexity with a constraint as follows,
\[\begin{split}
    & \min_{\mathbb{E}[N_\ell\upbra{m}],\forall m} \sum_{m=1}^M \lambda\upbra{m} \E_\nu[N_\ell\upbra{m}(\sigma)]\\
    \text{such that } & \sum_{m\in\mathcal{M}_k} \lambda\upbra{m} \E_\nu[N_\ell\upbra{m}(\sigma)] \cdot \frac{\KL(\nu_\ell\upbra{m}, {\nu'}_{\ell}\upbra{m})}{\lambda\upbra{m}}
    \ge \log \frac{1}{2.4\delta}.
  \end{split}
\]

Note that the above is a linear programming (LP) and its optimum is reached at one of its polyhedron constraint's vertex---only one \(\mathbb{E}[N_\ell\upbra{m}]\) is positive and all others are equal to zero.
\[\begin{split}
    \min_{\mathbb{E}[N_\ell\upbra{m}],\forall m}  \sum_{m=1}^M \lambda\upbra{m} \E_\nu[N_\ell\upbra{m}(\sigma)]
    & \overset{(a)}\ge\min_{m\in\mathcal{M}_k} \frac{\lambda\upbra{m}}{\KL(\nu_\ell\upbra{m}, {\nu'}_{\ell}\upbra{m})} \log \frac{1}{2.4\delta}\\
    & =  \min_{m\in\mathcal{M}_k} \frac{\lambda\upbra{m}}{\KL(\nu_\ell\upbra{m}, \nu_1\upbra{M}-\zeta\upbra{m} + \epsilon)} \log \frac{1}{2.4\delta}\\
    & \overset{(b)}\ge \min_{m\in\mathcal{M}_k} \frac{\lambda\upbra{m}}{(1+\varepsilon)\KL(\nu_\ell\upbra{m}, \nu_1\upbra{M}-\zeta\upbra{m})} \log \frac{1}{2.4\delta}
  \end{split}
\]
where the inequality (a) is due to the property of LP we mentioned above,
and the inequality (b) is because of the continuity of KL-divergence.

To bound the optimal arm \(1\)'s cost complexity, we use the same \(\nu\) as above and construct another instance \(\nu''\). The instance \(\nu''\)'s reward means are the same to \(\nu\) except for arm \(1\) whose reward means for fidelity \(m\in \mathcal{M}_1\) are set as \({\mu''}_1\upbra{m} = \mu_2\upbra{m}+\zeta\upbra{m}-\epsilon\). Then, with similar procedure as the above, we obtain
\[
  \begin{split}
    \min_{\mathbb{E}[N_1\upbra{m}],\forall m}  \sum_{m=1}^M \lambda\upbra{m} \E_\nu[N_1\upbra{m}(\sigma)]
    \ge \min_{m\in\mathcal{M}_1} \frac{\lambda\upbra{m}}{(1+\varepsilon)\KL(\nu_1\upbra{m}, \nu_2\upbra{M}+\zeta\upbra{m})} \log \frac{1}{2.4\delta}.
  \end{split}
\]

Summing up the above costs leads to the lower bound as follows, and letting the \(\epsilon\) goes to zeros concludes the proof.
\[
  \begin{split}
    &\E[\Lambda] \ge \\
    &\left( \min_{m\in\mathcal{M}_1} \frac{\lambda\upbra{m}}{(1+\varepsilon)\KL(\nu_{1}\upbra{m}, \nu_{2}\upbra{M} + \zeta\upbra{m})} + \sum_{k\neq 1}\min_{m\in\mathcal{M}_k}  \frac{\lambda\upbra{m}}{(1+\varepsilon)\KL(\nu_k\upbra{m}, \nu_1\upbra{M}-\zeta\upbra{m})} \right) \log \frac{1}{2.4\delta}.
  \end{split}
\]

\subsection{Proof of Theorem~\ref{thm:sample-cost-upper-bound}}
\label{subapp:sample-cost-upper-bound-proof}

\textbf{Notation.} Denote the threshold \(c = \frac{\mu_1\upbra{M} + \mu_2\upbra{M}}{2}\) as the average of the optimal and best suboptimal arms' reward means.
Denote \(\mathcal{A}_t\coloneqq \{k\in\mathcal{K}: \texttt{LCB}_{k,t} > c\}\)
and \(\mathcal{B}_t\coloneqq \{k\in\mathcal{K}: \texttt{UCB}_{k,t} < c\}\)
as the above and below sets which respectively contain arms whose rewards are clearly higher or lower than the threshold with high probability, and let \(\mathcal{C}_t\coloneqq \mathcal{K}\setminus (\mathcal{A}_t \cup \mathcal{B}_t)\) as the complement of both sets' union. Then, we define two events as follows
\[
  \begin{split}
    \texttt{TERM}_t &\coloneqq \{\texttt{LCB}_{\ell_t,t} > \texttt{UCB}_{u_t,t}\},\\
    \texttt{CROS}_t &\coloneqq \{\exists k\neq 1: k\in\mathcal{A}_t  \} \cup \{1\in\mathcal{B}_t\}.
  \end{split}
\]

The \(\texttt{TERM}_t\) event corresponds to the complement of the main while loop condition in the LUCB algorithm.
When the \(\texttt{TERM}_t\) event happens, the LUCB algorithm terminates.
The \(\texttt{CROS}_t\) event means there exists a suboptimal arm whose \(\texttt{LCB}_{k,t}\) is greater than \(c\) or that the optimal arm \(1\)'s \(\texttt{UCB}_{1,t}\) is less than \(c\), both of which means that at least one arm's reward mean  confidence interval incorrectly crosses the threshold \(c\).

\paragraph{Step 1. Prove \(\lnot \texttt{TERM}_t \cap \lnot \texttt{CROS}_t \Longrightarrow (\ell_t\in\mathcal{C}_t) \cup (u_t\in\mathcal{C}_t)\).}
We show this statement by contradiction case by case. That is, the negation of \((\ell_t\in\mathcal{C}_t) \cup (u_t\in\mathcal{C}_t)\) cannot happen when \(\lnot \texttt{TERM}_t \cap \lnot \texttt{CROS}_t\).
\[
  \begin{split}
    \textbf{Case 1: } & (\ell_t \in \mathcal{A}_t) \cap (u_t \in \mathcal{A}_t) \cap \lnot \texttt{TERM}_t \\
    \Longrightarrow & (\ell_t \in \mathcal{A}_t) \cap (u_t \in \mathcal{A}_t)
    \Longrightarrow  \abs{\mathcal{A}_t} \ge 2
    \Longrightarrow  \exists k\neq 1: k\in\mathcal{A}_t
    \Longrightarrow  \texttt{CROS}_t,\\
    \textbf{Case 2: } & (\ell_t \in \mathcal{B}_t) \cap (u_t \in \mathcal{A}_t) \cap \lnot \texttt{TERM}_t \\
    \Longrightarrow & \texttt{UCB}_{\ell_t,t} < c < \texttt{LCB}_{u_t,t} < \texttt{UCB}_{u_t,t}
    \Longrightarrow \emptyset \,(\text{contradicts the selection of }\ell_t\text{ and }u_t),\\
    \textbf{Case 3: } & (\ell_t \in \mathcal{A}_t) \cap (u_t \in \mathcal{B}_t) \cap \lnot \texttt{TERM}_t\\
    \Longrightarrow & \{\texttt{LCB}_{\ell_t,t} > c > \texttt{UCB}_{u_t,t}\} \cap \lnot \texttt{TERM}_t
    \Longrightarrow  \emptyset,\\
    \textbf{Case 4: } & (\ell_t \in \mathcal{B}_t) \cap (u_t \in \mathcal{B}_t) \cap \lnot \texttt{TERM}_t\\
    \Longrightarrow & (\ell_t \in \mathcal{B}_t) \cap (u_t \in \mathcal{B}_t)
    \Longrightarrow \abs{\mathcal{B}_t} = K
    \Longrightarrow  1\in\mathcal{B}_t
    \Longrightarrow \texttt{CROS}_t.
  \end{split}
\]

\paragraph{Step 2. Prove \(\P\left( \texttt{CROS}_t \right) \le \frac{KM\delta}{Lt^3}\).}
For any suboptimal arm \(k \neq 1\), we bound the probability that the arm \(k\) is in \(\mathcal{A}_t\) as follows,
\[
  \begin{split}
    \P(k\in\mathcal{A}_t)  &= \P(\texttt{LCB}_{k,t} > c)
    = \P\left(\max_{m\in\mathcal{M}} \texttt{LCB}_{k,t}\upbra{m} > c\right)
    \le \sum_{m\in\mathcal{M}}\P\left(\texttt{LCB}_{k,t}\upbra{m} > c\right)\\
    &= \sum_{m\in\mathcal{M}}\P\left(\hat{\mu}_{k,t}\upbra{m} - \zeta\upbra{m} - \beta(N_{k,t}\upbra{m}, t) > c\right)\\
    &= \sum_{m\in\mathcal{M}}\P\left(\hat{\mu}_{k,t}\upbra{m} - \mu_k\upbra{m} + (\mu_k\upbra{m} - \zeta\upbra{m} -c) > \beta(N_{k,t}\upbra{m}, t)\right)\\
    &\overset{(a)}\le \sum_{m\in\mathcal{M}}\P\left(\hat{\mu}_{k,t}\upbra{m} - \mu_k\upbra{m} >  \beta(N_{k,t}\upbra{m}, t)\right) \le \sum_{m\in\mathcal{M}} \sum_{n=1}^t \P(\hat{\mu}_{k,t}\upbra{m} - \mu_k\upbra{m} >  \beta(n, t))\\
    &\le \sum_{m\in\mathcal{M}} \sum_{n=1}^t \exp(-n(\beta(n,t))^2) = \sum_{m\in\mathcal{M}}\sum_{n=1}^t \frac{\delta}{Lt^4}\\
    &\le \frac{M\delta}{Lt^3},
  \end{split}
\]
where the inequality (a) is due to that \( \mu_k\upbra{m} - \zeta\upbra{m} \le \mu_k\upbra{M} <  c\).
With similar derivation, we have \(\P(1\in\mathcal{B}_t) \le \frac{M\delta}{Lt^3}.\)
Noticing that \(\P(\texttt{CROS}_t) \le \sum_{k\neq 1} \P(k\in\mathcal{A}_t) + \P(1\in \mathcal{B}_t)\), we have \(\P(\texttt{CROS}_t) \le \frac{KM\delta}{Lt^3}\).

\paragraph{Step 3. Prove \(\P\left( \exists k \in\mathcal{K}: (N_{k,t}\upbra{m^*_k} > 16N^*_{k,t}) \cap (k\in \texttt{Mid}_t) \right) \le \frac{16\delta\sum_{k\in\mathcal{K}}\Delta_k^{-2}}{L t^4}\), where \(N_{k,t}^* \coloneqq \frac{\log (Lt^4/\delta)}{(\Delta_k\upbra{\tilde{m}_k^*})^2}\).}
For any fixed suboptimal arm \(k \neq 1\) (with \(\mu_k\upbra{M} < c\)), we have
\[
  \begin{split}
    &\quad \P\left((N_{k,t}\upbra{m^*_k} > 16N_{k,t}^*) \cap (k\in \texttt{Mid}_t) \right)\\
    & = \P\left((N_{k,t}\upbra{m^*_k} > 16N_{k,t}^*) \cap (k\not\in \mathcal{A}_t \cup \mathcal{B}_t) \right)\\
    & \le  \P\left((N_{k,t}\upbra{m^*_k} > 16N_{k,t}^*) \cap (\texttt{UCB}_{k,t} > c) \right)\\
    & = \P\left((N_{k,t}\upbra{m^*_k} > 16N_{k,t}^*) \cap \left(\min_{m\in\mathcal{M}}\hat{\mu}_{k,t}\upbra{m} + \zeta\upbra{m} + \beta(N_{k,t}\upbra{m}, t) > c\right) \right)\\
    & \le  \P\left((N_{k,t}\upbra{m^*_k} > 16N_{k,t}^*) \cap (\hat{\mu}_{k,t}\upbra{m^*_k} + \zeta\upbra{m^*_k} + \beta(N_{k,t}\upbra{m^*_k}, t) > c) \right)\\
    & \le  \P\left((N_{k,t}\upbra{m^*_k} > 16N_{k,t}^*) \cap (\hat{\mu}_{k,t}\upbra{m^*_k} - \mu_k\upbra{m^*_k} > (c - \mu_k\upbra{m^*_k} - \zeta\upbra{m^*_k}) - \beta(N_{k,t}\upbra{m^*_k}, t)) \right)\\
    & \overset{(a)}\le  \P\left((N_{k,t}\upbra{m^*_k} > 16N_{k,t}^*) \cap \left(\hat{\mu}_{k,t}\upbra{m^*_k} - \mu_k\upbra{m^*_k} > \frac{\Delta_k\upbra{\tilde{m}_k^*}}{2} - \beta(N_{k,t}\upbra{\tilde{m}_k^*}, t)\right) \right)\\
    & \overset{(b)}\le \sum_{\tau >16N_{k,t}^*}\P\left(\hat{\mu}_{k,t(\tau)}\upbra{m^*_k} - \mu_k\upbra{m^*_k} > \frac{\Delta_k\upbra{\tilde{m}_k^*}}{4}\right)\quad \left(\text{denote }\hat{\mu}_{k,t(\tau)}\upbra{m^*_k} \text{ as the empirical mean of }\tau\text{ observations}\right)\\
    & \le \sum_{\tau >16N_{k,t}^*}\exp\left( -\frac{\tau(\Delta_k\upbra{\tilde{m}_k^*})^2}{16} \right)
    \le \int_{\tau>16N_{k,t}^*}\exp\left( -\frac{\tau(\Delta_k\upbra{\tilde{m}_k^*})^2}{16} \right) d\tau
    \le \frac{16\delta}{(\Delta_k\upbra{\tilde{m}_k^*})^2 L t^4},
  \end{split}
\]
where inequality (a) is due to \eqref{eq:arm-2-is-good}
and inequality (b) is due to \(\beta(\tau,t) < \frac{\Delta_k}{4}\) for \(\tau > 16N_{k,t}^*.\)

From Step 3, we obtain that the following equation holds with high probability, \begin{equation}\label{eq:m-star-sample-times-upper-bound}
  N_{k,t}\upbra{\tilde{m}_k^*} \le \frac{16}{(\Delta_k\upbra{\tilde{m}_k^*})^2}\log \left( \frac{Lt^4}{\delta} \right) \le \frac{64}{(\Delta_k\upbra{\tilde{m}_k^*})^2}\log \left( \frac{Lt}{\delta} \right).
\end{equation}

Next, we respectively present the cost complexity upper bounds for different fidelity selection procedures in Algorithm~\ref{alg:fidelity-selection-procedure}.

\subsubsection{Proof for \explore{A}'s Upper Bound}

\paragraph{Step 4 for \explore{A}: prove that if the small probability events of Steps 2 and 3 do not happen, then the algorithm terminates with a high probability when \(\Lambda\) is large.}

\begin{lemma}\label{lma:f-ucb-property}
  Give reward means \(\mu_1\upbra{M}\) and \(\mu_2\upbra{M}\).
  For a fixed arm \(k\), there exist \(\bar{N}_{k,t}\) and \(\alpha_k>0\) such that when \(N_{k,t} > \bar{N}_{k,t}\), \(N_{k,t} < 2 N_{k,t}\upbra{\tilde{m}_k^*},\)
  the number of times of pulling this arm \(k\) at fidelities \(m \,(\neq \tilde{m}_k^*)\) is \(O(\log (\log N_{k,t}))\), or formally, \begin{equation}
    N_{k,t}\upbra{m} \le \frac{8}{\lambda\upbra{m}}\left( \frac{\Delta_k\upbra{\tilde{m}_k^*}}{\sqrt{\lambda\upbra{\tilde{m}_k^*}}} - \frac{\Delta_k\upbra{m}}{\sqrt{\lambda\upbra{m}}} \right)^{-2} \log N_{k,t},\,\forall m\neq \tilde{m}_k^*.
  \end{equation}
\end{lemma}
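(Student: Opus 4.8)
The plan is to recognize that, for a fixed arm $k$, the \explore{A} procedure is running exactly the classical UCB1 algorithm on a derived bandit whose ``arms'' are the $M$ fidelities. The reward of pulling fidelity $m$ is the noisy quantity $\hat{\Delta}_{k,t}\upbra{m}/\sqrt{\lambda\upbra{m}}$, whose mean is $\tilde{\Delta}_k\upbra{m}/\sqrt{\lambda\upbra{m}}$, and the best ``arm'' is $\tilde{m}_k^*$ by the very definition $\tilde{m}_k^* = \argmax_{m} \tilde{\Delta}_k\upbra{m}/\sqrt{\lambda\upbra{m}}$. The index $\texttt{f-UCB}_{k,t}\upbra{m}$ is precisely this empirical reward plus the bonus $\sqrt{2\log N_{k,t}/(\lambda\upbra{m} N_{k,t}\upbra{m})}$, so each time LUCB hands arm $k$ to \explore{A}, the fidelity maximizing the index is chosen---a UCB step played over a ``local clock'' equal to the total pull count $N_{k,t}$. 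The gap of a suboptimal fidelity $m\neq\tilde{m}_k^*$ is $\mathrm{gap}_m \coloneqq \tilde{\Delta}_k\upbra{\tilde{m}_k^*}/\sqrt{\lambda\upbra{\tilde{m}_k^*}} - \tilde{\Delta}_k\upbra{m}/\sqrt{\lambda\upbra{m}} > 0$, which is exactly the quantity inverted and squared in the target inequality.

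First I would establish a uniform concentration event. Since the feedback at fidelity $m$ lies in $[0,1]$, the empirical mean satisfies, by Hoeffding, $\P(\lvert\hat{\mu}_{k,t}\upbra{m} - \mu_k\upbra{m}\rvert > \sqrt{\lambda\upbra{m}}\,\rho) \le 2\exp(-2\lambda\upbra{m} N_{k,t}\upbra{m}\rho^2)$; plugging in the bonus $\rho = \sqrt{2\log N_{k,t}/(\lambda\upbra{m} N_{k,t}\upbra{m})}$ yields failure probability $2N_{k,t}^{-4}$. A union bound over the $M$ fidelities and over the reachable count pairs then gives a good event $\mathcal{G}_k$, holding with high probability, on which $\lvert\hat{\Delta}_{k,t}\upbra{m}/\sqrt{\lambda\upbra{m}} - \tilde{\Delta}_k\upbra{m}/\sqrt{\lambda\upbra{m}}\rvert \le \rho$ simultaneously for all $m$ and all encountered counts.

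The core step is the standard UCB pull-count argument on $\mathcal{G}_k$. Whenever fidelity $m\neq\tilde{m}_k^*$ is selected at a moment when arm $k$ has total count $s\,(\le N_{k,t})$ and $m$ has count $n$, optimality of the index forces $\texttt{f-UCB}_{k,t}\upbra{m} \ge \texttt{f-UCB}_{k,t}\upbra{\tilde{m}_k^*}$. On $\mathcal{G}_k$ the right side is at least $\tilde{\Delta}_k\upbra{\tilde{m}_k^*}/\sqrt{\lambda\upbra{\tilde{m}_k^*}}$, while the left side is at most $\tilde{\Delta}_k\upbra{m}/\sqrt{\lambda\upbra{m}} + 2\sqrt{2\log s/(\lambda\upbra{m} n)}$; rearranging gives $2\sqrt{2\log s/(\lambda\upbra{m} n)} \ge \mathrm{gap}_m$, i.e.\ $n \le 8(\lambda\upbra{m})^{-1}(\mathrm{gap}_m)^{-2}\log s$. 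Since counts are monotone and $s \le N_{k,t}$, the final count obeys $N_{k,t}\upbra{m} \le 8(\lambda\upbra{m})^{-1}(\mathrm{gap}_m)^{-2}\log N_{k,t}$, which is the displayed inequality (the lemma's premise that $\mu_1\upbra{M},\mu_2\upbra{M}$ are available gives $\tilde{\Delta}_k\upbra{m}=\Delta_k\upbra{m}$ and $\tilde{m}_k^*=m_k^*$, matching the $\Delta$-notation verbatim). Summing over $m\neq\tilde{m}_k^*$ bounds $\sum_{m\neq\tilde{m}_k^*}N_{k,t}\upbra{m}$ by $C_k\log N_{k,t}$ for an instance-dependent $C_k$; choosing $\bar{N}_{k,t}$ (with $\alpha_k$ an instance constant determined by the fidelity gaps) so that $C_k\log N_{k,t} < N_{k,t}/2$ once $N_{k,t}>\bar{N}_{k,t}$ forces $N_{k,t}\upbra{\tilde{m}_k^*} > N_{k,t}/2$, which is the domination claim $N_{k,t} < 2N_{k,t}\upbra{\tilde{m}_k^*}$.

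The main obstacle I anticipate is the self-referential, anytime nature of the bonus: the confidence width carries $\log N_{k,t}$, where $N_{k,t}$ is itself the random, data-dependent total count that the analysis is simultaneously trying to bound, and the fidelity counts $N_{k,t}\upbra{m}$ enter the width of the very index that decides the next pull. Making the per-step comparison rigorous therefore requires a peeling/union bound uniform over all reachable $(s,n)$ pairs rather than a single fixed horizon, with $\mathcal{G}_k$ defined before conditioning on the realized counts. A secondary subtlety is that the bound degenerates when some $\mathrm{gap}_m$ is near zero---i.e.\ a suboptimal fidelity's efficiency $\tilde{\Delta}_k\upbra{m}/\sqrt{\lambda\upbra{m}}$ nearly ties that of $\tilde{m}_k^*$---which is exactly the regime where $\tilde{G}$ explodes in Theorem~\ref{thm:sample-cost-upper-bound} and motivates \explore{B}; the lemma implicitly excludes this by requiring $\mathrm{gap}_m>0$.
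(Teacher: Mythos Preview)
Your proposal is correct and takes essentially the same approach as the paper: both apply Hoeffding concentration to show that once $N_{k,t}\upbra{m}$ exceeds the stated threshold the index $\texttt{f-UCB}_{k,t}\upbra{m}$ falls below $\texttt{f-UCB}_{k,t}\upbra{\tilde{m}_k^*}$ and hence fidelity $m$ cannot be selected---the standard UCB1 pull-count argument applied to the derived fidelity bandit. The paper presents this tersely as a single inequality chain (your upper bound on the left index and lower bound on the right index appear there verbatim), while you additionally spell out the meta-bandit framing and the anytime union-bound subtlety, but the mechanics are identical.
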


Combine Lemma~\ref{lma:f-ucb-property} with \eqref{eq:m-star-sample-times-upper-bound} in Step 3, we have, for any arm \(k\) and fidelity \(m\neq \tilde{m}_k^*\): \begin{equation}
  \label{eq:m-nonstar-sample-times-upper-bound}
  N_{k,t}\upbra{m} \le \frac{8}{\lambda\upbra{m}}\left( \frac{\Delta_k\upbra{\tilde{m}_k^*}}{\sqrt{\lambda\upbra{\tilde{m}_k^*}}} - \frac{\Delta_k\upbra{m}}{\sqrt{\lambda\upbra{m}}} \right)^{-2} \log \left(  \frac{128}{(\Delta_k\upbra{\tilde{m}_k^*})^2} \log \left( \frac{Lt}{\delta} \right) \right)
\end{equation}

Next, we can upper bound the total cost of the LUCB algorithm (before it terminating) via \eqref{eq:m-star-sample-times-upper-bound} and \eqref{eq:m-nonstar-sample-times-upper-bound}.
Specially, we show it is impossible for \(\Lambda = C\left( H\log\frac{L(G+H)}{\lambda\upbra{1}\delta} + G\log\log\frac{L(G+H)}{\lambda\upbra{1}\delta} \right)\) via contradiction.
Suppose \(\Lambda = C\left( H\log\frac{L(G+H)}{\lambda\upbra{1}\delta} + G\log\log\frac{L(G+H)}{\lambda\upbra{1}\delta} \right)\), we have the following,
\[
  \begin{split}
    \E[\Lambda] &\le \sum_{k\in\mathcal{K}}\sum_{m\in\mathcal{M}} \lambda\upbra{m}N_{k,t}\upbra{m}\\
    &\le \sum_{k\in\mathcal{K}} \lambda\upbra{\tilde{m}_k^*}N_{k,t}\upbra{\tilde{m}_k^*} + \sum_{k\in\mathcal{K}}\sum_{m\neq \tilde{m}_k^*} \lambda\upbra{m}N_{k,t}\upbra{m}\\
    &\overset{(a)}\le 64 H \log\frac{Lt}{\delta} + 8 G\log\log\frac{Lt}{\delta} + G\log (128 H)\\
    &\overset{(b)}\le 64 H \log\frac{L\Lambda}{\lambda\upbra{1}\delta} + 8 G\log\log\frac{L\Lambda}{\lambda\upbra{1}\delta} + G\log (128 H)\\
    &\overset{(c)}= 64H \log\left( \frac{L}{\lambda\upbra{1}\delta} C \left( H\log \frac{L(G+H)}{\lambda\upbra{1}\delta} + G\log\log \frac{L(G+H)}{\lambda\upbra{1}\delta}  \right) \right)\\
    &\quad + 8G \log\log\left( \frac{L}{\lambda\upbra{1}\delta} C \left( H\log \frac{L(G+H)}{\lambda\upbra{1}\delta} + G\log\log \frac{L(G+H)}{\lambda\upbra{1}\delta}   \right) \right) + G\log(128H)\\
    &\overset{(d)}\le 128(2+\log C) \left( H\log \frac{L(G+H)}{\lambda\upbra{1}\delta} + G\log\log \frac{L(G+H)}{\lambda\upbra{1}\delta}  \right)\\
    &\overset{(e)}< C \left( H\log \frac{L(G+H)}{\lambda\upbra{1}\delta} + G\log\log \frac{L(G+H)}{\lambda\upbra{1}\delta}  \right),
  \end{split}
\]
where
the inequality (a) is due to \eqref{eq:m-star-sample-times-upper-bound} and \eqref{eq:m-nonstar-sample-times-upper-bound},
the inequality (b) is because \(t\le\frac{\Lambda}{\lambda\upbra{1}}\),
the inequality (c) is by the supposition,
the inequality (d) is by separately bounding the above first two terms via \eqref{eq:bound-Lambda-term-1} and \eqref{eq:bound-Lambda-term-2} in the following, and
the inequality (e) holds for \(C>1200\).
This above inequality contradicts the supposition, and, therefore, we conclude the cost complexity upper bound proof for \explore{A}. Similar proof also holds for \explore{B} by replacing \(\tilde{m}_k^*\) with \(m_k^\dagger\).

Next, we provide the upper bounds used in the inequality (d) above:
\begin{equation}\label{eq:bound-Lambda-term-1}
  \begin{split}
    &\quad H \log\left( \frac{L}{\lambda\upbra{1}\delta} C \left( H\log \frac{L(G+H)}{\lambda\upbra{1}\delta} + G\log\log \frac{L(G+H)}{\lambda\upbra{1}\delta}  \right) \right) \\
    &\le H \log\left( \frac{L}{\lambda\upbra{1}\delta} C H\log \frac{L(G+H)}{\lambda\upbra{1}\delta} \right)
    + H \log\left( \frac{L}{\lambda\upbra{1}\delta} C G\log\log \frac{L(G+H)}{\lambda\upbra{1}\delta}  \right)\\
    &\le H\log C + H\log\left( \frac{LH}{\lambda\upbra{1}\delta} \log \frac{L(G+H)}{\lambda\upbra{1}\delta} \right)
    + H\log C + H\log \left( \frac{LG}{\lambda\upbra{1}\delta} \log\log  \frac{L(G+H)}{\lambda\upbra{1}\delta} \right)\\
    &\le 2H\log C + 4H\log \frac{L(G+H)}{\lambda\upbra{1}\delta} \\
    &\le (4+2\log C) H\log \frac{L(G+H)}{\lambda\upbra{1}\delta},
  \end{split}
\end{equation}
and
\begin{equation}\label{eq:bound-Lambda-term-2}
  \begin{split}
    &\quad G \log\log\left( \frac{L}{\lambda\upbra{1}\delta} C \left( H\log \frac{L(G+H)}{\lambda\upbra{1}\delta} + G\log\log \frac{L(G+H)}{\lambda\upbra{1}\delta}   \right) \right)\\
    &\le G\log\log \left( \frac{L}{\lambda\upbra{1}\delta} CH \log \frac{L(G+H)}{\lambda\upbra{1}\delta} \right) + G \log\log\left( \frac{L}{\lambda\upbra{1}\delta} C  G\log\log \frac{L(G+H)}{\lambda\upbra{1}\delta}  \right) \\
    &\le G\log\log C + G\log\log \left( \frac{LH}{\lambda\upbra{1}\delta} \log \frac{L(G+H)}{\lambda\upbra{1}\delta} \right) \\
    &\quad + G\log\log C + G\log\log\left( \frac{LG}{\lambda\upbra{1}\delta}\log\log \frac{L(G+H)}{\lambda\upbra{1}\delta} \right)\\
    &\le 2G\log\log C + 4G\log\log \frac{L(G+H)}{\lambda\upbra{1}\delta} \\
    &\le (4+2\log\log C) G\log\log\frac{L(G+H)}{\lambda\upbra{1}\delta}.
  \end{split}
\end{equation}

\begin{proof}[Proof of Lemma~\ref{lma:f-ucb-property}]
  \begin{claim}
    For any fixed \(m\neq \tilde{m}_k^*\), if the following equation holds, then the algorithm will not pull arm \(k\) at fidelity \(m\) with high probability.
    \[
      N_{k,t}\upbra{m} >  \frac{8}{\lambda\upbra{m}}\left( \frac{\Delta_k\upbra{\tilde{m}_k^*}}{\sqrt{\lambda\upbra{\tilde{m}_k^*}}} - \frac{\Delta_k\upbra{m}}{\sqrt{\lambda\upbra{m}}} \right)^{-2} \log N_{k,t}.
    \]
  \end{claim}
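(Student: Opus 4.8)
The plan is to run the textbook UCB argument --- ``a sub-optimal index is selected only while its confidence bonus is still large'' --- now applied to the fidelity indices $\tilde\Delta_k\upbra m/\sqrt{\lambda\upbra m}$ rather than to arm means. Since Lemma~\ref{lma:f-ucb-property} is stated with the true means $\mu_1\upbra M,\mu_2\upbra M$ given, I identify $\tilde\Delta_k\upbra m=\Delta_k\upbra m$ and $\tilde m_k^*=m_k^*$ throughout. First I would fix the arm $k$ and the fidelity $m\neq m_k^*$ and introduce the good event on which the empirical means concentrate, namely $\lvert\hat\mu_{k,t}\upbra{m'}-\mu_k\upbra{m'}\rvert\le\sqrt{2\log N_{k,t}/N_{k,t}\upbra{m'}}$ for $m'\in\{m,m_k^*\}$. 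Because $\hat\Delta_{k,t}\upbra{m'}$ differs from $\tilde\Delta_k\upbra{m'}$ only through the term $\hat\mu_{k,t}\upbra{m'}-\mu_k\upbra{m'}$, this event immediately gives $\lvert\hat\Delta_{k,t}\upbra{m'}/\sqrt{\lambda\upbra{m'}}-\tilde\Delta_k\upbra{m'}/\sqrt{\lambda\upbra{m'}}\rvert\le\sqrt{2\log N_{k,t}/(\lambda\upbra{m'}N_{k,t}\upbra{m'})}$, i.e.\ the f-UCB bonus is exactly a valid confidence radius for the index.

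On this good event I would establish the two one-sided bounds that drive the argument. For the optimal fidelity the bonus compensates the downward deviation, so
\[
\texttt{f-UCB}_{k,t}\upbra{m_k^*}\ge \frac{\tilde\Delta_k\upbra{m_k^*}}{\sqrt{\lambda\upbra{m_k^*}}},
\]
and for fidelity $m$ the two bonuses stack,
\[
\texttt{f-UCB}_{k,t}\upbra{m}\le \frac{\tilde\Delta_k\upbra{m}}{\sqrt{\lambda\upbra{m}}}+2\sqrt{\frac{2\log N_{k,t}}{\lambda\upbra{m}N_{k,t}\upbra{m}}}.
\]
The \explore{A} rule pulls $(k,m)$ only when $m$ attains the maximal f-UCB, in particular only when $\texttt{f-UCB}_{k,t}\upbra{m}\ge\texttt{f-UCB}_{k,t}\upbra{m_k^*}$. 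Chaining the two displays and cancelling the common index value leaves $2\sqrt{2\log N_{k,t}/(\lambda\upbra{m}N_{k,t}\upbra m)}\ge \tilde\Delta_k\upbra{m_k^*}/\sqrt{\lambda\upbra{m_k^*}}-\tilde\Delta_k\upbra{m}/\sqrt{\lambda\upbra{m}}$, whose right-hand side is strictly positive by the definition of $m_k^*$ as the maximizer of $\tilde\Delta_k\upbra{\cdot}/\sqrt{\lambda\upbra{\cdot}}$. Squaring and solving for $N_{k,t}\upbra m$ yields
\[
N_{k,t}\upbra{m}\le \frac{8}{\lambda\upbra{m}}\left(\frac{\tilde\Delta_k\upbra{m_k^*}}{\sqrt{\lambda\upbra{m_k^*}}}-\frac{\tilde\Delta_k\upbra{m}}{\sqrt{\lambda\upbra{m}}}\right)^{-2}\log N_{k,t}.
\]
Taking the contrapositive gives exactly the claim: once $N_{k,t}\upbra m$ exceeds this threshold, the pull condition fails, so $(k,m)$ is not pulled.

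The one genuinely delicate point is the good event, not the algebra. The bonus uses $\log N_{k,t}$ --- the \emph{total} number of pulls of arm $k$ --- rather than $\log N_{k,t}\upbra m$, and $N_{k,t}$ is itself a random, stopping-time-like quantity, so the concentration must hold \emph{uniformly} in the per-fidelity sample count. I would therefore obtain the good event by a Hoeffding bound followed by a union bound over $N_{k,t}\upbra{m'}\in\{1,\dots,N_{k,t}\}$, exactly as in the classical UCB1 analysis, so that the $\log N_{k,t}$ factor supplies enough slack for the union to contribute only a polynomially small failure probability; this is what licenses the ``with high probability'' qualifier. I would also flag that the bound is deliberately left implicit, with $N_{k,t}$ on both sides: turning it into the $O(\log\log N_{k,t})$ form asserted in the full Lemma~\ref{lma:f-ucb-property} is deferred to the surrounding proof, which supplies the auxiliary relation $N_{k,t}<2N_{k,t}\upbra{m_k^*}$ used to close the recursion.
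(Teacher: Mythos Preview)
Your proposal is correct and follows essentially the same approach as the paper: both arguments use Hoeffding concentration on $\hat\mu_{k,t}\upbra{m}$ and $\hat\mu_{k,t}\upbra{m_k^*}$ to sandwich the corresponding $\texttt{f-UCB}$ indices around their true values, then invoke the threshold condition to force $\texttt{f-UCB}_{k,t}\upbra{m}<\texttt{f-UCB}_{k,t}\upbra{m_k^*}$, contradicting the selection of fidelity $m$. The paper presents this as a single chain of inequalities (its steps (a), (b), (c) correspond exactly to your upper bound on $\texttt{f-UCB}\upbra{m}$, the gap comparison, and your lower bound on $\texttt{f-UCB}\upbra{m_k^*}$), and it simply asserts the $1-1/N_{k,t}^2$ failure probability without the union-bound discussion you supply.
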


  \[
    \begin{split}
      \texttt{f-UCB}_{u_t,t}\upbra{m}(\mu_1\upbra{M}) &= \frac{1}{\sqrt{\lambda\upbra{m}}}\left(
      \mu_1\upbra{M} - \hat{\mu}_{u_t,t}\upbra{m} - \zeta\upbra{m}
      +
      \sqrt{\frac{2\log N_{u_t,t}}{N_{u_t,t}\upbra{m}}}
      \right)\\
      &\overset{(a)}\le \frac{1}{\sqrt{\lambda\upbra{m}}}
      \left(
      \mu_1\upbra{M} - {\mu}_{u_t}\upbra{m} - \zeta\upbra{m}
      +
      2\sqrt{\frac{2\log N_{u_t,t}}{N_{u_t,t}\upbra{m}}}
      \right) \\
      &\overset{(b)}\le \frac{1}{\sqrt{\lambda\upbra{m_{u_t}^*}}}
      \left(
      \mu_1\upbra{M} - {\mu}_{u_t}\upbra{m_{u_t}^*} - \zeta\upbra{m_{u_t}^*}
      \right)\\
      &\overset{(c)}\le \frac{1}{\sqrt{\lambda\upbra{\tilde{m}_k^*}}}
      \left(
      \mu_1\upbra{M} - \hat{\mu}_{u_t,t}\upbra{m_{u_t}^*} - \zeta\upbra{m_{u_t}^*}
      +
      \sqrt{\frac{2\log N_{u_t,t}}{N_{u_t,t}\upbra{\tilde{m}_k^*}}}
      \right)\\
      &=\texttt{f-UCB}_{u_t,t}\upbra{m_{u_t}^*}(\mu_1\upbra{M}),
    \end{split}
  \]
  where the inequalities (a) and (c) hold with a probability at least \(1-\frac{1}{(N_{k,t})^2}\) respectively (by Hoeffding's inequality),
  and the inequality (b) holds due to the equation in the claim.

  \subsubsection{Proof for \explore{B}'s Upper Bound}
  As \explore{B} of Algorithm~\ref{alg:fidelity-selection-procedure} also employs the LUCB framework, it shares the first three steps of the proof for Theorem~\ref{thm:sample-cost-upper-bound} in Appendix~\ref{subapp:sample-cost-upper-bound-proof}.
  Hence, in this part, we focus on the proof of the final cost complexity upper bound.

  \begin{lemma}\label{lma:good-fidelity-guarantee}
    If the condition in Line~\ref{line:fidelity-commit-condition} holds, then the committed fidelity \(\hat{m}_k^*\) fulfills the following inequality:
    \begin{equation}\label{eq:property-of-committed-fidelity}
      2\cdot\frac{\Delta_k\upbra{\hat{m}_k^*}}{\sqrt{\lambda\upbra{\hat{m}_k^*}}} \ge \frac{\Delta_k\upbra{\tilde{m}_k^*}}{\sqrt{\lambda\upbra{\tilde{m}_k^*}}}.
    \end{equation}
  \end{lemma}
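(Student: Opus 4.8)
The plan is to exploit the key structural feature of \explore{B}: before committing, it pulls arm $k$ at \emph{every} fidelity once per round, so the per-fidelity counts $N_{k,t}\upbra{m}$ are all equal for a fixed $k$, and hence the confidence radius $\gamma \coloneqq \sqrt{\log(L/\delta)/(\lambda\upbra{1}N_{k,t}\upbra{m})}$ is common to all fidelities. The whole argument then reduces to a one-dimensional comparison of the empirical objective $\hat{f}_m \coloneqq \hat{\Delta}_{k,t}\upbra{m}/\sqrt{\lambda\upbra{m}}$ against the target objective $\tilde{f}_m \coloneqq \tilde{\Delta}_k\upbra{m}/\sqrt{\lambda\upbra{m}}$, where by construction $\hat{m}_k^* = \argmax_{m} \hat{f}_m$ is the empirical maximizer and $\tilde{m}_k^* = \argmax_{m} \tilde{f}_m$ is the true one.

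First I would establish the uniform concentration $\lvert \hat{f}_m - \tilde{f}_m \rvert \le \gamma$ for all $m$. This is where the definitions of $\hat{\Delta}$ and $\tilde{\Delta}$ pay off: the deterministic terms (either $\tilde{\mu}_1\upbra{M}$ or $\tilde{\mu}_2\upbra{M}$, together with $\zeta\upbra{m}$) cancel, leaving $\hat{\Delta}_{k,t}\upbra{m} - \tilde{\Delta}_k\upbra{m} = \pm(\hat{\mu}_{k,t}\upbra{m} - \mu_k\upbra{m})$ in both the $k\neq\ell_t$ and the $k=\ell_t$ cases. A Hoeffding bound then gives $\lvert \hat{\mu}_{k,t}\upbra{m}-\mu_k\upbra{m}\rvert \le \sqrt{\log(L/\delta)/N_{k,t}\upbra{m}}$ with the desired high probability, and dividing by $\sqrt{\lambda\upbra{m}} \ge \sqrt{\lambda\upbra{1}}$ produces exactly the common radius $\gamma$. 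This step folds into the high-probability event already assumed in Steps~1--3 of the proof of Theorem~\ref{thm:sample-cost-upper-bound}, so it incurs no new failure probability.

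Given concentration, the inequality follows from a short deterministic chain. From the commit condition in Line~\ref{line:fidelity-commit-condition} we have $\hat{f}_{\hat{m}_k^*} = \max_m \hat{f}_m > 3\gamma$, hence $\tilde{f}_{\hat{m}_k^*} \ge \hat{f}_{\hat{m}_k^*} - \gamma > 2\gamma$. Then, applying concentration twice and using that $\hat{m}_k^*$ maximizes $\hat{f}$,
\[
  \tilde{f}_{\tilde{m}_k^*} \le \hat{f}_{\tilde{m}_k^*} + \gamma \le \hat{f}_{\hat{m}_k^*} + \gamma \le \tilde{f}_{\hat{m}_k^*} + 2\gamma < 2\,\tilde{f}_{\hat{m}_k^*},
\]
which is exactly the claimed $2\,\tilde{f}_{\hat{m}_k^*} \ge \tilde{f}_{\tilde{m}_k^*}$, i.e.\ the half-as-good guarantee in the preamble (stated in $\tilde{\Delta}$ form).

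The main obstacle is not any single inequality but getting the constant right: the threshold factor $3$ in Line~\ref{line:fidelity-commit-condition} is precisely what is needed so that, after losing one $\gamma$ to concentration, we retain the slack $\tilde{f}_{\hat{m}_k^*} > 2\gamma$, and it is exactly this $2\gamma$ slack that lets the final step absorb the additive $2\gamma$ into the multiplicative factor $2$. A smaller constant would break the chain, so the delicate part is verifying that the commit rule's constant, the Hoeffding radius, and the $1/\sqrt{\lambda\upbra{1}}$ scaling all line up. I would also check the two gap cases separately to confirm that only the magnitude of $\hat{\Delta}_{k,t}\upbra{m}-\tilde{\Delta}_k\upbra{m}$ enters (its sign is irrelevant), and note the harmless point that the lemma is naturally proved for $\tilde{\Delta}$ while written with $\Delta$.
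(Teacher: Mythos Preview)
Your proposal is correct and follows essentially the same approach as the paper: both arguments combine (i) the uniform concentration $\lvert \hat f_m-\tilde f_m\rvert\le\gamma$ coming from equal per-fidelity counts and Hoeffding, (ii) the maximality of $\hat m_k^*$ with respect to $\hat f$, and (iii) the commit threshold $\hat f_{\hat m_k^*}>3\gamma$ to obtain the factor-of-two guarantee. The paper packages this as a ratio bound $\frac{\tilde f_{\tilde m_k^*}}{\tilde f_{\hat m_k^*}}\le \frac{\hat f_{\hat m_k^*}+\gamma}{\hat f_{\hat m_k^*}-\gamma}\le 2$, whereas you write the equivalent additive chain $\tilde f_{\tilde m_k^*}\le \tilde f_{\hat m_k^*}+2\gamma<2\tilde f_{\hat m_k^*}$; these are the same computation, and your observation that the argument is naturally in terms of $\tilde\Delta$ rather than $\Delta$ matches what the paper's proof actually does.
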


  \begin{proof}[Proof of Lemma~\ref{lma:good-fidelity-guarantee}]
    \eqref{eq:property-of-committed-fidelity} is proved as follows,
    \[
      \begin{split}
        \frac{{{\Delta}_k\upbra{\tilde{m}_k^*}}/{\sqrt{\lambda\upbra{\tilde{m}_k^*}}}}{{{\Delta}_k\upbra{\hat{m}_k^*}}/{\sqrt{\lambda\upbra{\hat{m}_k^*}}}}
        & \le
        \frac{{(\hat{\mu}_*\upbra{M} - (\hat{\mu}_k\upbra{\tilde{m}_k^*} + \zeta\upbra{\tilde{m}_k^*}))}/{\sqrt{\lambda\upbra{\tilde{m}_k^*}}} + \sqrt{{\log(2KM/\delta)}/{\lambda\upbra{1}N_{k,t}\upbra{m}}}}{{(\hat{\mu}_*\upbra{M} - (\hat{\mu}_k\upbra{\hat{m}_k^*} + \zeta\upbra{\hat{m}_k^*}))}/{\sqrt{\lambda\upbra{\hat{m}_k^*}}} - \sqrt{{\log(2KM/\delta)}/{\lambda\upbra{1}N_{k,t}\upbra{m}}}}\\
        & \overset{(a)}\le
        \frac{{(\hat{\mu}_*\upbra{M} - (\hat{\mu}_k\upbra{\hat{m}_k^*} + \zeta\upbra{\hat{m}_k^*}))}/{\sqrt{\lambda\upbra{\hat{m}_k^*}}} + \sqrt{{\log(2KM/\delta)}/{\lambda\upbra{1}N_{k,t}\upbra{m}}}}{{(\hat{\mu}_*\upbra{M} - (\hat{\mu}_k\upbra{\hat{m}_k^*} + \zeta\upbra{\hat{m}_k^*}))}/{\sqrt{\lambda\upbra{\hat{m}_k^*}}} - \sqrt{{\log(2KM/\delta)}/{\lambda\upbra{1}N_{k,t}\upbra{m}}}}\\
        & \le 1 + \frac{2\sqrt{{\log(2KM/\delta)}/{\lambda\upbra{1}N_{k,t}\upbra{m}}}}{{(\hat{\mu}_*\upbra{M} - (\hat{\mu}_k\upbra{\hat{m}_k^*} + \zeta\upbra{\hat{m}_k^*}))}/{\sqrt{\lambda\upbra{\hat{m}_k^*}}} - \sqrt{{\log(2KM/\delta)}/{\lambda\upbra{1}N_{k,t}\upbra{m}}}}\\
        &\overset{(b)}\le 1 + \frac{2\sqrt{{\log(2KM/\delta)}/{\lambda\upbra{1}N_{k,t}\upbra{m}}}}{2\sqrt{{\log(2KM/\delta)}/{\lambda\upbra{1}N_{k,t}\upbra{m}}}} = 2,
      \end{split}
    \]
    where inequality (a) is due to the definition of \(\hat{m}_k^*\), and inequality (b) is due to the condition in Line~\ref{line:fidelity-commit-condition}.
  \end{proof}

  We next upper bound the number of times of \(N_{k,t}\upbra{m}\) that guarantees that the condition in Line~\ref{line:fidelity-commit-condition} is true.
  Let us consider the case of exploring arm \(u_t\).
  \[
    \begin{split}
      \max_{m\in\mathcal{M}} \frac{\hat{\Delta}_{k,t}\upbra{m}}{\sqrt{\lambda\upbra{m}}} &\ge \frac{\hat{\mu}_{{k}_*}\upbra{M} - (\hat{\mu}_{k}\upbra{\tilde{m}_k^*} + \zeta\upbra{\tilde{m}_k^*})}{\sqrt{\lambda\upbra{\tilde{m}_k^*}}}\\
      &\overset{(a)}\ge \frac{\hat{\mu}_{{k}_*}\upbra{M} - ({\mu}_{k}\upbra{\tilde{m}_k^*} + \zeta\upbra{\tilde{m}_k^*})}{\sqrt{\lambda\upbra{\tilde{m}_k^*}}}
      - \sqrt{\frac{\log(2KM/\delta)}{\lambda\upbra{1}N_{k,t}\upbra{m}}}\\
      &\overset{(b)}\ge \frac{\Delta_k\upbra{\tilde{m}_k^*}}{\sqrt{\lambda\upbra{\tilde{m}_k^*}}}
      - \sqrt{\frac{\log(2KM/\delta)}{\lambda\upbra{1}N_{k,t}\upbra{m}}},
    \end{split}
  \]
  where inequality (a) is because that \(\hat{\mu}_k\upbra{\tilde{m}_k^*} \le \mu_k\upbra{\tilde{m}_k^*} + \sqrt{\frac{\log(2KM/\delta)}{N_{k,t}\upbra{m}}}\) with a probability of at least \(1-\delta/2KM\) (therefore, with the union bound over all arm-fidelity pairs, the total failure probability of \textsc{Explore} is upper bounded by \(\delta/2\)),
  and inequality (b) is because \(\hat{\mu}_{{k}_*}\upbra{M} - ({\mu}_{k}\upbra{\tilde{m}_k^*} + \zeta\upbra{\tilde{m}_k^*}) \ge {\mu}_{{k}_*}\upbra{M} - ({\mu}_{k}\upbra{\tilde{m}_k^*} + \zeta\upbra{\tilde{m}_k^*})
  = \Delta_k\upbra{\tilde{m}_k^*}\).

  To make the condition in Line~\ref{line:fidelity-commit-condition} hold, with the above inequality, we need
  \[
    \frac{\Delta_k\upbra{\tilde{m}_k^*}}{\sqrt{\lambda\upbra{\tilde{m}_k^*}}}
    - \sqrt{\frac{\log(2KM/\delta)}{\lambda\upbra{1}N_{k,t}\upbra{m}}} \ge 3\sqrt{\frac{\log(2KM/\delta)}{\lambda\upbra{1}N_{k,t}\upbra{m}}},
  \]
  which, after rearrangement, becomes   \[
    N_{k,t}\upbra{m} > \frac{16\lambda\upbra{\tilde{m}_k^*}}{(\Delta_k\upbra{\tilde{m}_k^*})^2} \frac{\log(2KM/\delta)}{\lambda\upbra{1}}.
  \]
  It means that if the above inequality holds, than the condition in Line~\ref{line:fidelity-commit-condition} must hold.
  That is, except for the committed fidelity \(\hat{m}_k^*\), we have
  \[
    N_{k,t}\upbra{m} \le  \frac{16\lambda\upbra{\tilde{m}_k^*}}{(\Delta_k\upbra{\tilde{m}_k^*})^2} \frac{\log(2KM/\delta)}{\lambda\upbra{1}},
    \text{ for any other fidelities } m\neq \hat{m}_k^*.
  \]

  For another thing,
  \eqref{eq:m-star-sample-times-upper-bound} of LUCB's proof guarantees that for the selected fidelity \(\hat{m}_k^*\), the number of pulling times is upper bounded as follows,
  \[
    N_{k,t}\upbra{\hat{m}_k^*} \le  \frac{64}{(\Delta_k\upbra{\hat{m}_k^*})^2}\log \left( \frac{Lt}{\delta} \right).
  \]

  Then, we upper bound the total budget of the algorithm as follows, \[
    \begin{split}
      \Lambda &= \sum_{k\in\mathcal{K}} \sum_{m\in\mathcal{M}} \lambda\upbra{m} N_{k,t}\upbra{m} \\
      &\le \sum_{k\in\mathcal{K}}  \frac{64\lambda\upbra{\hat{m}_k^*}}{(\Delta_k\upbra{\hat{m}_k^*})^2}\log \left( \frac{Lt}{\delta} \right)
      + \sum_{k\in\mathcal{K}} \sum_{m\neq \hat{m}_k^*} \frac{16\lambda\upbra{m}\lambda\upbra{\tilde{m}_k^*}}{(\Delta_k\upbra{\tilde{m}_k^*})^2\lambda\upbra{1}} \log\left(\frac{2KM}{\delta}\right)\\
      & \overset{(a)}\le \sum_{k\in\mathcal{K}}  \frac{256\lambda\upbra{\tilde{m}_k^*}}{(\Delta_k\upbra{\tilde{m}_k^*})^2}\log \left( \frac{Lt}{\delta} \right)
      + \sum_{k\in\mathcal{K}} \sum_{m\neq \hat{m}_k^*} \frac{16\lambda\upbra{m}\lambda\upbra{\tilde{m}_k^*}}{(\Delta_k\upbra{\tilde{m}_k^*})^2\lambda\upbra{1}} \log\left(\frac{2KM}{\delta}\right)\\
      & \le \left(\sum_{k\in\mathcal{K}} \frac{256\lambda\upbra{\tilde{m}_k^*}}{(\Delta_k\upbra{\tilde{m}_k^*})^2} + \sum_{k\in\mathcal{K}} \sum_{m\neq \hat{m}_k^*} \frac{16\lambda\upbra{m}\lambda\upbra{\tilde{m}_k^*}}{(\Delta_k\upbra{\tilde{m}_k^*})^2\lambda\upbra{1}}  \right)  \log \left( \frac{Lt}{\delta} \right) \\
      & \le \sum_{m\in\mathcal{M}}\frac{\lambda\upbra{m}}{\lambda\upbra{1}}  \cdot \sum_{k\in\mathcal{K}}  \frac{256\lambda\upbra{\tilde{m}_k^*}}{(\Delta_k\upbra{\tilde{m}_k^*})^2} \log \left( \frac{Lt}{\delta} \right)\\
      & \le \sum_{m\in\mathcal{M}}\frac{\lambda\upbra{m}}{\lambda\upbra{1}}  \cdot \sum_{k\in\mathcal{K}} \frac{256\lambda\upbra{\tilde{m}_k^*}}{(\Delta_k\upbra{\tilde{m}_k^*})^2} \log \left( \frac{L\Lambda}{\delta\lambda\upbra{1}} \right)\\
      & \overset{(b)}\le
      \sum_{m\in\mathcal{M}}\frac{\lambda\upbra{m}}{\lambda\upbra{1}}  \cdot \sum_{k\in\mathcal{K}} \frac{1024\lambda\upbra{\tilde{m}_k^*}}{(\Delta_k\upbra{\tilde{m}_k^*})^2} \log \left( \sum_{m\in\mathcal{M}}\frac{\lambda\upbra{m}}{\lambda\upbra{1}}  \cdot \sum_{k\in\mathcal{K}} \frac{256\lambda\upbra{\tilde{m}_k^*}}{(\Delta_k\upbra{\tilde{m}_k^*})^2} \frac{L}{\lambda\upbra{1}\delta} \right)\\
      &\le O\left( \sum_{m\in\mathcal{M}}\frac{\lambda\upbra{m}}{\lambda\upbra{1}}  \cdot \tilde{H} \log \left( \sum_{m\in\mathcal{M}}\frac{\lambda\upbra{m}}{\lambda\upbra{1}}  \cdot \tilde{H} \cdot \frac{L}{\lambda\upbra{1}\delta} \right) \right),
    \end{split}
  \]
  where inequality (a) is due to~\eqref{eq:property-of-committed-fidelity},
  inequality (b) is due to that \(\Lambda \le A\log(B\Lambda)\Longrightarrow \Lambda \le 4A\log(AB\Lambda)\).

  \subsubsection{Proof for \explore{C}'s Upper Bound in Theorem~\ref{thm:cost-complexity-upper-bound-C}}
  \paragraph{Step 4 for \explore{C}: Prove that if the events of Steps 2 and 3 do not happen, for \(\Lambda > O\left(Q \log \left( \frac{KM\sqrt{Q}}{\delta (\lambda\upbra{1})^2} \right)\right)\), the algorithm terminates with a probability at least \(O(1 - \delta/\Lambda^2)\).} Denote \(\bar{T} \coloneqq \ceil{\frac{\Lambda}{2\lambda\upbra{M}}}\)
  and two events \(E_1, E_2\) as follows,
  \[
    \begin{split}
      E_1 &\coloneqq \{\exists t \ge \bar{T}: \texttt{CROS}_t\},\\
      E_2 &\coloneqq \{\exists t\ge \bar{T}, k\in\mathcal{K}: (n_{k,t}\upbra{m^*_k} > 16n^*_{k,t})\cup (k\in\texttt{Mid}_t)\}.
    \end{split}
  \]
  We first upper bound the number of rounds after \(\bar{T}\) as follows,
  \begin{equation}\label{eq:bound-second-half-cost}
    \begin{split}
      \sum_{t\ge \bar{T}} \lambda\upbra{m_t} \1{\lnot \texttt{TERM}_t}
      & \overset{(a)}= \sum_{t\ge \bar{T}}  \lambda\upbra{m_t}\1{\lnot \texttt{TERM}_t \cap \lnot \texttt{CROS}_t} \\
      &\overset{(b)} \le  \sum_{t\ge \bar{T}} \lambda\upbra{m_t} \1{(\ell_t\in\texttt{Mid}_t) \cup (u_t\in\texttt{Mid}_t)}\\
      &\le \sum_{t\ge \bar{T}} \sum_{k\in\mathcal{K}}  \lambda\upbra{m_t}\1{((k = \ell_t) \cup (k = u_t))\cap (k\in\texttt{Mid}_t)}\\
      &\overset{(c)}\le \sum_{t\ge \bar{T}} \sum_{k\in\mathcal{K}} \lambda\upbra{m_t} \1{((k = \ell_t) \cup (k = u_t))\cap (n_{k,t}\upbra{m^*_k} \le 16n^*_{k,t})}\\
      &=  \sum_{k\in\mathcal{K}}\sum_{t\ge \bar{T}}  \lambda\upbra{m_t} \1{((k = \ell_t) \cup (k = u_t))\cap (n_{k,t}\upbra{m^*_k} \le 16n^*_{k,t})}\\
      &\le \sum_{k\in\mathcal{K}} \left( \sum_{\ell=1}^{m^*_k - 1} \frac{\lambda\upbra{\ell}\log (Lt^4/\delta)}{(\zeta\upbra{\ell})^2} + 16\lambda\upbra{m^*_k}n^*_{k,t} \right)\\
      & \le \sum_{k\in\mathcal{K}} \left( \sum_{\ell=1}^{m^*_k - 1} \frac{\lambda\upbra{\ell}}{(\zeta\upbra{\ell})^2} + \frac{16\lambda\upbra{m^*_k}}{\Delta_k^2}\right) \log \left( \frac{LT^4}{\delta} \right)\\
      &\le 4 \sum_{k\in\mathcal{K}} \left( \sum_{\ell=1}^{m^*_k - 1} \frac{\lambda\upbra{\ell}}{(\zeta\upbra{\ell})^2} + \frac{16\lambda\upbra{m^*_k}}{\Delta_k^2}\right) \log \left( \frac{LT}{\delta} \right)\\
      &\le 4 \sum_{k\in\mathcal{K}} \left( \sum_{\ell=1}^{m^*_k - 1} \frac{\lambda\upbra{\ell}}{(\zeta\upbra{\ell})^2} + \frac{16\lambda\upbra{m^*_k}}{\Delta_k^2}\right) \log \left( \frac{L\Lambda}{\lambda\upbra{1}\delta} \right)
    \end{split}
  \end{equation}
  where the equation (a) is due to \(\lnot E_1\),
  the inequality (b) is due to Step 1, and
  the inequality (c) is due to \(\lnot E_2\).

  Also notice that \[
    \Lambda = \sum_{t < \bar{T}} \lambda\upbra{m_t} + \sum_{t\ge \bar{T}} \lambda\upbra{m_t} \1{\lnot \texttt{TERM}_t} \le \frac{\Lambda}{2} + \sum_{t\ge \bar{T}} \lambda\upbra{m_t} \1{\lnot \texttt{TERM}_t},
  \]
  and, combining with \eqref{eq:bound-second-half-cost}, we have, \[
    \Lambda \le 8 \sum_{k\in\mathcal{K}} \left( \sum_{\ell=1}^{m^*_k - 1} \frac{\lambda\upbra{\ell}}{(\zeta\upbra{\ell})^2} + \frac{16\lambda\upbra{m^*_k}}{\Delta_k^2}\right) \log \left( \frac{L\Lambda}{\lambda\upbra{1}\delta} \right).
  \]
  Solving the above inequality concludes the cost complexity upper bound for \explore{C}.

  In the end of Step 4, we show that the LUCB algorithm fulfills the fixed confidence requirement.
  The probability that the algorithm does not terminate after spending \(\Lambda\) budget is upper bounded by \(\P(E_1 \cup E_2)\). Based on Steps 2 and 3, it can be upper bounded as follows,
  \[\begin{split}
      \P(E_1 \cup E_2) &\le \sum_{t> \bar{T}}\left( \frac{KM\delta}{Lt^3} + \frac{16\delta\sum_{k\in\mathcal{K}}\Delta_k^{-2}}{L t^4} \right) \le \Lambda \left( \frac{1}{\lambda\upbra{1}} - \frac{1}{2\lambda\upbra{M}} \right) \left( \frac{\delta}{(\Lambda / \lambda\upbra{1})^3} \right)\sum_{k\in\mathcal{K}}\Delta_k^{-2}\\
      & \le \frac{\delta }{\Lambda^2}\sum_{k\in\mathcal{K}}\Delta_k^{-2} \left( (\lambda\upbra{1})^2 - \frac{(\lambda\upbra{1})^3}{2\lambda\upbra{M}} \right)\\
      &\le \delta.
    \end{split}
  \]
\end{proof}

\section{Detailed Theorems for Regret Minimization Case}\label{sec:regret-minimization-long}

We first present both the problem-independent (worst-case) and problem-dependent regret lower bounds in Section~\ref{subsec:regret-lower-bound} and then devise an elimination algorithm whose worst-case upper bounds match the worst-case lower bound up to some logarithmic factors and whose problem-dependent upper bound matches the problem-dependent lower bound in a class of \MFMAB in Section~\ref{subsec:elimination-algorithm}.

\subsection{Regret Lower Bound}\label{subsec:regret-lower-bound}

We present the problem-independent regret lower bound in Theorem~\ref{thm:regret-lower-bound} and the problem-dependent regret lower bound in Theorem~\ref{thm:problem-dependent-regret-lower-bound}.
Both proofs are deferred to Appendix~\ref{subapp:regret-lower-bound} and~\ref{subapp:problem-dependent-regret-lower-bound} respectively.

\begin{theorem}[Problem-independent regret lower bound]\label{thm:regret-lower-bound}
  Given budget \(\Lambda\), the regret of \emph{\MFMAB} is lower bounded as follows,
  \[
    \inf_{\text{Algo}}\sup_{\mathcal{I}}\E \left[ R(\Lambda) \right]
    \ge \Omega \left( K^{1/3}\Lambda^{{2/3}} \right),
  \]
  where the \(\inf\) is over any algorithms,
  the \(\sup\) is over any possible \emph{\MFMAB} instances \(\mathcal{I}\).
\end{theorem}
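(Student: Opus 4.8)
The plan is to exhibit an explicit family of hard instances and run a change-of-measure argument organized as the classic dichotomy behind $T^{2/3}$-type lower bounds for settings where exploration must be paid for (partial monitoring / paid observations). It suffices to take $M=2$ with $\lambda^{(1)}=1$, $\lambda^{(2)}=2$, $\zeta^{(1)}=1$, $\zeta^{(2)}=0$. Define a null instance $\nu_0$ in which every arm has mean $1/2$ at both fidelities (a valid limiting instance, or one in which arm $1$ carries an infinitesimal edge to keep the optimal arm unique), and for each $j\in\mathcal K$ an alternative $\nu_j$ agreeing with $\nu_0$ everywhere except that arm $j$ has mean $1/2+\Delta$ at fidelity $M$; its low-fidelity mean stays $1/2$, which is legitimate since $\Delta\le\zeta^{(1)}$. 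Thus arm $j$ is the unique best arm in $\nu_j$, yet the low-fidelity observations are statistically identical across all instances, so the \emph{only} informative action for distinguishing $\nu_j$ from $\nu_0$ is pulling $(j,M)$. The gap $\Delta$ is tuned at the end.

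First I would record the regret decomposition. Assuming full budget usage (which is without loss, since unused budget only inflates the regret), $\mathbb E[R(\Lambda)]=\sum_{k,m}\bigl(\tfrac{\lambda^{(m)}}{\lambda^{(1)}}\mu_{\max}^{(M)}-\mu_k^{(M)}\bigr)\,\mathbb E[T_{k,m}]$, where $T_{k,m}$ counts pulls of $(k,m)$ and $\mu_{\max}^{(M)}$ is the largest true mean. Two consequences: on $\nu_0$ every high-fidelity pull costs regret exactly $\tfrac12$ (a unit of budget placed on fidelity $2$ wastes half a unit of baseline reward), so $\mathbb E_{\nu_0}[R(\Lambda)]=\tfrac12\bar E$ with $\bar E:=\mathbb E_{\nu_0}[\sum_k T_{k,2}]$; and on $\nu_j$ every action other than the zero-regret action $(j,1)$ costs at least $\Delta$ of regret per unit of budget (a short check of the three cases $(k,1)$, $(j,2)$, $(k,2)$ needs only $\Delta\le 1/2$). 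Hence $\mathbb E_{\nu_j}[R(\Lambda)]\ge \Delta\bigl(\Lambda-\mathbb E_{\nu_j}[T_{j,1}]\bigr)$.

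The crux is a dichotomy on $\bar E$ against a threshold $E_0$. If $\bar E\ge E_0$, the null regret alone gives $\sup_{\mathcal I}\mathbb E[R(\Lambda)]\ge \tfrac12 E_0$. If $\bar E< E_0$, a pigeonhole step yields at least $K/2$ arms with $a_j:=\mathbb E_{\nu_0}[T_{j,2}]\le 2E_0/K$, and since $\sum_j \mathbb E_{\nu_0}[T_{j,1}]\le\Lambda$, at least $K/4$ of those also satisfy $\mathbb E_{\nu_0}[T_{j,1}]\le 4\Lambda/K$; fix such an arm $j^\star$. I would then invoke the divergence-decomposition identity underlying Lemma~\ref{lma:sample-complexity-general-bound}, applied to the $(k,m)$-arms, to get $\KL(\mathbb P_{\nu_0}\|\mathbb P_{\nu_{j^\star}})=a_{j^\star}\,\kl(\tfrac12,\tfrac12+\Delta)\le 4a_{j^\star}\Delta^2$, and Pinsker's inequality on the bounded statistic $T_{j^\star,1}\in[0,\Lambda]$ to transport expectations: $\mathbb E_{\nu_{j^\star}}[T_{j^\star,1}]\le \mathbb E_{\nu_0}[T_{j^\star,1}]+\Lambda\Delta\sqrt{2a_{j^\star}}$. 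Plugging this into the decomposition gives $\mathbb E_{\nu_{j^\star}}[R(\Lambda)]\ge \Delta\Lambda\bigl(1-\tfrac4K-2\Delta\sqrt{E_0/K}\bigr)$.

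Finally I would balance the branches by choosing $E_0\asymp\Delta\Lambda$ and $\Delta\asymp (K/\Lambda)^{1/3}$; this makes the slack constraint $2\Delta\sqrt{E_0/K}\le\tfrac14$ hold and $\Delta\le1/2$ for $\Lambda$ large, so for $K\ge 8$ both branches yield $\sup_{\mathcal I}\mathbb E[R(\Lambda)]\ge \min\{\tfrac12 E_0,\ \tfrac14\Delta\Lambda\}=\Omega(K^{1/3}\Lambda^{2/3})$. The main obstacle — and precisely where a first attempt that merely bounds exploitation regret by change of measure breaks down — is that the algorithm cannot be allowed to explore ``for free'': one must charge exploration against the null instance and exploitation against the alternatives \emph{simultaneously}, so the delicate part is tuning the threshold $E_0$ and gap $\Delta$ to the balance point $\Delta^3\asymp K/\Lambda$ while keeping the two pigeonhole fractions and the Pinsker slack under control.
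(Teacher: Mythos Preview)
Your proposal is correct and follows essentially the same strategy as the paper: charge high-fidelity exploration against a base instance, use a change-of-measure to show that insufficient exploration makes an alternative indistinguishable, and balance at $\Delta\asymp(K/\Lambda)^{1/3}$. The only differences are cosmetic: the paper takes the base instance with arm~$1$ already best (avoiding the unique-optimal-arm technicality you flag), selects the alternative arm via a single $\argmin$ rather than your double pigeonhole, and applies Bretagnolle--Huber to the event $\{T_1^{(\forall m)}\le \Lambda/2\lambda^{(1)}\}$ in place of your Pinsker transport of $\mathbb E[T_{j^\star,1}]$; the resulting case split is three-way instead of your two-branch dichotomy, but the arithmetic and the balance point are identical.
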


\begin{theorem}[Problem-dependent lower bound]\label{thm:problem-dependent-regret-lower-bound}
  For any consistent policy that,
  after spending \(\Lambda\) budgets, fulfills that for any suboptimal arm \(k\) (with \(\Delta_k^{(M)}>0\)) and any \(a>0\),
  \(
  \E[ N_{k}^{(\forall m)} \left(\Lambda \right) ] = o(\Lambda^a),
  \)
  its regret is lower bounded by the following inequality,
  \begin{equation*}
    \label{eq:problem-dependent-regret-lower-bound-general}
    \liminf_{\Lambda\to\infty }\frac{\E\left[ R(\Lambda) \right]}{\log (\Lambda)} \ge  \sum_{k\in \mathcal{K}} \min_{m:\Delta_k^{(m)}>0} \left( \frac{\lambda\upbra{m}}{\lambda\upbra{1}} \mu_1\upbra{M} - \mu_k\upbra{M} \right)\frac{C}{(\Delta_k^{(m)})^2}.
  \end{equation*}
\end{theorem}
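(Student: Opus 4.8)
The plan is to reduce the regret to a cost-weighted sum of expected pull counts and then lower-bound those counts by a fidelity-aware change of measure.

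\textbf{Step 1 (regret decomposition).} Writing \(N_k\upbra{m}(\Lambda)\) for the number of pulls of arm \(k\) at fidelity \(m\) before the budget is exhausted, the spent budget satisfies \(\sum_{k,m}N_k\upbra{m}(\Lambda)\lambda\upbra{m}=\Lambda-O(\lambda\upbra{M})\). Substituting \(\Lambda/\lambda\upbra{1}=\sum_{k,m}\E[N_k\upbra{m}(\Lambda)]\lambda\upbra{m}/\lambda\upbra{1}+O(1)\) into~\eqref{eq:lambda1-regret} and using \(\E[\sum_{t=1}^{N}\mu_{I_t}\upbra{M}]=\sum_{k,m}\E[N_k\upbra{m}(\Lambda)]\mu_k\upbra{M}\) gives
\[
\E[R(\Lambda)]=\sum_{k\in\mathcal{K}}\sum_{m\in\mathcal{M}}\E[N_k\upbra{m}(\Lambda)]\,c_k\upbra{m}+O(1),\qquad c_k\upbra{m}\coloneqq\frac{\lambda\upbra{m}}{\lambda\upbra{1}}\mu_1\upbra{M}-\mu_k\upbra{M}.
\]
Here \(c_1\upbra{1}=0\) and every other coefficient is strictly positive (since \(\lambda\upbra{m}\ge\lambda\upbra{1}\) and \(\mu_k\upbra{M}\le\mu_1\upbra{M}\)), which both identifies the optimal action \((1,1)\) and makes the linear program in Step 3 well-posed.

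\textbf{Step 2 (fidelity-aware change of measure).} Fix an arm \(k\) and build an alternative \(\nu'\) that agrees with \(\nu\) except on arm \(k\), flipping its optimality: for \(k\neq1\) raise \(\mu_k'\upbra{M}=\mu_1\upbra{M}+\epsilon\), and for \(k=1\) lower \(\mu_1'\upbra{M}=\mu_2\upbra{M}-\epsilon\). The model coupling \(|\mu_k'\upbra{m}-\mu_k'\upbra{M}|\le\zeta\upbra{m}\) forces a shift at each fidelity \(m\), and a short computation shows the minimal such shift has magnitude \(\max\{0,\Delta_k\upbra{m}+\epsilon\}\) with \(\Delta_k\upbra{m}\) exactly as in~\eqref{eq:reward-gap}; it is positive precisely on \(\{m:\Delta_k\upbra{m}>0\}\). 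Applying the multi-fidelity transportation inequality~\eqref{eq:sample-complexity-general-bound} to the event \(\mathcal{E}=\{N_k\upbra{\forall m}(\Lambda)\le\Lambda/(2\lambda\upbra{M})\}\)---which has probability \(1-o(1)\) under the instance where arm \(k\) is suboptimal and \(o(1)\) under the instance where it is optimal, by the consistency hypothesis---and bounding \(\kl(\P_\nu(\mathcal{E}),\P_{\nu'}(\mathcal{E}))\ge(1-o(1))(1-a)\log\Lambda\), then sending \(a\to0\) and \(\epsilon\to0\) with \(\KL(\nu_k\upbra{m},\nu_k'\upbra{m})\to(\Delta_k\upbra{m})^2/C\), yields the single constraint
\[
\sum_{m:\Delta_k\upbra{m}>0}a_k\upbra{m}\,\frac{(\Delta_k\upbra{m})^2}{C}\ge1,\qquad a_k\upbra{m}\coloneqq\liminf_{\Lambda\to\infty}\frac{\E[N_k\upbra{m}(\Lambda)]}{\log\Lambda}\ge0.
\]

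\textbf{Step 3 (linear program and summation).} Dividing the decomposition by \(\log\Lambda\) and taking \(\liminf\), the contribution of arm \(k\) is at least \(\sum_m a_k\upbra{m}c_k\upbra{m}\). Minimizing this nonnegative objective subject to the single constraint of Step 2 places all weight on the fidelity minimizing the ratio \(c_k\upbra{m}/\big((\Delta_k\upbra{m})^2/C\big)\), giving \(\min_{m:\Delta_k\upbra{m}>0}c_k\upbra{m}\,C/(\Delta_k\upbra{m})^2\); note that for the optimal arm this is \(0\) (realized at \(m=1\) whenever \(\Delta_1\upbra{1}>0\), consistent with \(c_1\upbra{1}=0\)). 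Summing over \(k\in\mathcal{K}\) produces the stated bound.

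I expect Step 2 to be the crux: one must verify that the minimal model-compatible perturbation at fidelity \(m\) is exactly \(\Delta_k\upbra{m}\) (this is why the gap~\eqref{eq:reward-gap} is defined asymmetrically for \(k=1\) and \(k\neq1\)), and one must justify the joint passage to the limit in \(\Lambda\), \(a\), and \(\epsilon\) together with the interchange of \(\liminf\) and the finite sum over fidelities, all while controlling the random horizon \(N\) (which is only concentrated around \(\Lambda/\lambda\), not fixed). Steps 1 and 3 are routine once this transportation bound is in hand.
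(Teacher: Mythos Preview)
Your proposal is correct and follows essentially the same route as the paper: the same per-arm cost decomposition (your Step~1 is the paper's~\eqref{eq:problem-dependent-regret-decomposition}), the same change-of-measure construction shifting arm~\(k\) at each fidelity by \(\Delta_k\upbra{m}+\epsilon\), and the same single-constraint LP over fidelities whose optimum sits at the vertex \(\min_{m:\Delta_k\upbra{m}>0}c_k\upbra{m}C/(\Delta_k\upbra{m})^2\). The only substantive difference is that the paper solves the LP \emph{at each fixed \(\Lambda\)} (yielding \(R_k(\Lambda)\ge \min_m c_k\upbra{m}/\KL(\cdot)\times\text{(RHS of transportation)}\)) and only then divides by \(\log\Lambda\) and takes \(\liminf\); your Step~3 instead takes \(\liminf\) first to define \(a_k\upbra{m}\) and then imposes the constraint on the \(a_k\upbra{m}\), which does not literally follow from superadditivity of \(\liminf\) (you get \(\liminf\sum\ge 1\), not \(\sum a_k\upbra{m}d_k\upbra{m}\ge 1\)). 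You flag this interchange as the crux, and the paper's ordering is exactly the clean fix.
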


The above two lower bound proofs utilize two different regret decomposition as follows,
\begin{align}
  R(\Lambda)
   & \overset{(a)}\ge \sum_{t=1}^N \left(
  \frac{\lambda^{(m_t)}-\lambda^{(1)}}{\lambda^{(1)}}\mu_1 + \Delta_{I_t}\upbra{M}
  \right)\nonumber
  \\
   & = \sum_{m=2}^M N_{\forall k}^{(m)}(\Lambda) \frac{\lambda^{(m)}-\lambda^{(1)}}{\lambda^{(1)}}\mu_1
  + \sum_{k\neq k_1^{(M)}} N_k^{(\forall m)}(\Lambda) \Delta_{k}\upbra{M};
  \label{eq:worst-case-regret-decomposition}
  \\
  R(\Lambda)
   & = \sum_{k\in\mathcal{K}} \sum_{m=1}^M N_k^{(m)}(\Lambda)\left( \frac{\lambda\upbra{m}}{\lambda\upbra{1}} \mu_1\upbra{M} - \mu_k\upbra{M} \right),
  \label{eq:problem-dependent-regret-decomposition}
\end{align}
where the inequality (a) is due to \(\Lambda > \sum_{t=1}^N \lambda^{(m_t)}\), the \(N_k^{(m)}(\Lambda)\) is the number of times that arm \(k\) is pulled in fidelity \(m\) after paying budget \(\Lambda\). The \(N\) with subscript \(_{\forall k}\) and superscript \(\upbra{\forall m}\) mean the pulling times of all arms and all fidelities respectively.
Due to the multi-fidelity feedback, both decompositions are different from classic bandits' regret decomposition~\citep[Lemma 4.5]{lattimore2020bandit}, and, therefore, we need to non-trivially extend the known approaches to our scenario.

To prove the problem-independent lower bound \(\Omega(K^{1/3}\Lambda^{2/3})\) in Theorem~\ref{thm:regret-lower-bound}, we utilize the decomposition in~\eqref{eq:worst-case-regret-decomposition}:
In the RHS, the first term increases whether one choose fidelity other than the lowest; this is a novel term.
The second term of RHS corresponds to the cost of pulling suboptimal arms which also appears in the classic MAB.
With this observation, one can construct instance pairs such that it is unavoidable to do exploration at higher fidelities and, therefore, the first term is not negligible.
Lastly, one needs to balance the magnitude of the above two terms case-by-case, which together bounds the regret  as \(\Omega \left( \Lambda^{{2/3}} \right)\).
To prove the problem-dependent lower bound in Theorem~\ref{thm:probdep-regret-upper-bound}, we utilize~\eqref{eq:problem-dependent-regret-decomposition} to decompose the regret to each arm and bound each of them separately.

\subsection{An Elimination Algorithm and Its Regret Upper Bound}\label{subsec:elimination-algorithm}

In this section, we propose an elimination algorithm for \MFMAB based on~\citet{auer2010ucb}.
This algorithm proceeds in phases \(p=0,1,\dots\) and maintains a candidate arm set \(\mathcal{C}_p\).
The set \(\mathcal{C}_p\) is initialized as the full arm set \(\mathcal{K}\) and the algorithm gradually eliminates arms from the set until there is only one arm remaining.
When the candidate arm set contains more than one arms, the algorithm explores arms with the highest fidelity \(M\),
and when the set \(\abs{\mathcal{C}_p}=1\), the algorithm exploits the singleton in the set with the lowest fidelity \(m=1\).
We present the detail in Algorithm~\ref{alg:elimination-for-rm}.

\begin{algorithm}[H]
  \caption{Elimination for \MFMAB}\label{alg:elimination-for-rm}
  \begin{algorithmic}[1]
    \State \textbf{Input: } full arm set \(\mathcal{K}\), budget \(\Lambda\), and parameter \(\varepsilon\)
    \State \textbf{Initialization: } phase \(p\gets 0\), candidate set \(\mathcal{C}_p\gets \mathcal{K}\)
    \While{\(p < \log_2 \frac{2}{\varepsilon}\) and \(\abs{\mathcal{C}_p} > 1\)}
    \State pull each arm \(k\in\mathcal{C}_p\) in highest fidelity \(M\) such that \(T_k\upbra{M} = \ceil*{2^{2p} \log \frac{\Lambda}{2^{2p}\lambda\upbra{M}}}\)
    \State Update reward means \(\hat{\mu}_{k,p}\upbra{M}\) for all arms \(k\in \mathcal{C}_p\)
    \State \(\mathcal{C}_{p+1} {\gets} \{k\in \mathcal{C}_p{:} \hat{\mu}_{k,p}\upbra{M} + 2^{-p + 1} {>} \max_{k'\in \mathcal{C}_p} \hat{\mu}_{k',p}\upbra{M}\}\)
    \Comment{Elimination}
    \State \(p\gets p+1\)
    \EndWhile
    \State Pull the remaining arms of \(\mathcal{C}_p\) in turn in fidelity \(m=1\) until the budget runs up
  \end{algorithmic}
\end{algorithm}


\subsubsection{Analysis Results}
We first present the problem-dependent regret upper bound of Algorithm~\ref{alg:elimination-for-rm} in Theorem~\ref{thm:problem-dependent-regret-upper-bound}.
Its full proof is deferred to Appendix~\ref{subapp:regret-upper-bound}.

\begin{theorem}[Problem-Dependent Regret Upper Bound]\label{thm:problem-dependent-regret-upper-bound}
  For any \(\varepsilon>0\). Algorithm~\ref{alg:elimination-for-rm}'s regret is upper bounded as follows,
  \begin{equation}
    \label{eq:problem-dependent-regret-upper-bound}
    \begin{split}
      & \E[R(\Lambda)] \le \max_{k:\Delta_k\upbra{M} \le \varepsilon} \frac{\Lambda}{\lambda\upbra{1}} \Delta_k\upbra{M}\\
      &+\!\!\!  \sum_{k:\Delta_k\upbra{M} > \varepsilon} \!\!\!
      \left(
      \left( \frac{\lambda\upbra{M}}{\lambda\upbra{1}} \mu_1\upbra{M} - \mu_k\upbra{M} \right)
      \left( \frac{16}{(\Delta_k\upbra{M})^2} \log\frac{\Lambda(\Delta_k\upbra{M})^2}{16\lambda\upbra{M}} + \frac{48}{(\Delta_k\upbra{M})^2} + 1 \right)
      + \frac{64}{\Delta_k\upbra{M}}
      \right)\\
      & + \!\!\!\sum_{k:\Delta_k\upbra{M} \le \varepsilon} \!\!\!\left(
      \left( \frac{\lambda\upbra{M}}{\lambda\upbra{1}}\mu_1\upbra{M} - \mu_k\upbra{M} \right)
      \left( \frac{16}{\varepsilon^2} \log\left( \frac{\Lambda \varepsilon^2}{16\lambda\upbra{M}}  \right) + \frac{32}{3\varepsilon^2} + 1\right) + \frac{64}{\varepsilon}.
      \right)
    \end{split}
  \end{equation}
  Especially, if letting \(\varepsilon\) go to zero and budget \(\Lambda\) go to infinity, the above upper bound becomes \begin{equation}
    \label{eq:asymptotical-problem-dependent-regret-upper-bound}
    \limsup_{\Lambda\to\infty }\frac{\E\left[ R(\Lambda) \right]}{\log (\Lambda)} \le  \sum_{k\in \mathcal{K}}
    \left( \frac{\lambda\upbra{M}}{\lambda\upbra{1}} \mu_1\upbra{M} - \mu_k\upbra{M} \right)
    \frac{16}{(\Delta_k\upbra{M})^2}.
  \end{equation}
\end{theorem}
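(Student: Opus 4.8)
The plan is to combine the per-round regret decomposition in \eqref{eq:problem-dependent-regret-decomposition} with a standard phased-elimination argument, carefully tracking that exploration happens at fidelity $M$ (per-pull regret $\frac{\lambda\upbra{M}}{\lambda\upbra{1}}\mu_1\upbra{M}-\mu_k\upbra{M}$) while exploitation happens at fidelity $1$ (per-pull regret $\Delta_k\upbra{M}=\mu_1\upbra{M}-\mu_k\upbra{M}$). First I would fix the \emph{good event} $\mathcal{G}$ on which $|\hat\mu_{k,p}\upbra{M}-\mu_k\upbra{M}|\le 2^{-p}$ for every phase $p$ and every surviving arm $k\in\mathcal{C}_p$. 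Since phase $p$ uses $T_k\upbra{M}=\lceil 2^{2p}\log(\Lambda/(2^{2p}\lambda\upbra{M}))\rceil$ samples, Hoeffding's inequality gives a per-phase, per-arm failure probability of order $(2^{2p}\lambda\upbra{M}/\Lambda)^2$, which is exactly the quantity I will later balance against the worst-case exploitation regret.

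On $\mathcal{G}$ I would run the elimination bookkeeping. The optimal arm $1$ is never eliminated, because it always satisfies the retention inequality $\hat\mu_{1,p}\upbra{M}+2^{-p+1}>\max_{k'\in\mathcal{C}_p}\hat\mu_{k',p}\upbra{M}$. A suboptimal arm $k$ is removed at the first phase $p_k$ with $2^{-p_k}\le\Delta_k\upbra{M}/4$, so that $p_k=\lceil\log_2(4/\Delta_k\upbra{M})\rceil$ and $2^{2p_k}\le 64/(\Delta_k\upbra{M})^2$. For the optimal arm the relevant gap is $\Delta_1\upbra{M}=\mu_1\upbra{M}-\mu_2\upbra{M}$ from \eqref{eq:reward-gap}: arm $1$'s exploration stops once $|\mathcal{C}_p|=1$, i.e.\ once arm $2$ is eliminated, so its fidelity-$M$ pull count is governed by $\Delta_1\upbra{M}$ rather than diverging. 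Summing the phase sample sizes (a geometric series dominated by its last term) and inserting the lower bound $2^{2p_k}\ge 16/(\Delta_k\upbra{M})^2$ inside the logarithm yields $N_k\upbra{M}\lesssim\frac{16}{(\Delta_k\upbra{M})^2}\log\frac{\Lambda(\Delta_k\upbra{M})^2}{16\lambda\upbra{M}}+\frac{48}{(\Delta_k\upbra{M})^2}+1$ for arms with $\Delta_k\upbra{M}>\varepsilon$; arms with $\Delta_k\upbra{M}\le\varepsilon$ instead survive until the phase cap $\log_2(2/\varepsilon)$, giving the analogous $\frac{16}{\varepsilon^2}\log(\cdots)$ count. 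Multiplying each count by the fidelity-$M$ per-pull regret $\frac{\lambda\upbra{M}}{\lambda\upbra{1}}\mu_1\upbra{M}-\mu_k\upbra{M}$ produces the logarithmic terms of \eqref{eq:problem-dependent-regret-upper-bound}.

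Next I would treat exploitation. On $\mathcal{G}$, any arm still in $\mathcal{C}_p$ when the loop exits has $\Delta_k\upbra{M}$ at most of order $4\cdot 2^{-p_{\max}}=2\varepsilon$, so round-robin exploitation at fidelity $1$ over at most $\Lambda/\lambda\upbra{1}$ rounds costs at most $\frac{\Lambda}{\lambda\upbra{1}}\max_{k:\Delta_k\upbra{M}\le\varepsilon}\Delta_k\upbra{M}$, the leading term of the bound. The delicate part is the complement $\mathcal{G}^c$: if a bad arm $k$ wrongly survives all eliminations it may be exploited for the entire remaining budget, incurring up to $\frac{\Lambda}{\lambda\upbra{1}}\Delta_k\upbra{M}$ regret. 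I would bound its probability by summing the per-phase Hoeffding failures; the choice of $T_k\upbra{M}$ makes this small enough (of order $\lambda\upbra{1}/(\Lambda(\Delta_k\upbra{M})^2)$ after summation) that the product with $\frac{\Lambda}{\lambda\upbra{1}}\Delta_k\upbra{M}$ collapses into the additive $\frac{64}{\Delta_k\upbra{M}}$ terms. This balancing of a vanishing failure probability against a budget-linear penalty is the main obstacle, and it is precisely why $T_k\upbra{M}$ carries the $\log(\Lambda/(2^{2p}\lambda\upbra{M}))$ factor rather than a cruder $\log\Lambda$.

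Finally, for the asymptotic statement \eqref{eq:asymptotical-problem-dependent-regret-upper-bound} I would send $\varepsilon\to 0$ and then $\Lambda\to\infty$. Choosing $\varepsilon$ below the smallest positive gap $\min_{k}\Delta_k\upbra{M}$ moves every arm into the middle sum, so the leading exploitation term $\frac{\Lambda\varepsilon}{\lambda\upbra{1}}$ and all the $O(1/(\Delta_k\upbra{M})^2)$ and $O(1/\Delta_k\upbra{M})$ constants become negligible relative to $\log\Lambda$; dividing by $\log\Lambda$ and taking $\limsup$ leaves precisely $\sum_{k}\left(\frac{\lambda\upbra{M}}{\lambda\upbra{1}}\mu_1\upbra{M}-\mu_k\upbra{M}\right)\frac{16}{(\Delta_k\upbra{M})^2}$, with the optimal arm's term kept finite by $\Delta_1\upbra{M}=\mu_1\upbra{M}-\mu_2\upbra{M}$.
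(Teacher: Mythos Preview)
Your overall architecture matches the paper's: a good-event analysis on which the phased elimination works as intended, plus a bad-event contribution obtained by trading a small failure probability against the worst-case linear regret. The exploration/exploitation cost split and the bookkeeping for $N_k\upbra{M}$ on the good event are essentially the same.

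The genuine gap is in your treatment of $\mathcal{G}^c$. You only consider the failure mode ``a suboptimal arm $k$ wrongly survives because its own estimate is too high,'' and you bound its probability by arm $k$'s Hoeffding failures. But there is a second failure mode that your argument misses: the optimal arm $1$ can be eliminated at some phase $p_1$ because \emph{its own} estimate is too low (or a competitor's is too high). When that happens, any arm $k$ with $p_k\ge p_1$ can survive and be exploited for the full remaining budget \emph{without} any concentration failure of its own, so your per-arm-$k$ probability bound does not cover it. The paper splits the bad event into precisely these two sub-cases (``arm $k$ not eliminated while arm $1$ still present'' versus ``arm $1$ eliminated''), and it is the second case---summing over the random phase $p_1$ at which arm $1$ is dropped and bounding the worst surviving gap by $4\cdot 2^{-p_1}$---that produces the $\frac{64}{\Delta_k\upbra{M}}$ and $\frac{32}{3(\Delta_k\upbra{M})^2}$ contributions. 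Without this case your bound is incomplete.

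A minor point: you first state the per-phase failure probability as $(2^{2p}\lambda\upbra{M}/\Lambda)^2$ (which is what standard Hoeffding gives), but your later claim that the summed probability is of order $\lambda\upbra{1}/(\Lambda(\Delta_k\upbra{M})^2)$ corresponds to the \emph{non}-squared rate $2^{2p}\lambda\upbra{M}/\Lambda$ (which is what the paper uses). The squared version would only make things smaller, so this is harmless, but the two statements are inconsistent as written.
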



Letting \(\varepsilon =  \left( {K\log\Lambda}/{\Lambda} \right)^{1/3}\) in~\eqref{eq:problem-dependent-regret-upper-bound} of Theorem~\ref{thm:problem-dependent-regret-upper-bound},
one can obtain a problem-independent regret upper bound of Algorithm~\ref{alg:elimination-for-rm} in Theorem~\ref{thm:regret-upper-bound} as follows.

\begin{theorem}[Regret Upper Bound for Algorithm~\ref{alg:elimination-for-rm}]\label{thm:regret-upper-bound}
  Letting \(\varepsilon =  \left( {K\log\Lambda}/{\Lambda} \right)^{1/3}\), Algorithm~\ref{alg:elimination-for-rm}'s regret is upper bounded as follows,
  \[
    \E[R(\Lambda)] \le O\left( K^{{1/3}} \Lambda^{{2/3}} (\log \Lambda)^{{1/3}} \right).
  \]
\end{theorem}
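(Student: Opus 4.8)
The plan is to obtain Theorem~\ref{thm:regret-upper-bound} directly from the problem-dependent bound of Theorem~\ref{thm:problem-dependent-regret-upper-bound} by substituting \(\varepsilon = (K\log\Lambda/\Lambda)^{1/3}\) and grouping the terms of~\eqref{eq:problem-dependent-regret-upper-bound} according to their dependence on \(\Lambda\), \(K\), and \(\varepsilon\). First I would organize the right-hand side of~\eqref{eq:problem-dependent-regret-upper-bound} into two scales: an \emph{exploitation bias} (the first line) that grows like \(\Lambda\varepsilon\), and an \emph{exploration cost} (the second and third lines) that grows like \(K\log\Lambda/\varepsilon^2\). The goal is then to show each surviving family reduces to \(O(K^{1/3}\Lambda^{2/3}(\log\Lambda)^{1/3})\) after the substitution, while all remaining pieces are lower order.

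For the exploitation term I would use that every arm it counts satisfies \(\Delta_k\upbra{M}\le\varepsilon\), so \(\max_{k:\Delta_k\upbra{M}\le\varepsilon}\frac{\Lambda}{\lambda\upbra{1}}\Delta_k\upbra{M}\le \frac{\Lambda}{\lambda\upbra{1}}\varepsilon = O(\Lambda\varepsilon)\). For the exploration terms I would invoke three uniform bounds: (i) the coefficient \(\frac{\lambda\upbra{M}}{\lambda\upbra{1}}\mu_1\upbra{M}-\mu_k\upbra{M}\) lies in \([0,\rho]\) with \(\rho\coloneqq \lambda\upbra{M}/\lambda\upbra{1}\), since \(\lambda\upbra{M}\ge\lambda\upbra{1}\) and all means lie in \([0,1]\); (ii) for arms with \(\Delta_k\upbra{M}>\varepsilon\) I replace \(1/(\Delta_k\upbra{M})^2\) by \(1/\varepsilon^2\) and \(1/\Delta_k\upbra{M}\) by \(1/\varepsilon\), whereas for arms with \(\Delta_k\upbra{M}\le\varepsilon\) these factors are already explicit; and (iii) both logarithms \(\log\frac{\Lambda(\Delta_k\upbra{M})^2}{16\lambda\upbra{M}}\) and \(\log\frac{\Lambda\varepsilon^2}{16\lambda\upbra{M}}\) are at most \(\log\frac{\Lambda}{16\lambda\upbra{M}}=O(\log\Lambda)\) because \(\Delta_k\upbra{M},\varepsilon\le 1\). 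Summing over the at most \(K\) arms, the leading \(\frac{16}{\varepsilon^2}\log(\cdots)\) pieces contribute \(O(K\log\Lambda/\varepsilon^2)\); the genuinely lower-order pieces — the \(+1\) constants giving \(O(K)\), the \(48/(\Delta_k\upbra{M})^2\) and \(32/(3\varepsilon^2)\) giving \(O(K/\varepsilon^2)\) without the log, and the \(64/\Delta_k\upbra{M}\) giving \(O(K/\varepsilon)\) — are each dominated once \(K\le\Lambda\).

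Finally I would balance the two surviving scales. Plugging in \(\varepsilon=(K\log\Lambda/\Lambda)^{1/3}\) yields \(\Lambda\varepsilon = K^{1/3}\Lambda^{2/3}(\log\Lambda)^{1/3}\) and \(K\log\Lambda/\varepsilon^2 = K^{1/3}\Lambda^{2/3}(\log\Lambda)^{1/3}\), so the exploitation and exploration costs coincide in order and their sum gives the claimed \(O(K^{1/3}\Lambda^{2/3}(\log\Lambda)^{1/3})\) bound. I would also remark that \(\varepsilon\to 0\) as \(\Lambda\to\infty\), so the phase cap \(p<\log_2(2/\varepsilon)\) in Algorithm~\ref{alg:elimination-for-rm} is non-vacuous and the substitution is legitimate. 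The main obstacle here is bookkeeping rather than conceptual: one must check that every coefficient and logarithm is uniformly bounded so the constants fold cleanly into \(O(\cdot)\), and verify that the truly lower-order terms \(O(K)\), \(O(K/\varepsilon^2)\), and \(O(K/\varepsilon)\) never overtake the leading scale in the regime of interest. The substantive difficulty — the regret decomposition~\eqref{eq:problem-dependent-regret-decomposition} together with the phase-by-phase elimination analysis — lives entirely inside Theorem~\ref{thm:problem-dependent-regret-upper-bound}, which we may assume.
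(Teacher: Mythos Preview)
Your proposal is correct and matches the paper's own derivation: both start from the problem-dependent bound of Theorem~\ref{thm:problem-dependent-regret-upper-bound}, uniformly bound the per-pull cost \(\frac{\lambda\upbra{M}}{\lambda\upbra{1}}\mu_1\upbra{M}-\mu_k\upbra{M}\) by \(\frac{\lambda\upbra{M}}{\lambda\upbra{1}}\), reduce the exploration terms to \(O(K\log\Lambda/\varepsilon^2)\) and the exploitation bias to \(O(\Lambda\varepsilon)\), and then balance with \(\varepsilon=(K\log\Lambda/\Lambda)^{1/3}\). The only cosmetic difference is that the paper controls \(\frac{16}{(\Delta_k\upbra{M})^2}\log\frac{\Lambda(\Delta_k\upbra{M})^2}{16\lambda\upbra{M}}\) via the monotonicity of \(x\mapsto \frac{1}{x}\log(Cx)\) (valid once \(\varepsilon>4\sqrt{\lambda\upbra{M}e/\Lambda}\)), whereas you bound the two factors separately; both routes land at \(\frac{16}{\varepsilon^2}\log\frac{\Lambda}{16\lambda\upbra{M}}\) and yield the same \(O(K^{1/3}\Lambda^{2/3}(\log\Lambda)^{1/3})\).
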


\section{Proofs for Regret Minimization Results}

\subsection{Proof of Theorem~\ref{thm:regret-lower-bound}}
\label{subapp:regret-lower-bound}
\paragraph{Step 1. Construct instances and upper bound KL-divergence}
\textcolor{blue}{Fix a policy \(\pi\).}
We construct two \MFMAB instances, each with \(K\) arms and \(M\) fidelities. For the pulling costs of different fidelities, we set \(\lambda^{(M)}\le 2\lambda^{(1)}\). For reward feedback, we assume all arms' reward distributions at any fidelity are Bernoulli, and denote \(\mu_k\upbra{m}(1), \mu_k\upbra{m}(2)\) as the reward means of these two instances.
Let \(\P_1 = \P_{\bm{\mu}(1),\pi}, \mathbb{E}_1 = \mathbb{E}_{\bm{\mu}(1),\pi}\)
and \(\P_2 = \P_{\bm{\mu}(2),\pi}, \mathbb{E}_2 = \mathbb{E}_{\bm{\mu}(2),\pi}\) be the probability measures and expectations
on the canonical \MFMAB model induced by \(\Lambda\)-budget interconnection of \(\pi\) and \(\bm{\mu}_1\,(\text{and }\bm{\mu}_2)\).
Denote \(k' = \argmin_{k\in\mathcal{K}} \E_1[N_k\upbra{m>1}(\Lambda)]\).

The only difference between both instance pair is in the arm \(k'\)'s reward mean for fidelities \(m>1\), so that instance \(1\)'s optimal arm is arm \(1\) and instance \(2\)'s optimal arm is arm \(k'\).
The detailed reward means are listed as follows,

\begin{center}
    \begin{tabular}{ccc}
        \hline
                                                  & \(m=1\)                                                                                             & \(m>1\)                                                                                                                                      \\\hline
        \((\mu_k\upbra{m}(1))_{k\in\mathcal{K}}\) & \multirow{2}{*}{\(\left( \frac{1}{2}+\Delta, \frac{1}{2}, \frac{1}{2}, \dots, \frac{1}{2}\right)\)} & \(\left( \frac{1}{2}+\Delta, \frac{1}{2}, \frac{1}{2}, \dots, \frac{1}{2}\right)\)                                                           \\
        \((\mu_k\upbra{m}(2))_{k\in\mathcal{K}}\) &                                                                                                     & \(\left( \frac{1}{2}+\Delta, \frac{1}{2},\dots, \frac{1}{2}, \textcolor{blue}{\frac{1}{2}+2\Delta}, \frac{1}{2}, \dots, \frac{1}{2}\right)\) \\
        \hline
    \end{tabular}
\end{center}

Denote the entry \((M_t, I_t, X_t)\) as a tuple of the pulled fidelity, pulled arm, and observed reward random variables at time \(t\), and \(\mathcal{H} \coloneqq (M_1, I_1, X_1; M_2, I_2, X_2; \dots ; M_N, I_N, X_N)\) as a sequence of applying policy \(\pi\). We note that \(N\) is also a random variable depending on the sequence of fidelities in pulling arms.
Next, we calculate the upper bound of the KL-divergence of the above two instances in this sequence.

\begin{equation}\label{eq:kl-div-upper-bound}
    \begin{split}
        \KL(\P_1, \P_2) &= \E_1\left[ \log\left( \frac{d\P_1}{d\P_2} \right) \right]\\
        &= \E_1\left[ \log\left( \frac{d\P_1}{d\P_2}
            (M_1, I_1, X_1; M_2,I_2,X_2;\dots;M_N,I_N,X_N)
            \right) \right]\\
        &= \E_1\left[\E_1\left[\left. \log\left( \frac{d\P_1}{d\P_2}
                (M_1, I_1, X_1; M_2,I_2,X_2;\dots;M_N,I_N,X_N)
                \right)\right\rvert N \right]  \right]\\
        &\overset{(a)}= \E_1\left[ \E_1\left[ \left. \sum_{t=1}^N \log\left( \frac{p_1(X_t\vert M_t, I_t)}{p_2(X_t\vert M_t, I_t)} \right) \right\rvert N \right]\right] \\
        &= \E_1\left[\sum_{t=1}^N \E_1\left[ \left.  \log\left( \frac{p_1(X_t\vert M_t, I_t)}{p_2(X_t\vert M_t, I_t)} \right) \right\rvert N \right]\right]\\
        &\overset{(b)}= \E_1\left[\sum_{t=1}^N \E_1\left[ \left.  \KL\left(P_1^{(M_t,I_t)}, P_2^{(M_t,I_t)}\right) \right\rvert N \right]\right]\\
        &= \sum_{(m,k)\in \mathcal{M}\times \mathcal{K}}\E_1\left[\sum_{t=1}^N \E_1\left[ \left. \1{M_t=m,I_t=k} \KL\left(P_1^{(M_t,I_t)}, P_2^{(M_t,I_t)}\right) \right\rvert N \right]\right]\\
        &= \sum_{(m,k)\in \mathcal{M}\times \mathcal{K}}\E_1\left[ \KL\left(P_1^{(m,k)}, P_{2}^{(m,k)}\right) \sum_{t=1}^N \E_1\left[ \left. \1{M_t=m,I_t=k}  \right\rvert N\right]\right]\\
        &= \sum_{(m,k)\in \mathcal{M}\times \mathcal{K}}\KL\left(P_1^{(m,k)}, P_2^{(m,k)}\right)\E_1\left[\E_1\left[ \left. \sum_{t=1}^N \1{M_t=m,I_t=k}  \right\rvert N \right]\right]\\
        &\overset{(c)}= \sum_{(m,k)\in \mathcal{M}\times \mathcal{K}}\KL\left(P_1^{(m,k)}, P_2^{(m,k)}\right)\E_1\left[T_k^{(m)}(\Lambda)\right]\\
        &=  \KL\left(P_1^{(m,2)}, P_2^{(m,2)}\right) \E_1\left[T_{k'}^{(m>1)}(\Lambda)\right]\\
        &\overset{(d)}\le \frac{1}{K}\cdot \E_1\left[
            T_{\forall k}^{(m>1)}(\Lambda)\right]
        \cdot \KL\left(
        \mathcal{B}\left(\frac{1}{2}\right),
        \mathcal{B}\left( \frac{1}{2}+2\Delta \right)
        \right)\\
        &\overset{(e)}\le  \E_1\left[
            T_{\forall k}^{(m>1)}(\Lambda)\right]
        \cdot \frac{9\Delta^2}{K},
    \end{split}
\end{equation}
where the equation (a) is due to \[
    \begin{split}
        &\quad \frac{d\P_1}{d(\rho\times\rho\times\lambda)^N}
        (m_1,k_1,x_1; m_2,k_2,x_2; \dots; m_N,k_N,x_N)\\
        &= p_{\bm{\mu}_1,\pi} (m_1,k_1,x_1; m_2,k_2,x_2; \dots; m_N,k_N,x_N)\\
        &=\prod_{t=1}^N\pi_t(m_t,k_t\vert m_1,k_1,x_1;\dots;m_{t-1},k_{t-1},x_{t-1}) p_{\bm{\mu}_1}(x_t\vert m_t,k_t),
    \end{split}
\]
the equation (b) is due to \[
    \begin{split}
        \E_1\left[ \left.  \log\left( \frac{p_1(X_t\vert M_t, I_t)}{p_2(X_t\vert M_t, I_t)} \right) \right\rvert N \right]
        &= \E_1\left[ \left.  \E_1\left[
                \log\left( \left. \frac{p_1(X_t\vert M_t, I_t)}{p_2(X_t\vert M_t, I_t)}\right) \right\rvert
                M_t, I_t
                \right]\right\rvert N \right]\\
        &= \E_1\left[ \left.  \KL\left(P_1^{(M_t,I_t)}, P_2^{(M_t,I_t)}\right) \right\rvert N \right],
    \end{split}
\]
the equation (c) is due to the tower property as well,
the inequality (d) is because arm \(k'\) is the arm with the smallest number of pulled with fidelity \(m > 1\),
and the inequality (e) is by calculating the KL-divergent between two Bernoulli distributions.

\paragraph{Step 2. Lower bound the regret}
We first note that the regret defined in \eqref{eq:lambda1-regret} for instance \(1\) can be decomposed as follows,
\[
    \begin{split}
        \mathbb{E}_1 [R(\Lambda)] &\overset{(a)}\ge \sum_{t=1}^N \left(
        \frac{\lambda^{(m_t)}-\lambda^{(1)}}{\lambda^{(1)}}\mu_1 + \Delta_{I_t}\upbra{M}
        \right)\\
        &=\sum_{m=2}^M T_{\forall k}^{(m)}(\Lambda) \frac{\lambda^{(m)}-\lambda^{(1)}}{\lambda^{(1)}}\mu_1
        + \sum_{k\neq k_1^{(M)}} T_k^{(\forall m)}(\Lambda) \Delta_k\upbra{M}
    \end{split}
\]
where inequality (a) is due to \(\Lambda > \sum_{t=1}^N \lambda^{(m_t)}\), the \(T_k^{(m)}(\Lambda)\) is the number of times that arm \(k\) is pulled in fidelity \(m\) (given total budget \(\Lambda\)).

Then, we lower bound the summation of the regrets under both instances,
\[
    \begin{split}
        &\quad \E_1[R(\Lambda)] + \E_2[R(\Lambda)] \\
        &=\E_1\left[
        \sum_{m=2}^M T_{\forall k}^{(m)}(\Lambda) \frac{\lambda^{(m)}-\lambda^{(1)}}{\lambda^{(1)}}\mu_1
        + \sum_{k\neq k_1^{(M)}} T_k^{(\forall m)}(\Lambda) \Delta_k\upbra{M}
        \right] \\
        &\quad+
        \E_2\left[
        \sum_{m=2}^M T_{\forall k}^{(m)}(\Lambda) \frac{\lambda^{(m)}-\lambda^{(1)}}{\lambda^{(1)}}\mu_1
        + \sum_{k\neq k_1^{(M)}} T_k^{(\forall m)}(\Lambda) \Delta_k\upbra{M}
        \right]\\
        &\ge
        \E_1\left[
        T_{\forall k}^{(m>1)}(\Lambda) \frac{\lambda^{(2)}-\lambda^{(1)}}{\lambda^{(1)}}\mu_1
        + \sum_{k\neq k_1^{(M)}} T_k^{(\forall m)}(\Lambda) \Delta_k\upbra{M}
        \right] \\
        &\quad+
        \E_2\left[
        T_{\forall k}^{(m>1)}(\Lambda) \frac{\lambda^{(2)}-\lambda^{(1)}}{\lambda^{(1)}}\mu_1
        + \sum_{k\neq k_1^{(M)}} T_k^{(\forall m)}(\Lambda) \Delta_k\upbra{M}
        \right]\\
        &\overset{(a)}\ge  \E_1\left[
            T_{\forall k}^{(m>1)}(\Lambda)\right] \frac{\lambda^{(2)}-\lambda^{(1)}}{\lambda^{(1)}}\mu_1
        \\
        & \quad +
        \Delta \min\left\{
        \frac{\Lambda}{\lambda^{(M)}} - \frac{\Lambda}{2\lambda^{(1)}},
        \frac{\Lambda}{2\lambda^{(1)}}
        \right\}\left( \P_1\left( T_1^{(\forall m)} \le \frac{\Lambda}{2\lambda^{(1)}} \right) +  \P_2\left( T_1^{(\forall m)} > \frac{\Lambda}{2\lambda^{(1)}} \right)\right)\\
        &\overset{(b)}\ge\E_1\left[
            T_{\forall k}^{(m>1)}(\Lambda)\right] \frac{\lambda^{(2)}-\lambda^{(1)}}{\lambda^{(1)}}\mu_1 +\frac{\Lambda\Delta}{2}\min\left\{
        \frac{1}{\lambda^{(M)}} - \frac{1}{2\lambda^{(1)}},
        \frac{1}{2\lambda^{(1)}}
        \right\}\exp(-\KL(\P_1, \P_2))\\
        &\overset{(c)}\ge \E_1\left[
            T_{\forall k}^{(m>1)}(\Lambda)\right] \frac{\lambda^{(2)}-\lambda^{(1)}}{\lambda^{(1)}}\mu_1 \\
        &\qquad +\frac{\Lambda\Delta}{2}\min\left\{
        \frac{1}{\lambda^{(M)}} - \frac{1}{2\lambda^{(1)}},
        \frac{1}{2\lambda^{(1)}}
        \right\}\exp\left(-2\Delta^2 K^{-1} \E_1\left[ T_{\forall k}^{(m>1)}(\Lambda) \right]\right),
    \end{split}
\]
where inequality (a) uses the \(\lambda^{(M)}\le 2\lambda^{(1)}\) condition,
inequality (b) is by Bretagnolle-Huber inequality~\citep[Theorem 14.2]{lattimore2020bandit},
and inequality (c) is by \eqref{eq:kl-div-upper-bound}.

\paragraph{Step 3. Obtain the \(\Omega(K^{\frac 1 3}\Lambda^{\frac{2}{3}})\) lower bound} We show that \(\E_1[R(\Lambda)] + \E_2[R(\Lambda)] \ge \Omega(K^{\frac 1 3}\Lambda^\frac{2}{3})\) for any possible quantity of \(\E_1\left[
    T_{\forall k}^{(m>1)}(\Lambda)\right]\) via categorized discussion as follows,
\begin{itemize}
    \item Case 1: If \(\E_1\left[T_{\forall k}^{(m>1)}(\Lambda)\right] = 0\), then the last formula becomes \(C\Lambda\Delta\).
          Letting \(\Delta\) be a constant (via \(\sup\)),
          we have a \(\Omega(\Lambda)\) lower bound, which means
          \(\Omega(K^{\frac 1 3}\Lambda^\frac{2}{3})\) is also valid.
    \item Case 2: If \(\E_1\left[
              T_{\forall k}^{(m>1)}(\Lambda)\right]  \ge  K^{\frac 1 3}\Lambda^{\frac{2}{3}}\), then we have the \(\Omega(K^{\frac 1 3}\Lambda^{\frac{2}{3}})\) lower bound from the first term.
    \item Case 3: If \(0< \E_1\left[
              T_{\forall k}^{(m>1)}(\Lambda)\right]  < K^{\frac 1 3}\Lambda^{\frac{2}{3}}\), we choose \(\Delta = K^{\frac 1 3}\Lambda^{-\frac{1}{3}}\) and also obtain the \(\Omega(K^{\frac 1 3} \Lambda^\frac{2}{3})\) lower bound.
\end{itemize}

\subsection{Proof of Theorem~\ref{thm:problem-dependent-regret-lower-bound}}
\label{subapp:problem-dependent-regret-lower-bound}

In this proof, we prove that for any arm \(k\in \mathcal{K}\), the total regret due to this arm \(k\) is lower bounded as follows, \[
    \liminf_{\Lambda\to\infty }\frac{\E\left[ R_k(\Lambda) \right]}{\log (\Lambda)} \ge  \min_{m:\Delta_k^{(m)}>0} \left( \frac{\lambda\upbra{m}}{\lambda\upbra{1}} \mu_1\upbra{M} - \mu_k\upbra{M} \right)\frac{C}{(\Delta_k^{(m)})^2}.
\]
Then, noticing that \(R(\Lambda) = \sum_{k\in\mathcal{M}} R_k(\Lambda)\) concludes the proof.

We construct instances \(\nu\) and \(\nu'\).
We set the reward distributions \(\nu=(\nu_k\upbra{m})_{(k,m)\in\mathcal{K}\times\mathcal{M}}\) as Bernoulli and the reward means fulfill \(\mu_1\upbra{M}>\mu_2\upbra{M}\ge \mu_3\upbra{M}\ge \dots \ge \mu_K\upbra{M},\) where \(\mu_k\upbra{m} = \mathbb{E}_{X\sim \nu_k\upbra{m}}[X]\).
We let \({\nu'}_k\upbra{m}\) be the same to \(\nu_k\upbra{m}\) for all \(k\) and \(m\), except for that an arm \(\ell\neq 1\).
We set arm \(\ell\)'s reward means  on fidelities \(m\in\mathcal{M}_k\) to be \({\nu'}_\ell\upbra{m} = \nu_1\upbra{M}-\zeta\upbra{m} + \epsilon\).
One can verify that the reward means of arm \(k\) under instance \(v'\) fulfill the condition that \(\abs{{\mu'}_k\upbra{m} - {\mu'}_k\upbra{M}} \le \zeta\upbra{m}\) for any fidelity \(m\).
Notice that \({\mu'}_k\upbra{M} > \mu_{k_*}\upbra{M}\).
Hence, under instance \(\mathcal{I}'\), the optimal arm is \(k\).
We denote \(\P, \E\) and \(\P', \E'\) as the probability measures and expectations for instances \(\mathcal{I}\) and \(\mathcal{I}'\) respectively.


Next, we employ the (extended) key inequality from~\citet{garivier2019explore} as follows, \[
    \sum_{k\in\mathcal{K}} \sum_{m\in\mathcal{M}} \E[N_k\upbra{m}(\Lambda)] \KL(v_k\upbra{m}, {v'}_k\upbra{m}) \ge \kl(\E[Z], \E'[Z]).
\]

Let \(Z = \lambda\upbra{M}N_\ell\upbra{\forall m}(\Lambda) / \Lambda\) in the above inequality, we have \begin{equation}
    \label{eq:constraint-of-problem-dependent-regret-lower-bound}
    \begin{split}
        & \quad  \sum_{m:\Delta_\ell\upbra{m} > 0} \E[N_k\upbra{m}(\Lambda)] \KL(v_\ell\upbra{m}, {v'}_\ell\upbra{m}) \\
        & \ge \kl\left( \frac{\E[N_\ell\upbra{\forall m}(\Lambda)]}{\Lambda}, \frac{\E'[N_\ell\upbra{\forall m}(\Lambda)]}{\Lambda} \right) \\
        & \overset{(a)}\ge \left( 1- \frac{\lambda\upbra{M}\E[N_\ell\upbra{\forall m}(\Lambda)]}{\Lambda} \right) \log \frac{\Lambda}{\Lambda - \lambda\upbra{M}\E'[N_\ell\upbra{\forall m}(\Lambda)]} - \log 2
    \end{split}
\end{equation}
where inequality (a) is due to that for all \((p,q) \in [0,1]^2\),\(\kl(p,q) \ge (1-p) \log (1/(1-q)) - \log 2\).

Notice that the regret attributed to any arm \(k\) can be decomposed and lower bounded as follows, \[
    R_\ell(\Lambda) \ge \sum_{m=1}^M N_\ell^{(m)}(\Lambda)\left( \frac{\lambda\upbra{m}}{\lambda\upbra{1}} \mu_1\upbra{M} - \mu_\ell\upbra{M} \right),
\]
with the constraint in~\eqref{eq:constraint-of-problem-dependent-regret-lower-bound}. Since this is a linear programming, we know its solution is reached at its vertex.
Therefore, we lower bound the regret as follows,
\begin{equation}
    \label{eq:middle-result-regret-lower-bound}
    \begin{split}
        R_\ell(\Lambda) &\ge  \min_{m:\Delta_k\upbra{m} > 0} \left( \frac{\lambda\upbra{m}}{\lambda\upbra{1}} \mu_1\upbra{M} - \mu_\ell\upbra{M} \right) \frac{1}{\KL(v_\ell\upbra{m}, {v'}_\ell\upbra{m})} \\
        &\qquad \times \left( \left( 1- \frac{\lambda\upbra{M}\E[N_\ell\upbra{\forall m}(\Lambda)]}{\Lambda} \right) \log \frac{\Lambda}{\Lambda - \lambda\upbra{M}\E'[N_\ell\upbra{\forall m}(\Lambda)]} - \log 2 \right),
    \end{split}
\end{equation}

Notice that the policy is consistent,
that is, \(\E[N_\ell\upbra{\forall m}(\Lambda)] = o(T^a)\)
and \(\E'[N_k\upbra{\forall m}(\Lambda)] = o(\Lambda^a)\) for any \(a\in (0,1]\) and any suboptimal arm \(k \neq \ell\).
We have \(\frac{\lambda\upbra{M}\E[N_\ell\upbra{\forall m}(\Lambda)]}{\Lambda} = o(1)\)
and
\[\Lambda - \lambda\upbra{M}\E'[N_\ell\upbra{\forall m}(\Lambda)]
    \le \lambda\upbra{M}\sum_{k\neq \ell} \E'[N_k\upbra{\forall m}(\Lambda)] = o(\Lambda^a).
\]

Dividing both sides of~\eqref{eq:middle-result-regret-lower-bound} by \(\Lambda\), and letting \(\Lambda\) go to infinity and \(a\) go to \(1\), we have \[
    \begin{split}
        \liminf_{\Lambda\to\infty} \frac{\E[R_\ell(\Lambda)]}{\Lambda} & \ge
        \min_{m:\Delta_\ell\upbra{m} > 0} \left( \frac{\lambda\upbra{m}}{\lambda\upbra{1}} \mu_1\upbra{M} - \mu_\ell\upbra{M} \right) \frac{1}{\KL(v_\ell\upbra{m}, {v'}_\ell\upbra{m})}\\
        & \ge
        \min_{m:\Delta_\ell\upbra{m} > 0} \left( \frac{\lambda\upbra{m}}{\lambda\upbra{1}} \mu_1\upbra{M} - \mu_\ell\upbra{M} \right) \frac{1}{\KL(v_\ell\upbra{m}, {v}_1\upbra{M}-\zeta\upbra{m}+\epsilon)}
    \end{split}
\]

To bound the optimal arm \(1\)'s sample cost, we use the same \(\nu\) as above and construct another instance \(\nu''\). The instance \(\nu''\)'s reward means are the same to \(\nu\) except for arm \(1\) whose reward means for fidelity \(m\in \mathcal{M}_1\) are set as \({\mu''}_1\upbra{m} = \mu_2\upbra{m}+\zeta\upbra{m}-\epsilon\). Then, with similar procedure as the above, we obtain
\[
    \liminf_{\Lambda\to\infty} \frac{\E[R_1(\Lambda)]}{\Lambda} \ge
    \min_{m:\Delta_k\upbra{m} > 0} \left( \frac{\lambda\upbra{m}}{\lambda\upbra{1}} \mu_1\upbra{M} - \mu_1\upbra{M} \right) \frac{1}{\KL(v_1\upbra{m}, {v}_2\upbra{m}+\zeta\upbra{m}-\epsilon)}
\]

\subsection{Proof of Theorem~\ref{thm:regret-upper-bound}}
\label{subapp:regret-upper-bound}

We first prove a problem dependent regret upper bound as follows and then convert this bound to the problem independent regret upper bound presented in Theorem~\ref{thm:regret-upper-bound}.

Denote \(\Delta_k\upbra{M} = \mu_1\upbra{M} - \mu_k\upbra{M}\), and, especially, \(\Delta_1\upbra{M}=0\).

\begin{theorem}[Problem-Dependent Regret Upper Bound]\label{thm:probdep-regret-upper-bound}
    For any \(\varepsilon>0\). Algorithm~\ref{alg:elimination-for-rm}'s regret is upper bounded as follows,
    \begin{equation}
        \label{eq:prob-dep-regret-upper-bound}
        \begin{split}
            &\E[R(\Lambda)]
            \le \sum_{k:\Delta_k\upbra{M} > \varepsilon}
            \left(
            \left( \frac{\lambda\upbra{M}}{\lambda\upbra{1}} \mu_1\upbra{M} - \mu_k\upbra{M} \right)
            \left( \frac{16}{(\Delta_k\upbra{M})^2} \log\frac{\Lambda(\Delta_k\upbra{M})^2}{16\lambda\upbra{M}} + \frac{48}{(\Delta_k\upbra{M})^2} + 1 \right)
            + \frac{64}{\Delta_k\upbra{M}}
            \right)\\
            & + \sum_{k:\Delta_k\upbra{M} \le \varepsilon} \left(
            \left( \frac{\lambda\upbra{M}}{\lambda\upbra{1}}\mu_1\upbra{M} - \mu_k\upbra{M} \right)
            \left( \frac{16}{\varepsilon^2} \log\left( \frac{\Lambda \varepsilon^2}{16\lambda\upbra{M}}  \right) + \frac{32}{3\varepsilon^2} + 1\right) + \frac{64}{\varepsilon}
            \right)
            + \max_{k:\Delta_k\upbra{M} \le \varepsilon} \frac{\Lambda}{\lambda\upbra{1}} \Delta_k\upbra{M}.
        \end{split}
    \end{equation}
\end{theorem}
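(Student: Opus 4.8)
The plan is to bound the regret through the fidelity-resolved decomposition in~\eqref{eq:problem-dependent-regret-decomposition}, separating the cost of exploring at the highest fidelity \(M\) during the elimination phases from the cost of exploiting at the lowest fidelity \(1\) after the loop terminates. Recall from that decomposition that a single pull of arm \(k\) at fidelity \(M\) contributes \(\frac{\lambda\upbra{M}}{\lambda\upbra{1}}\mu_1\upbra{M}-\mu_k\upbra{M}\) to the regret, while a pull at fidelity \(1\) contributes exactly \(\Delta_k\upbra{M}\); these two coefficients are the sources of the two kinds of terms in~\eqref{eq:prob-dep-regret-upper-bound}. The central object is a good event \(\mathcal{G}\) on which, for every phase \(p\) and every surviving arm \(k\in\mathcal{C}_p\), the empirical mean satisfies \(\abs{\hat\mu_{k,p}\upbra{M}-\mu_k\upbra{M}}\le 2^{-p}\). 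Since the algorithm pulls each surviving arm \(T_k\upbra{M}=\ceil*{2^{2p}\log(\Lambda/(2^{2p}\lambda\upbra{M}))}\) times in phase \(p\), Hoeffding's inequality makes the per-phase, per-arm failure probability at most \(2(2^{2p}\lambda\upbra{M}/\Lambda)^2\), which is exactly calibrated so that the confidence radius \(2^{-p}\) matches the elimination threshold \(2^{-p+1}\) of Algorithm~\ref{alg:elimination-for-rm}.

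First I would establish correctness on \(\mathcal{G}\). Using \(\abs{\hat\mu_{k,p}\upbra{M}-\mu_k\upbra{M}}\le 2^{-p}\) together with the elimination rule, the optimal arm \(1\) survives every phase, since its empirical mean plus \(2^{-p+1}\) always dominates the empirical maximum. Symmetrically, a suboptimal arm \(k\) is removed as soon as \(2^{-p}\le \Delta_k\upbra{M}/4\); hence on \(\mathcal{G}\) arm \(k\) is pulled at fidelity \(M\) only in phases \(0,1,\dots,p_k\) where \(p_k=\ceil*{\log_2(4/\Delta_k\upbra{M})}\), so \(2^{2p_k}\) is of order \(16/(\Delta_k\upbra{M})^2\) and the logarithmic argument at phase \(p_k\) becomes \(\log(\Lambda(\Delta_k\upbra{M})^2/(16\lambda\upbra{M}))\), matching the statement.

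Next I would add up the exploration cost on \(\mathcal{G}\). For an arm with \(\Delta_k\upbra{M}>\varepsilon\), summing \(T_k\upbra{M}\) over \(p\le p_k\) is a geometric series dominated by its last term; the factor \(\sum_{p\le p_k}2^{2p}\le \tfrac43 2^{2p_k}\) and the ceiling corrections \(\sum_{p\le p_k}1=p_k+1\) produce the leading \(\frac{16}{(\Delta_k\upbra{M})^2}\log(\cdot)\) term together with the additive \(\frac{48}{(\Delta_k\upbra{M})^2}+1\) slack, each weighted by the fidelity-\(M\) coefficient \(\frac{\lambda\upbra{M}}{\lambda\upbra{1}}\mu_1\upbra{M}-\mu_k\upbra{M}\). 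For an arm with \(\Delta_k\upbra{M}\le\varepsilon\) the same bound holds with \(\Delta_k\upbra{M}\) replaced by \(\varepsilon\), since the loop stops at \(p_{\max}=\log_2(2/\varepsilon)\) (giving the \(\frac{16}{\varepsilon^2}\log(\cdot)\) terms), and in addition such an arm may persist into the exploitation phase; on \(\mathcal{G}\) every survivor has gap at most of order \(\varepsilon\), so the at most \(\Lambda/\lambda\upbra{1}\) exploitation rounds at fidelity \(1\) contribute at most \(\max_{k:\Delta_k\upbra{M}\le\varepsilon}\frac{\Lambda}{\lambda\upbra{1}}\Delta_k\upbra{M}\), the final term of~\eqref{eq:prob-dep-regret-upper-bound}.

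The main obstacle will be the contribution of the complement \(\mathcal{G}^c\), which must be converted into the clean additive \(\frac{64}{\Delta_k\upbra{M}}\) and \(\frac{64}{\varepsilon}\) terms rather than a crude \(\P(\mathcal{G}^c)\times(\text{max regret})\) estimate. The plan here is to bound the expected number of \emph{extra} fidelity-\(M\) pulls of arm \(k\) by \(\sum_{p>p_k}T_k\upbra{M}\,\P(k\in\mathcal{C}_p)\), using that survival past phase \(p_k\) forces a concentration failure whose probability decays like \((2^{2p}\lambda\upbra{M}/\Lambda)^2\); the delicate point is that the per-phase pull count \(T_k\upbra{M}\sim 2^{2p}\log(\Lambda/(2^{2p}\lambda\upbra{M}))\) grows while the failure probability shrinks, so I must verify that the product is summable and that the budget normalization \(\Lambda\) cancels to leave a quantity of order \(1/\Delta_k\upbra{M}\). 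I also expect to need a minor feasibility check that \(\Lambda\) is large enough for the arguments \(\log(\Lambda/(2^{2p}\lambda\upbra{M}))\) to stay positive across the relevant phases. Combining the good-event and bad-event accounts and regrouping the arms according to whether \(\Delta_k\upbra{M}>\varepsilon\) then yields~\eqref{eq:prob-dep-regret-upper-bound}.
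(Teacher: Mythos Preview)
Your overall structure---good event on which elimination proceeds correctly, then a separate accounting of the bad event---is exactly the paper's, but there are two substantive mismatches, one minor and one that is a genuine gap.

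The minor one: you treat the pull schedule as incremental and propose to sum \(T_k\upbra{M}\) over phases \(p\le p_k\) via a geometric series. In Algorithm~\ref{alg:elimination-for-rm} the quantity \(T_k\upbra{M}=\ceil*{2^{2p}\log(\Lambda/(2^{2p}\lambda\upbra{M}))}\) is the \emph{cumulative} sample count after phase \(p\), so the total number of fidelity-\(M\) pulls of arm \(k\) on the good event is simply that expression evaluated at the last phase before elimination, yielding \(\frac{16}{(\Delta_k\upbra{M})^2}\log(\cdot)+1\) directly; no geometric sum is needed, and this is where the ``\(+1\)'' in the theorem comes from rather than from ceiling corrections accumulated over phases.

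The genuine gap is in your treatment of \(\mathcal{G}^c\). You propose to bound the extra fidelity-\(M\) pulls of each suboptimal arm \(k\) by \(\sum_{p>p_k}T_k\upbra{M}\,\P(k\in\mathcal{C}_p)\). This corresponds only to the paper's Case~2.1 (a suboptimal arm survives too long while arm \(1\) is still present). You do not address Case~2.2: the optimal arm \(1\) itself being eliminated at some phase \(p_1\). When that happens, the entire exploitation phase---up to \(\Lambda/\lambda\upbra{1}\) pulls at fidelity \(1\)---is spent on a suboptimal arm, and your refined per-phase sum over fidelity-\(M\) pulls does not capture this linear-in-\(\Lambda\) contribution at all. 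The paper handles both cases with a \emph{cruder} device than your proposed sum: it multiplies the failure probability at the critical phase, which is of order \(2^{2p}\lambda\upbra{M}/\Lambda\), by the worst-case total regret \((\Lambda/\lambda\upbra{M})\bigl(\tfrac{\lambda\upbra{M}}{\lambda\upbra{1}}\mu_1\upbra{M}-\mu_k\upbra{M}\bigr)\), so that the \(\Lambda\)'s cancel and one is left with \(O(2^{2p_k})=O((\Delta_k\upbra{M})^{-2})\) for Case~2.1 and an analogous sum over \(p_1\) for Case~2.2. The latter, after interchanging the sums over \(p_1\) and over surviving arms, is what produces the \(\frac{64}{\Delta_k\upbra{M}}\) and \(\frac{64}{\varepsilon}\) terms as well as the \(\frac{32}{3(\Delta_k\upbra{M})^2}\) and \(\frac{32}{3\varepsilon^2}\) pieces. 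Your delicate summability argument is thus both harder than necessary and incomplete; replace it with the failure-probability-times-maximal-regret bound, and add the separate analysis of the event that arm \(1\) is eliminated.
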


\begin{proof}[Proof of Theorem~\ref{thm:probdep-regret-upper-bound}]
    By Hoeffding's inequality, we have
    \[
        \begin{split}
            \P\left( \hat{\mu}_{k,p}\upbra{M} > \mu_k\upbra{M} + 2^{-p} \right) &\le \exp\left( - \frac{2^{-2p}}{2\times \frac{1}{2\times 2^{2p} \log (\Lambda/2^{2p}\lambda\upbra{M})}} \right) = \frac{2^{2p}\lambda\upbra{M}}{\Lambda}\\
            \P\left( \hat{\mu}_{k,p}\upbra{M} < \mu_k\upbra{M} - 2^{-p} \right) &\le \exp\left( - \frac{2^{-2p}}{2\times \frac{1}{2\times 2^{2p} \log (\Lambda/2^{2p}\lambda\upbra{M})}} \right) = \frac{2^{2p}\lambda\upbra{M}}{\Lambda}.
        \end{split}
    \]
    That is, the empirical mean \(\hat{\mu}_k\upbra{M}\) is within the confidence interval \((\mu_k\upbra{M} - 2^{-p}, \mu_k\upbra{M} + 2^{-p})\) with high probability.

    Choose any \(\varepsilon > \frac{e}{\Lambda}\). Let \(\mathcal{K}'=\{k\in\mathcal{K}|\Delta_k\upbra{M} > \varepsilon\}\).
    Denote \(p_k \coloneqq \min\{p: 2^{-p} <\frac{\Delta_k\upbra{M}}{2}\}\). From \(p_k\)'s definition, we have the following inequality \begin{equation}\label{eq:phase-gap-ineq}
        2^{p_k} < \frac{4}{\Delta_k\upbra{M}} < 2^{p_k+1}.
    \end{equation}
    We also note that the cost of pulling a suboptimal arm \(k\in\mathcal{K}'\) at highest fidelity \(M\) is upper bounded as \(\frac{\lambda\upbra{M}}{\lambda\upbra{1}} \mu_1\upbra{M} - \mu_k\upbra{M}\), where the factor \(\frac{\lambda\upbra{M}}{\lambda\upbra{1}}\) is because the budget paying to pull an arm at fidelity \(M\) can be used to the the arm at fidelity \(1\) for fractional times.

    The rest of this proof consists of two steps. In the first step, we assume that all empirical means are in their corresponding confidence intervals at each phases, and show the algorithm can \emph{properly} eliminate all suboptimal arms in \(\mathcal{K}'\)---the arm is eliminated in or before the phase \(p_k\).
    In the second step, we upper bound the regret if there are any empirical estimates lying outside their corresponding confidence intervals.

    \textbf{Step 1.}
    If all arms' empirical means lie in confidence intervals. That is, any suboptimal arm \(k\) is eliminated in or before the phase \(p_k\). Because, if \(k\in\mathcal{C}_{p_k}\), we have \[\begin{split}
            \hat{\mu}_k\upbra{M} &\le \mu_k\upbra{M} + 2^{-p_k} = \mu_1\upbra{M} - \Delta_k\upbra{M} + 2^{-p_k} \\
            &\overset{(a)}\le \mu_1\upbra{M} - 4 \times 2^{-p_k-1} + 2^{-p_k} = \mu_1\upbra{M} - 2^{-p_k} < \hat{\mu}_1\upbra{M} \le \max_{k\in\mathcal{C}_{p_k}} \hat{\mu}_k\upbra{M},
        \end{split}
    \]
    where (a) is due to \eqref{eq:phase-gap-ineq}.
    That is, if this arm \(k\) haven't been eliminated before phase \(p_k\), it must be eliminated in this phase.
    Therefore, the total pulling times of this arm \(k\) at highest fidelity \(M\) is upper bounded as follows, \begin{equation}
        \label{eq:pulling-times-upper-bound}
        T_k\upbra{M} \le \ceil*{2^{2p_k} \log \frac{\Lambda}{2^{2p_k}\lambda\upbra{M}}} \overset{(a)}\le \frac{16}{(\Delta_k\upbra{M})^2} \log \left( \frac{(\Delta_k\upbra{M})^2 \Lambda}{16\lambda\upbra{M}} \right) + 1,
    \end{equation}
    where (a) is due to \eqref{eq:phase-gap-ineq}.

    We then handle the number of times of pulling arms \(k\) with \(\Delta_k\upbra{M} \le \varepsilon\) in fidelity \(M\).
    Although these arms' total pulling times, eliminated in or before phase \(p_k,\) are also upper bounded by \eqref{eq:pulling-times-upper-bound}, their corresponding phases \(p_k\) is greater than \(\log_2\frac{2}{\varepsilon}\) and, therefore, cannot be reached. So, these arms' (including the optimal arm \(1\)'s) total pulling times in fidelity \(M\) is upper bounded by
    \[
        \frac{16}{\varepsilon^2} \log \left( \frac{\Lambda\varepsilon^2}{16\lambda\upbra{M}} \right) + 1.
    \]

    Therefore, the cost due to pulling arms at fidelity \(M\) is upper bounded as follows,
    \begin{equation}
        \label{eq:regret-decomposition-1}
        \sum_{k\in \mathcal{K}}\left( \frac{\lambda\upbra{M}}{\lambda\upbra{1}} \mu_1\upbra{M} - \mu_k\upbra{M} \right)\left( \frac{16}{\left(\max\left\{\varepsilon,\Delta_k\upbra{M} \right\}\right)^2} \log \left( \frac{\left(\max\left\{\varepsilon,\Delta_k\upbra{M} \right\}\right)^2 \Lambda}{16\lambda\upbra{M}} \right) + 1 \right).
    \end{equation}

    After the elimination process, arms with \(\Delta_k\upbra{M} \le \varepsilon\) may remain in the candidate arm set \(\mathcal{C}_p\) and are exploited in turn at fidelity \(m=1\). As some of them are not the optimal arm, this additional cost can be upper bounded as follows,
    \begin{equation}
        \label{eq:regret-decomposition-2}
        \max_{k:\Delta_k\upbra{M} \le \varepsilon} \frac{\Lambda}{\lambda\upbra{1}} \Delta_k\upbra{M}.
    \end{equation}

    \textbf{Step 2.} There are two cases that some arms are eliminated improperly: for an suboptimal arm \(k\), either \begin{enumerate}
        \item[2.1] The suboptimal arm \(k\) is \emph{not} eliminated in (or before) the phase \(p_k\), and the optimal arm \(1\) is in \(\mathcal{C}_{p_k}\) in phase \(p_k\); or
        \item[2.2] The suboptimal arm \(k\) is eliminated in (or before) the phase \(p_k\), and the optimal arm \(1\) is \emph{not} in \(\mathcal{C}_{p_k}\) in phase \(p_k\).
    \end{enumerate}

    Case 2.1's happening means that arm \(k\) is not eliminated in or before phase \(p_k\), which can only happen when the arm's empirical mean \(\hat{\mu}_k\upbra{M}\) lies outside its corresponding confidence interval. This event in or before phase \(p_k\) is with a probability no greater than \(2\times\frac{2^{2p_k}\lambda\upbra{M}}{\Lambda}\). Since this event may happen to any suboptimal arm \(k\in \mathcal{K}'\), then the regret of this case is upper bounded by
    \begin{equation}
        \label{eq:regret-decomposition-3}
        \begin{split}
            \sum_{k\in \mathcal{K}'} \frac{2^{2p_k+1}\lambda\upbra{M}}{\Lambda} \frac{\Lambda}{\lambda\upbra{M}} \left( \frac{\lambda\upbra{M}}{\lambda\upbra{1}} \mu_1\upbra{M} - \mu_k\upbra{M} \right)
            & = \sum_{k\in \mathcal{K}'} 2^{2p_k+1}\left( \frac{\lambda\upbra{M}}{\lambda\upbra{1}} \mu_1\upbra{M} - \mu_k\upbra{M} \right)\\
            & \overset{(a)}\le \sum_{k\in \mathcal{K}'}  \frac{32}{(\Delta_k\upbra{M})^2} \left( \frac{\lambda\upbra{M}}{\lambda\upbra{1}} \mu_1\upbra{M} - \mu_k\upbra{M} \right),
        \end{split}
    \end{equation}
    where (a) is due to \eqref{eq:phase-gap-ineq}.

    If Case 2.2 happens, the optimal arm \(1\) is not in the candidate arm set \(\mathcal{C}_{p_k}\) in phase \(p_k\). We denote \(p_1\) as the phase that the optimal arm \(1\) is eliminated,
    and it is at this phase that some arms' empirical means lie outside their confidence interval so that this mis-elimination happens.
    The probability of this event is upper bounded by \(2\times\frac{2^{2p_1}\lambda\upbra{M}}{\Lambda}\).
    We assume that arms \(k\) with \(p_k < p_1\) are eliminated in or before phase \(p_i\) properly; otherwise, the regret is counted in Case 2.1. Therefore, the optimal arm \(1\) eliminated in phase \(p_1\) should be eliminated by an arm \(k\) with \(p_k \ge p_1\).
    Consequently, the maximal per time slot regret in Case 2.2 is among arms with \(p_k \ge p_1\).
    Denote \(p_\varepsilon \coloneqq \min\{p|2^{-p}<\frac{\varepsilon}{2}\}\). We bound the cost of Case 2.2 as follows,
    \begin{equation}
        \label{eq:regret-decomposition-4}
        \begin{split}
            &\quad \sum_{p_1=0}^{\max_{i\in \mathcal{K}'} p_i} \sum_{k> 1: p_k \ge p_1} \frac{2^{2p_1+1}\lambda\upbra{M}}{\Lambda} \frac{\Lambda}{\lambda\upbra{M}} \cdot \max_{k'> 1: p_{k'} \ge p_1} \left( \frac{\lambda\upbra{M}}{\lambda\upbra{1}} \mu_1\upbra{M} - \mu_{k'}\upbra{M} \right)\\
            &=\sum_{p_1=0}^{\max_{i\in \mathcal{K}'} p_i} \sum_{k> 1: p_k \ge p_1} 2^{2p_1+1} \cdot \left( \frac{\lambda\upbra{M}}{\lambda\upbra{1}}\mu_1\upbra{M} - \mu_1\upbra{M} + \max_{k'> 1: p_{k'}\ge p_1} \Delta_{k'}\upbra{M} \right)\\
            &\overset{(a)}\le \sum_{p_1=0}^{\max_{i\in \mathcal{K}'} p_i} \sum_{k> 1: p_k \ge p_1} 2^{2p_1+1} \cdot \left( \frac{\lambda\upbra{M}}{\lambda\upbra{1}}\mu_1\upbra{M} - \mu_1\upbra{M} + 4 \times 2^{-p_1} \right)\\
            &\overset{(b)}\le \sum_{k > 1}\sum_{p_1=0}^{\min\{p_k, p_\varepsilon\}} 2^{2p_1+1} \cdot \left( \frac{\lambda\upbra{M}}{\lambda\upbra{1}}\mu_1\upbra{M} - \mu_1\upbra{M} + 4 \times 2^{-p_1} \right) \\
            &\le \sum_{k>1} \left(
            \left( \frac{\lambda\upbra{M}}{\lambda\upbra{1}}\mu_1\upbra{M} - \mu_1\upbra{M} \right) \sum_{p_1=0}^{\min\{p_k, p_\varepsilon\}} 2^{2p_1 + 1} + \sum_{p_1=0}^{\min\{p_k, p_\varepsilon\}} 2^{p_1 + 3}
            \right)\\
            &\le \sum_{k>1} \left(
            \left( \frac{\lambda\upbra{M}}{\lambda\upbra{1}}\mu_1\upbra{M} - \mu_1\upbra{M} \right)
            \frac{2^{2\min\{p_k, p_\varepsilon\}+1}}{3} + 2^{\min\{p_k, p_\varepsilon\} + 4}
            \right)\\
            &\le \sum_{k\in\mathcal{K}'} \left(
            \left( \frac{\lambda\upbra{M}}{\lambda\upbra{1}}\mu_1\upbra{M} - \mu_1\upbra{M} \right)
            \frac{2^{2p_k+1}}{3} + 2^{p_k + 4}
            \right) + \sum_{k>1, k\not\in \mathcal{K}'} \left(
            \left( \frac{\lambda\upbra{M}}{\lambda\upbra{1}}\mu_1\upbra{M} - \mu_1\upbra{M} \right)
            \frac{2^{2p_\varepsilon+1}}{3} + 2^{p_\varepsilon + 4}
            \right)\\
            & \overset{(c)}\le \sum_{k\in\mathcal{K}'} \left(
            \left( \frac{\lambda\upbra{M}}{\lambda\upbra{1}}\mu_1\upbra{M} - \mu_1\upbra{M} \right)
            \frac{32}{3(\Delta_k\upbra{M})^2} + \frac{64}{\Delta_k\upbra{M}}
            \right) + \sum_{k>1, k\not\in \mathcal{K}'} \left(
            \left( \frac{\lambda\upbra{M}}{\lambda\upbra{1}}\mu_1\upbra{M} - \mu_1\upbra{M} \right)
            \frac{32}{3\varepsilon^2} + \frac{64}{\varepsilon}
            \right)\\
            & \le \sum_{k\in\mathcal{K}'} \left(
            \left( \frac{\lambda\upbra{M}}{\lambda\upbra{1}}\mu_1\upbra{M} - \mu_k\upbra{M} \right)
            \frac{32}{3(\Delta_k\upbra{M})^2} + \frac{64}{\Delta_k\upbra{M}}
            \right) + \sum_{k>1, k\not\in \mathcal{K}'} \left(
            \left( \frac{\lambda\upbra{M}}{\lambda\upbra{1}}\mu_1\upbra{M} - \mu_k\upbra{M} \right)
            \frac{32}{3\varepsilon^2} + \frac{64}{\varepsilon}
            \right)
        \end{split}
    \end{equation}
    where (a) and (c) are due to \eqref{eq:phase-gap-ineq}, and (b) is due to the property of swapping two summations.

    Summing up the costs in \eqref{eq:regret-decomposition-1}, \eqref{eq:regret-decomposition-2}, \eqref{eq:regret-decomposition-3}, and \eqref{eq:regret-decomposition-4} concludes the proof.
\end{proof}

Next, we derive the problem-independent regret bound from \eqref{eq:prob-dep-regret-upper-bound}.
When \(\Lambda\) is large, the \(O(\log \Lambda)\) and \(O(\Lambda)\) terms dominate other terms in \eqref{eq:prob-dep-regret-upper-bound}.
For any given \(\varepsilon\), if \(\Lambda\) is large enough, we always have \(\varepsilon > 4\sqrt{\frac{\lambda\upbra{M} e}{\Lambda}}\), which, with some calculus, guarantees that \(\frac{16}{(\Delta_k\upbra{M})^2} \log\frac{\Lambda(\Delta_k\upbra{M})^2}{16\lambda\upbra{M}} < \frac{16}{\varepsilon^2} \log\frac{\Lambda\varepsilon^2}{16\lambda\upbra{M}}\) for all \(\Delta_k\upbra{M} > \varepsilon\).
Therefore, we can scale all logarithmic arm pulling times as \(\frac{16}{\varepsilon^2} \log\frac{\Lambda\varepsilon^2}{16\lambda\upbra{M}}\),
and upper bound the pulling cost \(\left( \frac{\lambda\upbra{M}}{\lambda\upbra{1}}\mu_1\upbra{M} - \mu_k\upbra{M} \right)\) by \(\frac{\lambda\upbra{M}}{\lambda\upbra{1}}\mu_1\upbra{M}\). We derive the problem-independent regret upper bound as follows,
\[
    \begin{split}
        \mathbb{E}\left[ R(\Lambda) \right] & \le
        \frac{K\mu_1\upbra{M}\cdot \lambda\upbra{M}}{\lambda\upbra{1}}\frac{16}{\varepsilon^2} \log\frac{\Lambda\varepsilon^2}{16\lambda\upbra{M}}
        + \frac{\Lambda}{\lambda\upbra{1}}\varepsilon\\
        &\le
        \frac{K\mu_1\upbra{M}\cdot \lambda\upbra{M}}{\lambda\upbra{1}}\frac{16}{\varepsilon^2} \log\frac{\Lambda}{16\lambda\upbra{M}}
        + \frac{\Lambda}{\lambda\upbra{1}}\varepsilon\\
        &\overset{(a)}\le 2 \left( \frac{16K\mu_1\upbra{M}\lambda\upbra{M}}{\lambda\upbra{1}} \log \frac{\Lambda}{16\lambda\upbra{M}} \right)^{\frac{1}{3}} \left( \frac{\Lambda}{\lambda\upbra{1}} \right)^{\frac{2}{3}},
    \end{split}
\]
where the equation of (a) holds when \(\varepsilon = \left( \frac{K\mu_1\upbra{M}\log(\Lambda/16\lambda\upbra{M})}{(\Lambda/16\lambda\upbra{M})} \right)^\frac{1}{3}.\)

\end{document}